\pdfoutput=1
\documentclass[%
reprint,
superscriptaddress,
 amsmath,amssymb,
 aps,
]{revtex4-2}

\usepackage[french, english]{babel}
\newcommand{\selectlanguageorig}{}
\let\selectlanguageorig\selectlanguage
\renewcommand{\selectlanguage}[1]{%
  \def\tmp{#1}\def\tmpfr{fr}%
  \ifx\tmp\tmpfr\selectlanguageorig{french}%
  \else\selectlanguageorig{#1}\fi
}

\usepackage{graphicx}
\usepackage{dcolumn}
\usepackage{bm}
\usepackage{array}
\newcolumntype{P}[1]{>{\centering\arraybackslash}p{#1}}

\usepackage{amsthm}
\newtheorem{theorem}{Theorem}
\newtheorem{lemma}{Lemma}
\newtheorem{corollary}{Corollary}
\newtheorem{definition}{Definition}
\newtheorem{proposition}{Proposition}

\newtheorem{remark}{Remark}

\usepackage[normalem]{ulem}

\usepackage[linesnumbered,ruled,vlined]{algorithm2e}
\usepackage{xcolor}
\usepackage{longtable}
\usepackage{booktabs}
\usepackage{acronym}
\newacro{QP}{quadratic programming}
\newacro{PGD}{projected gradient descent}
\newacro{WFE}{Walsh-Fourier expansion}
\newacro{xWFE}{extended Walsh-Fourier expansion}
\newacro{BDD}{binary decision diagram}
\newacro{xBDD}{extended binary decision diagram}
\newacro{SAT}{Boolean satisfiability}
\newacro{SMT}{Satisfiability modulo theories}
\newacro{CDCL}{conflict-driven clause learning}
\newacro{CLS}{continuous local search}
\newacro{DLS}{discrete local search}
\newacro{COP}{circuit-output probability}
\newacro{LRA}{linear real arithmetic}

\DeclareMathOperator*{\argmin}{arg\,min}

\begin{document}

\preprint{APS/123-QED}

\title{Continuous Optimization for Satisfiability Modulo Theories on Linear Real Arithmetic}

\author{Yunuo Cen}
\email{cenyunuo@u.nus.edu}
\affiliation{Department of Electrical and Computer Engineering,
National University of Singapore, Singapore}

\author{Daniel Ebler}
\email{eblerd@hku.hk}
\affiliation{School of Computing and Data Science,
The University of Hong Kong, Hong Kong, China}

\author{Xuanyao Fong}
\email{kelvin.xy.fong@nus.edu.sg}
\affiliation{Department of Electrical and Computer Engineering,
National University of Singapore, Singapore}

\date{\today}

\begin{abstract}

Efficient solutions for satisfiability modulo theories (SMT) are integral in industrial applications such as hardware verification and design automation. 
Existing approaches are predominantly based on conflict-driven clause learning, which is structurally difficult to parallelize and therefore scales poorly.
In this work, we introduce \textsc{FourierSMT} as a scalable and highly parallelizable continuous-variable optimization framework for SMT. 
We generalize the Walsh-Fourier expansion (WFE), called extended WFE (xWFE), from the Boolean domain to a mixed Boolean-real domain, which allows the use of gradient methods for SMT. 
This addresses the challenge of finding satisfying variable assignments to high-arity constraints by local updates of discrete variables. 
To reduce the evaluation complexity of xWFE, we present the extended binary decision diagram (xBDD) and map the constraints from xWFE to xBDDs. 
We then show that sampling the circuit-output probability (COP) of xBDDs under randomized rounding is equivalent to the expectation value of the xWFEs. 
This allows for efficient computation of the constraints. 
We show that the reduced problem is guaranteed to converge and preserves satisfiability, ensuring the soundness of the solutions. 
The framework is benchmarked for large-scale scheduling and placement problems with up to 10,000 variables and 700,000 constraints, achieving 8-fold speedups compared to state-of-the-art SMT solvers. 
These results pave the way for GPU-based optimization of SMTs with continuous systems.
\end{abstract}

\maketitle

\section*{Introduction}

\ac{SMT} is a class of decision problems that ask whether a variable assignment exists that satisfies an abstract set of constraints. 
The constraints are referred to as first-order logic and combine Boolean logic with, for example, real-valued inequalities~\cite{biere2009handbook}. 
SMTs extend the problem of \ac{SAT} with greater expressiveness~\cite{barrett2018satisfiability}, which is essential when addressing complex scientific and engineering challenges, particularly in operations research~\cite{ansotegui2011satisfiability,bofill2020smt}, formal verification~\cite{lahiri2008back,bjorner2012program}, and artificial intelligence~\cite{ye2023satlm,spallitta2024enhancing}, among others. 
\ac{SMT} problems, such as \ac{SAT}, are generally NP-complete~\cite{cook1971complexity}, making it highly challenging to scale solutions to real-life applications.

Practical SMT solvers are mainly based on \ac{CDCL}, which is an exact method that has first achieved engineering success in solving \ac{SAT} problems ~\cite{een2003extensible,audemard2014lazy,liang2018machine,fleury2020cadical}. 
\ac{CDCL} combines Boolean constraint propagation with non-chronological backtracking, efficiently pruning the search space~\cite{marques2002grasp}. 
To address the more general structure of the constraints in SMT, the solvers employ an extended framework known as CDCL($\mathcal{T}$), which integrates \ac{CDCL} with theory solvers~\cite{barrett2018satisfiability}. 
In practice, this means that the solvers explore possible combinations of truth values for propositional constraints while simultaneously checking that the corresponding numerical constraints are mathematically consistent. Prominent solver instances include \textsc{Z3}~\cite{de2008z3}, \textsc{CVC5}~\cite{barbosa2022cvc5}, \textsc{Yices2}~\cite{dutertre2014yices}, \textsc{OpenSMT2}~\cite{bruttomesso2010opensmt}, \textsc{SMT-RAT}~\cite{corzilius2012smt} and \textsc{MathSAT5}~\cite{cimatti2013mathsat5}. 
However, \ac{CDCL}($\mathcal{T}$)-based solvers often suffer from long runtimes and show poor parallelizability on multi-core architectures. This leads to strongly limited scalability. 
In addition, the performance of the \ac{CDCL}($\mathcal{T}$) methods is highly problem-specific, and local search techniques were shown to sometimes outperform \ac{CDCL}($\mathcal{T}$) on benchmarks such as SMT-LIB with low cutoff times~\cite{li2023local}, as well as concrete applications~\cite{li2024smt}.

Alternative approaches based on \ac{CLS} have been proposed for SAT solving, such as \textsc{FourierSAT}~\cite{kyrillidis2020fouriersat}, \textsc{GradSAT}~\cite{kyrillidis2021continuous}, and \textsc{FastFourierSAT}~\cite{cen2025massively}, which transform Boolean constraints into continuous multilinear polynomials~\cite{o2014analysis} and refine solutions with gradient descent on the objective function. 
This allows for strong parallelization on multi-threading architectures such as GPUs and FPGAs, dramatically reducing computation times and allowing for scaling to large and complex problem instances~\cite{cen2025massively}. 
However, the core theory in \ac{CLS}, spectrum analysis, is inherently defined over Boolean domains and does not extend to real-valued variables. 
Consequently, \ac{CLS} has been incompatible with SMT.

\begin{figure*}[t]
    \centering
    \includegraphics[width=\linewidth]{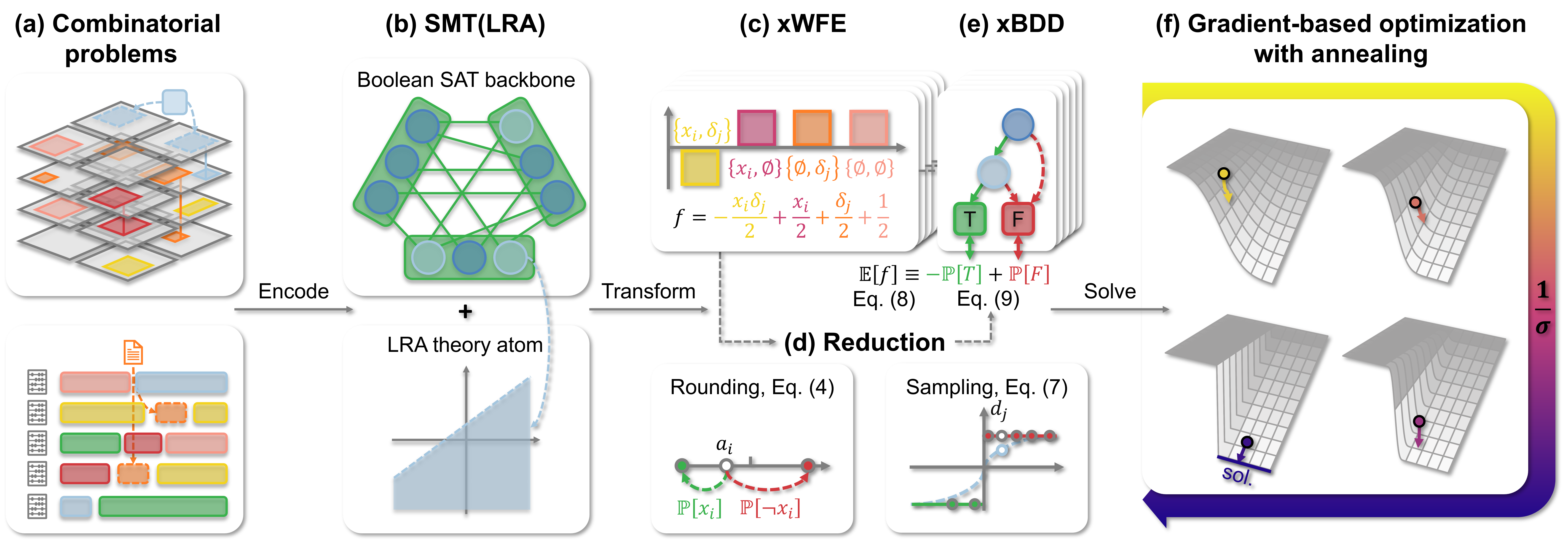}
    \caption{\textbf{The overall flow of \textsc{FourierSMT}.}
        \textbf{(a)} A variety of combinatorial problems target to optimize propositional and real variables.
        \textbf{(b)} These problems can be encoded using an SMT(LRA) formula, which is a SAT formula with additional LRA theory.
        \textbf{(c)} A constraint in the SMT(LRA) formula can be transformed into a polynomial called xWFE. The amplitudes show the coefficients of the xWFE, described in Eq.~\eqref{eq:general-WFE}.
        \textbf{(d)} The expectation of the xWFE can be obtained by randomized rounding on the Boolean variable and the Gaussian sampling on the real variables.  
        \textbf{(e)} However, the xWFE has an exponential number of terms. 
        We show that the circuit output probability of xBDDs is equivalent to the expectation of the xWFE. 
        This avoids state enumeration and thereby preserves computation efficiency. 
        \textbf{(f)} We recover the minima of the original function by annealing of the sampling parameters, gradually restoring the original piecewise objective. 
    }
    \label{fig:flow}
\end{figure*}

In this work, we propose a scalable and highly parallelizable \ac{CLS}-based solver approach for SMT, called \textsc{FourierSMT}, with a focus on \ac{LRA} constraints. 
To enable \ac{CLS} for \ac{SMT}, we introduce a generalization of the \ac{WFE}~\cite{o2014analysis} from the Boolean to the mixed Boolean-real domain to explicitly incorporate \ac{LRA} constraints (see Fig.~\ref{fig:flow}b). 
This allows us to encode the SMT constraints in piecewise multilinear functions, called \ac{xWFE} (see Fig.~\ref{fig:flow}c). 
To identify satisfying variable assignments with gradient-based methods, we construct a smooth surrogate function through \textit{randomized rounding} of Boolean variables and \textit{Gaussian sampling} of real variables (see Fig.~\ref{fig:flow}d). 
The number of terms in an \ac{xWFE} grows exponentially with the number of variables in the corresponding constraint. 
To avoid this overhead, we extend the data structure of \ac{xBDD} to efficiently encode Boolean-continuous constraints (see Fig.~\ref{fig:flow}e). 
We then show equivalence between the expectation value of a \ac{xWFE} and the \ac{COP} of the \ac{xBDD}, which allows one to optimize the \ac{COP} instead. 
This leads to a drastic reduction in complexity. 
The optimality of the solution is shown to be preserved if we optimize the smooth surrogate objective while gradually reducing the smoothness of the original piecewise multilinear function (see Fig.~\ref{fig:flow}f).  
This allows for efficient gradient-based optimization under annealing of the optimization landscape. 
We benchmark \textsc{FouerierSMT} on scheduling and placement problems with up to 10,000 variables and 700,000 constraints and show substantial performance gains of up to 8$\times$ shorter runtimes. 
The results highlight the practical applicability of \textsc{FourierSMT} for complex real-world computational tasks.

\section*{Results}

\subsection*{Satisfiability Modulo Theories} 
\ac{SMT} defines the class of problems that pose the question of whether a set of Boolean constraints is satisfiable with respect to a theory $\mathcal{T}$. 
They were first introduced in the context of extending purely Boolean constraints in SAT to specific mathematical theories~\cite{barrett2002checking,ganzinger2004dpll,nieuwenhuis2006sat}, such as \textsc{Reals}, \textsc{BitVectors}~\cite{jha2009beaver,wright2021modular}, and \textsc{Arrays}~\cite{brummayer2009boolector, ghilardi2010backward}. For \textsc{Reals}, this allows us to combine propositional variables with inequalities on real-valued variables, connected by Boolean connectivities such as ${\land, \lor, \oplus, \to, =}$.

Mathematically, an \ac{SMT} formula $F$ is defined as the conjunction of constraints $c$ from a finite set of Boolean constraints $C$, as
\begin{equation*}
    F = \bigwedge_{c\in{}C}c \ .
\end{equation*}
Solving $F$ requires an assignment of all variables that satisfies all constraints $c\in C$. For example, a typical SMT formula is $F= (x_1\oplus x_2) \wedge (x_1 \vee (y \le 0))$, where $x_1,x_2$ are Boolean and $y\in\mathbb{R}$.

\ac{SMT} solvers are commonly derived from SAT solvers and follow the approach of \ac{CDCL}~\cite{een2003extensible,audemard2014lazy,liang2018machine,fleury2020cadical} and \ac{DLS}~\cite{selman1996generating,selman1994noise,cai2013improving,balint2012choosing,cai2015ccanr}. 
Here, \ac{CDCL} is at its core an exhaustive tree-search on variable assignments, such that for practical scenarios with a large number of constraints compared to the number of decision variables, \ac{CDCL}-based approaches need an infeasibly long time to converge. 
More recently, \ac{CLS} has been explored as a novel complement to \ac{CDCL}, particularly for the problem class of hybrid SAT solving, where Boolean constraints are combined with other types of constraints such as not-all-equal, XOR, or pseudo-Boolean constraints~\cite{kyrillidis2020fouriersat,kyrillidis2021continuous}.
\ac{CLS} has demonstrated orders-of-magnitude speedups over \ac{CDCL} solvers on selected benchmarks~\cite{cen2025massively}.
Existing \ac{CLS} approaches, however, only support Boolean variables and do not extend to SMT. 

In the following, we introduce \textsc{FourierSMT} as a novel \ac{CLS}-based solution framework for SMT, with a focus on the theory of \ac{LRA}. 
In this theory, the constraints assume linear inequalities subject to real numbers. 
SMT(LRA) has a variety of applications, including formal verification~\cite{cordeiro2011verifying}, artificial intelligence~\cite{steiner2010evaluation}, computational biology~\cite{yordanov2013smt}. 
The framework enables continuous optimization techniques to tackle theory-rich constraints that are currently beyond the capability of existing \ac{CLS} methods.

\subsection*{Walsh-Fourier Expansions}
The Walsh-Fourier transform was previously studied as a mathematical tool in theoretical computer science~\cite{bonami1970etude,kahn1988influence}, mainly to analyze the spectral properties of Boolean functions and to prove results in circuit complexity~\cite{tal2017tight}, threshold phenomena in mathematics~\cite{safra2006perspectives}, and social choice theory~\cite{kalai2002fourier}.

Formally, given a function $f:\{\pm{}1\}^n\to\mathbb{R}$ defined on a Boolean hypercube, there exists a unique way of expressing $f$ as a multilinear polynomial~\cite{o2014analysis}, as
\begin{equation}\label{eq:WFE}
    f(x)=\sum_{S\subseteq[n]}\left(\hat{f}(S)\cdot\prod_{i\in{}S}x_i\right) \ .
\end{equation}
Here, $\hat{f}(S)\in\mathbb{R}$ is called the Walsh-Fourier coefficient, the variables $x_i$, $i=1,2,...,n$ are Boolean variables that assume values $\pm1$, and each term in the sum corresponds to a subset of $[n]=\{1,2,...,n\}$.

While the conventional \ac{WFE} applies to purely Boolean domains, SMT problems involve richer structures that include real variables in \ac{LRA}. 
To extend the \ac{WFE} to SMT, we generalize the map in the following to an \ac{xWFE}.

We begin by constructing an atom $\alpha_i$, which in the theory of LRA is a linear equality or inequality of the form
\begin{equation*}
    \alpha_i: \sum_{j=1}^m q_{i,j} y_j - q_{i,0} \bowtie 0,
\end{equation*}
where $\bowtie \in \{=, <, \leq, >, \geq\}$, the coefficients $q_{i,j}$ are numbers that must have a finite syntactic representation so that formulas can be encoded and reasoned about symbolically, and the $y_j$ are real-valued variables. 
We associate with each atomic constraint a Boolean indicator
\begin{equation*}
    \delta_{i}(y) = 
    \begin{cases}
        -1, & \text{if}\; \alpha_i \ \text{is True} \\
        1, & \text{else}.
    \end{cases}
\end{equation*}

Let $c$ be a constraint consisting of $n$ propositional variables and $k$ atoms. 
In SMT(LRA), $c$ can be represented as a function $f_c : \{\pm1\}^n \times \mathbb{R}^m \to \{\pm1\}$.
Then, there is a unique way of expressing $f_c$ as a piecewise multilinear polynomial, where each term corresponds to a pair of subsets of $[n]$ and $[k]$, as (see Supplementary Note~\ref{note:WFE})
\begin{equation}\label{eq:general-WFE}
    f_c\left(x, y\right)=\sum_{\substack{S\subseteq[n] \\ T\subseteq[k]}}\left(\hat{f}_c(S,T)\cdot\prod_{i\in{}S}x_i\cdot\prod_{i\in{}T}\delta_i(y)\right), 
\end{equation}
where $\hat{f}_c(S,T)\in\mathbb{R}$ is the extended Walsh-Fourier coefficient. The expression indicates that each constraint in an SMT(LRA) formula admits a unique piecewise multilinear Walsh-Fourier representation. 
By aggregating the expansions of all constraints, weighted according to their importance, we obtain the global optimization objective of the original SMT(LRA) problem:

\begin{equation}\label{eq:obj}
    \mathcal{F}_w(x,y)=\sum_{c\in{}C}w_c\cdot f_c(x,y) \ .
\end{equation}
Here, weight $w_c \in \mathbb{R}^+$ corresponds to the constraint $c$.

\begin{figure*}[t]
    \centering
    \includegraphics[width=0.8\linewidth]{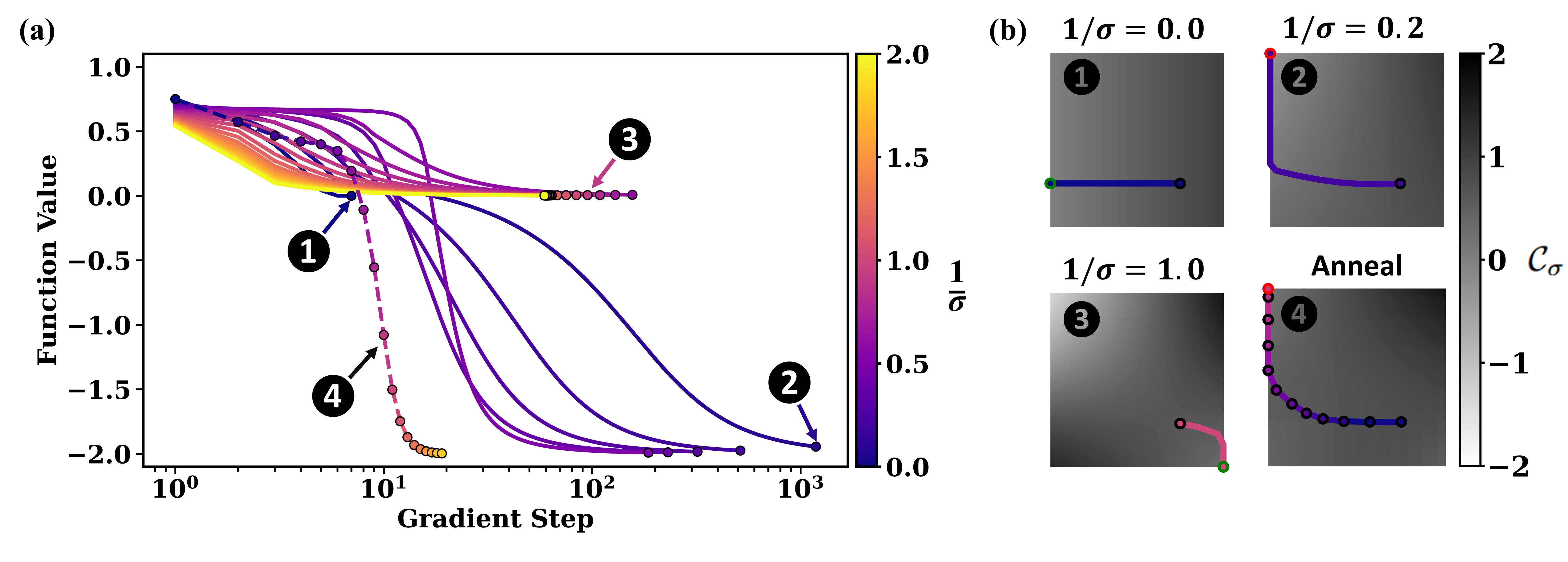}
    \caption{\textbf{Optimization trajectories and energy landscapes under different sampling parameters.}
    The number of gradient steps required to reach an $\epsilon$-projected critical point is shown for $\epsilon=10^{-2}$.
    \textbf{(a)} The colorbar indicates the inverse sampling parameter ${1}/{\sigma}$. 
    \textbf{1.} When $\sigma\to\infty$ (\emph{i.e.}, ${1}/{\sigma}=0$), the gradient with respect to the real-valued variables $y$ vanishes, as the smoothing effect becomes arbitrarily strong.
    \textbf{2.} Moderately large sampling parameters, \emph{i.e.}, ${1}/{\sigma}\in\{0.1,\ldots,0.5\}$, yield slower convergence to the global minimum.
    \textbf{3.} In contrast, smaller values of $\sigma$, corresponding to ${1}/{\sigma}\in\{0.6,\ldots,2.0\}$, lead to faster convergence to a local minimum.
    \textbf{4.} The dashed gradient-colored curve denotes the annealing schedule, in which ${1}/{\sigma}$ is gradually increased in increments of 0.1, achieving both rapid convergence and a final solution close to the global optimum.
    \textbf{(b)} Visualization of the objective landscape and the corresponding optimization trajectory for different $\sigma$. 
    The horizontal and the vertical axes of each subplot correspond to $x$ and $y$ in the domain $[-1,1]$.
    Each subplot shows the smoothed objective ($\mathcal{C}_\sigma$, greyscale background) and the path taken by the gradient descent (colorline, the color refers to the color bar in panel (a) as $1/\sigma$). 
    The optimization trajectories start at the same initial point (black edge circle) and follow the gradients to global (red edge circle) or local minima (green edge circle).}
    \label{fig:trajectory}
\end{figure*}

\subsection*{Rounding and Sampling}
The objective function defined in Eq.~\eqref{eq:obj} is piecewise multilinear, while the argument $x\in \{\pm1 \}^n$ is discrete. 
To make the objective applicable to \ac{CLS}, we relax the propositional variables $x$ to the continuous hypercube $[-1,1]^n$ and optimize the resulting multilinear extension. 

Randomized rounding $\mathcal{R}:[-1,1]^n\to\{\pm1\}^n$ then maps the variables back onto the Boolean hypercube, as

\begin{equation}\label{eq:rounding}
    \begin{cases}
        \mathbb{P}[\mathcal{R}(a)_i=-1] =\frac{1-a_i}{2}\\
            \mathbb{P}[\mathcal{R}(a)_i=+1] =\frac{1+a_i}{2}
    \end{cases}
\end{equation}
where $i\in\{1,\cdots,n\}$ and $a\in[-1,1]^n$. 
The corresponding probability on a Boolean vector is then defined as
\begin{equation*}
    \mathcal{S}_a(x)=\mathbb{P}[\mathcal{R}(a)=x] \ .
\end{equation*}
This provides a probabilistic representation of the \ac{xWFE}. 
Concretely, we express the \ac{xWFE} as its expectation under randomized rounding. 
Let $f_c$ denote the \ac{xWFE} of a constraint $c\in C$. 
Lem.~\ref{lmm:expectation} (see Supplementary Note~\ref{note:expectation}) states that, for any real point $(a, b) \in [-1,1]^n \times \mathbb{R}^m$, the expectation of $f_c$ with respect to the distribution $\mathcal{S}_a$ is
\begin{equation}\label{eq:expectation}
    \mathop{\mathbb{E}}_{x\sim\mathcal{S}_a}[f_c(x,b)] = f_c(a,b), 
\end{equation}
and we have
\begin{equation}\label{eq:expectation-Fw}
    \mathop{\mathbb{E}}_{x\sim\mathcal{S}_a}[\mathcal{F}_w(x,b)] = \mathcal{F}_w(a,b) \ ,
\end{equation}
where we sampled $x$ from the probability distribution $\mathcal{S}_a$.

The following theorem establishes that the global minima of the relaxed problem still encode solutions to the original discrete problem.

\begin{theorem}[Soundness]\label{thm:reduction}
    Let $\mathcal{F}_w$ be the weighted objective function with weights $w_c \in \mathbb{R}^+$ for each constraint $c \in C$.  
    Then the SMT(LRA) formula $F$ is satisfiable if and only if Eq.~\eqref{eq:expectation-Fw} satisfies
    \begin{equation*}
        \min_{\substack{a\in[-1,1]^n \\ b\in\mathbb{R}^m}}\mathcal{F}_w(a,b) = -\sum_{c\in{}C}w_c \ .
    \end{equation*}
\end{theorem}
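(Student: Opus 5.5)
The plan is to establish a pointwise lower bound and then handle the two directions separately. Throughout, $f_c$ takes values in $\{\pm1\}$ on $\{\pm1\}^n\times\mathbb{R}^m$, with $f_c(x,y)=-1$ exactly when $c$ is satisfied by $(x,y)$; this is the convention matching the indicators $\delta_i$.

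\emph{Lower bound.} Fix $(a,b)\in[-1,1]^n\times\mathbb{R}^m$. By Eq.~\eqref{eq:expectation} (Lem.~\ref{lmm:expectation}),
\begin{equation*}
f_c(a,b)=\mathop{\mathbb{E}}_{x\sim\mathcal{S}_a}[f_c(x,b)].
\end{equation*}
This is a convex combination of values in $\{\pm1\}$, so $f_c(a,b)\ge -1$. Multiplying by $w_c>0$ and summing over $c\in C$ gives $\mathcal{F}_w(a,b)\ge-\sum_c w_c$ for every $(a,b)$. Hence the infimum is always at least $-\sum_c w_c$.

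\emph{($\Rightarrow$)} Suppose $F$ is satisfiable, with a satisfying assignment $(x^*,y^*)\in\{\pm1\}^n\times\mathbb{R}^m$. Since $x^*\in\{\pm1\}^n\subseteq[-1,1]^n$, the point $(x^*,y^*)$ is feasible. The distribution $\mathcal{S}_{x^*}$ is the point mass at $x^*$, so $\mathcal{F}_w(x^*,y^*)=\sum_c w_c f_c(x^*,y^*)=-\sum_c w_c$. Combined with the lower bound, the minimum exists and equals $-\sum_c w_c$.

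\emph{($\Leftarrow$)} Suppose the minimum $-\sum_c w_c$ is attained at some $(a,b)$. Reading the statement's ``$\min$'' as an attained minimum is essential here, because an infimum that is only approached as $b\to\infty$ would not suffice. Write
\begin{equation*}
0=\mathcal{F}_w(a,b)+\sum_c w_c=\sum_c w_c\bigl(f_c(a,b)+1\bigr).
\end{equation*}
Each summand is nonnegative and each $w_c>0$, so $f_c(a,b)=-1$ for every $c$. Since $\mathbb{E}_{x\sim\mathcal{S}_a}[f_c(x,b)]=-1$ and $f_c\ge-1$ pointwise, we get $f_c(x,b)=-1$ for every $x$ in the support of $\mathcal{S}_a$. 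This holds simultaneously for all $c$.

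The support is nonempty. Concretely, round each coordinate deterministically: set $x_i=\operatorname{sign}(a_i)$, and choose either sign when $a_i=0$. The resulting $x$ has positive probability $\prod_i\frac{1+x_ia_i}{2}>0$. Therefore $(x,b)$ satisfies every constraint, and $F$ is satisfiable.

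The main obstacle is the subtlety noted above: the minimum over the noncompact set $\mathbb{R}^m$ must be attained, which the statement implicitly assumes by writing $\min$. I would make this explicit. The other point to check is that Lem.~\ref{lmm:expectation} applies with $b$ held fixed, which it does as stated.
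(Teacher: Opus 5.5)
Your proof is correct and follows the same structure as the paper's: a pointwise bound $f_c\ge -1$ on $[-1,1]^n\times\mathbb{R}^m$, evaluation at the satisfying Boolean point, then extraction of a Boolean minimizer. You get the bound directly from Lem.~\ref{lmm:expectation} rather than from a polyhedral case split plus a cited multilinear bound, and your support-of-$\mathcal{S}_a$ argument supplies the justification that the paper omits when it simply asserts that a minimizer in $\{\pm1\}^n\times\mathbb{R}^m$ exists. Your attainment caveat is unnecessary, though: at $\sigma=0$ the objective depends on $b$ only through the finite sign vector $\delta(b)$, so the infimum is a minimum over finitely many continuous functions on the compact cube $[-1,1]^n$ and is always attained.
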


The proof of Thm.~\ref{thm:reduction} is in the Supplementary Notes~\ref{note:proof}. 

The function in Eq.~\eqref{eq:expectation-Fw} is smooth in $a\in[-1,1]^n$, but non-smooth in $b\in\mathbb{R}^m$, so that the gradient with respect to $b$ vanishes. 
To address this, we introduce a sampling-based smoothing technique that provides a continuous approximation of the atoms.
For $b\in\mathbb{R}^m$, a sampling point $d_i(b)$ drawn from the Gaussian distribution $\mathcal{N}(b,\sigma^2I)$ can be wrtiten as

\begin{align}
    \nonumber d_i(b)=&\mathop{\mathbb{E}}_{y\sim\mathcal{N}(b,\sigma^2I)}\delta_i(y) \\
    &=\mathrm{erf}\left(
        \frac{\sum_{j=1}^m q_{i,j}b_j-q_{i,0}}{\sqrt{2\sum_{j=1}^m q_{i,j}^2}\sigma}\label{eq:sampling}
    \right)
\end{align}
where the second equation is the analytical expression derived when $\delta_i(y)$ corresponds to a linear constraint of the form $q_i^\top y \leq q_{i,0}$, (see the Supplementary Note~\ref{note:expectation}, proof of Lem.~\ref{lmm:expectation}).

Thm.~\ref{thm:expectation} in the Supplementary Note~\ref{note:expectation} states that, for any point $(a,b) \in [-1,1]^n \times \mathbb{R}^m$, we have
\begin{equation}\label{eq:expectation2}
   \mathop{\mathbb{E}}_{\substack{x\sim\mathcal{S}_a \\ y\sim\mathcal{N}(b,\sigma^2I)}} [f_c(x, y)] =\sum_{\substack{S\subseteq[n] \\ T\subseteq[k]}}\left(\hat{f}_c(S,T)\cdot\prod_{i\in{}S}a_i\cdot\prod_{i\in{}T}d_i(b)\right),
\end{equation}
which is the expectation of Eq.~\eqref{eq:general-WFE} under relaxation of variables $x_i$ and $\delta_i$ to $a_i$ and $d_i$. 

The number of terms in Eq.~\eqref{eq:expectation2} grows exponentially as $\mathcal{O}(2^{n+k})$. 
This is because \ac{xWFE} enumerates all possible states. In the following, we present an equivalent approach that significantly reduces the complexity. 
The Cor.~\ref{cor:cop=wfe} in the Supplementary Note~\ref{note:WMI} establishes that the expectation of an \ac{xWFE} can be expressed as a \ac{COP} of a probablistic circuit,
\begin{equation}
    \text{COP}_c = \mathop{\mathbb{E}}_{\substack{x\sim\mathcal{S}_a \\ y\sim\mathcal{N}(b,\sigma^2I)}} \left[f_c(x, y)\right].
\end{equation}
Further, let $\mathcal{C}_\sigma(a,b)=\sum_{c\in{}C}w_c\cdot\text{COP}_c$. Then, we have
\begin{equation}\label{eq:objective}
    \mathcal{C}_\sigma(a,b) = 
    \mathop{\mathbb{E}}_{\substack{x\sim\mathcal{S}_a \\ y\sim\mathcal{N}(b,\sigma^2I)}} \left[{\mathcal{F}_w(x, y)}\right].
\end{equation}

We show in the Supplementary Note~\ref{note:MP} that running the probability assignment algorithm on an \ac{xBDD} can produce such a COP (see~\cite{thornton1994efficient, kyrillidis2021continuous} for an introduction to BDDs, and the Supplementary Note~\ref{note:DD} for our extension to xBDD). 
This allows us to use the \ac{COP} of \ac{xBDD}s as an optimization surrogate instead of the expectation of \ac{xWFE}. 
In addition, the \ac{COP} of an \ac{xBDD} can be efficiently evaluated \cite{bryant1995binary} (see also the Supplementary Note~\ref{note:WMI}), avoiding the problem of state enumeration. 
In the Supplementary Note~\ref{note:MP}, Thm.~\ref{thm:forward} and Thm.~\ref{thm:backward} established that the computational complexity of evaluating the COP scales is $\mathcal{O}\left(|V_c|\right)$, while the complexity of computing its gradient scales is $\mathcal{O}\left(m \cdot |V_c|\right)$, where $|V_c|$ denotes the number of nodes in the decision diagram representing constraint $c$. 
Especially when the literals are symmetric, the complexity of evaluating the \ac{COP} reduces to $\mathcal{O}\left((n+k)^2\right)$~\cite{sasao1996representations}. 
This offers substantial reductions in computational cost~\cite{kolb2018efficient,dos2019exact,kolb2020exploit}.

\subsection*{\textsc{FourierSMT} - A Gradient-Based Optimizer for SMT}

The theoretical framework established above sets the foundation for \textsc{FourierSMT}, a \ac{CLS}-based optimizer for SMT. 
We refer to Alg.~\ref{alg:cls} for the following discussion, and to Supplementary Notes~\ref{note:WFE}-\ref{note:MP} for a detailed analysis of the algorithm. 

\begin{algorithm}[t]
\caption{\textsc{FourierSMT}}
\label{alg:cls}
\KwIn{A constraint set $C$, constraint weights $\{w_c\}$, initial point $(a, b)$, sampling parameter $\sigma$}
\KwOut{Result \textsc{SAT} or \textsc{Unknown}}
\For{$c\in{}C$}{
    Transform $c$ into xWFE $f_c$ and $\text{COP}_c$ \tcp*[r]{Corollary~\ref{cor:wfe} \& Supplementary Notes~\ref{note:WFE}-\ref{note:WMI}}
}
\While{$||g(a,b)||^2 > \epsilon^2$}{
    \For{$i \in \texttt{range}(n)$}{
        $p_i\gets\frac{1-a_i}{2}$ \tcp*[r]{Rounding, Eq.~(\ref{eq:rounding})}
    }
    \For{$i \in \texttt{range}(m)$}{
        $p_{i+n}\gets\frac{1-d_i(b)}{2}$ \tcp*[r]{Sampling, Eq.~(\ref{eq:sampling})}
    }
    Get the projected gradient $g(a,b)$ by Eq.~\eqref{eq:grad_map}\;
    Update the assignment $(a,b)\gets(a,b)-\eta\cdot g(a,b)$\;
}
Rounded assignment $(x, y) \gets (\texttt{sgn}(a), b)$\;
\If{$\mathcal{F}_w (x, y) = -\sum_cw_c$ }{
    \Return \textsc{SAT} \tcp*[r]{Theorem~\ref{thm:reduction}}
}\Else{
\Return \textsc{Unknown} \tcp*[r]{CLS is incomplete}
}
\end{algorithm}

Alg.~\ref{alg:cls} starts by transforming the mixed Boolean-real constraints into continuous functions (lines~1 and 2). The corresponding xBDD can be efficiently constructed using existing decision diagram packages such as CUDD~\cite{somenzi1997cudd}. This enables a gradient-based approach to solve the continuous objective in Eq.~\eqref{eq:objective}, where the minima encode the solution to the original discrete problem. 
The variables are transformed to probabilities due to rounding and sampling (lines~4-7). 
Then, \textsc{FourierSMT} runs \ac{PGD} until it converges to an $\epsilon$-projected-critical point (see Methods).   
At each gradient step, \textsc{FourierSMT} computes the gradient of the \ac{COP} of each constraint $c$ (line 8).  
The gradients can be computed efficiently by top-down and bottom-up message propagations through \ac{xBDD}s (see the Supplementary Note~\ref{note:MP}, Alg.~\ref{alg:forward} and Alg.~\ref{alg:backward}). 
The projected gradient mapping is then computed by Eq.~\eqref{eq:grad_map}, through convex \ac{QP} (see Methods, Prop.~\ref{prop:qp}). 
Line~10 executes the randomized rounding on the propositional variables $x$, mapping the assignment back to the original mixed Boolean-real domain.  
Given the hardness of optimizing non-convex functions~\cite{jain2017non}, it is more practical to converge to local optima and check with Theorem~\ref{thm:reduction} if any optimum is global (line~12).

The performance of \ac{CLS}, measured by the number of steps until convergence, is critically dependent on the smoothness of the objective. 
We show in the Supplementary Note~\ref{note:smoothness}, Thm.~\ref{thm:max_step} that the \ac{PGD} in \textsc{FourierSMT} converges to an $\epsilon$-projected-critical point in $\mathcal{O}(\frac{\alpha L}{\epsilon^2})$ iterations, where $\alpha$ is the sum of the constraint weights and $L$ is the gradient Lipschitz constant.
In the Supplementary Note~\ref{note:smoothness}, Prop.~\ref{prop:lipschitz}, we showed that $L$ depends on the number of variables, constraints, theory atoms, and the sampling parameter $\sigma$.
These parameters jointly determine the convergence rate of each \ac{PGD} trail.

We examine the empirical behavior of the projected gradient method with a concrete example, see Fig.~\ref{fig:trajectory}. 
Here, we study the conjunction of the two constraints $\neg x \oplus (y>0)$ and $x \wedge (y>0)$, which has a local minimum on $x=1$,  $\delta(y)=1$ and a global minimum on $x=-1$,  $\delta(y)=-1$.  
A larger variance $\sigma$ in the Gaussian sampling distribution $\mathcal{N}(b,\sigma^2 I)$ generally produces a smoother and better-conditioned objective landscape. For $\sigma>2$, the Gaussian smoothing covers a wider portion of the energy landscape, enabling the optimizer to capture global structural information and eventually reach the global minimum. 
However, we observe that trials with larger $\sigma$ exhibit slower convergence toward the $\epsilon$-projected-critical point, because excessive smoothing reduces the effective update step size. 
In contrast, when $\sigma < 2$, the sampling becomes more localized, the energy landscape appears sharper, and the optimization rapidly converges to a nearby local minimum.

To mitigate this issue, we adopt an \emph{annealing strategy}, in which optimization begins with a large variance $\sigma$ and gradually reduces it according to a schedule. 
An effective annealing schedule should allow iterations to cross transition regions (see the Supplementary Note~\ref{note:local_smoothness}, Cor.~\ref{cor:lipschitz}) quickly, while subsequently reducing $\sigma$ to recover the exact objective. 
In the limit of $\sigma \to 0$, we recover the exact multilinear objective as in Eq.~\eqref{eq:obj}.
This leads to a global exploration of the landscape and identification of the right neighborhood of solutions. 
As $\sigma$ reduces, the transition regions become tighter, effectively narrowing the landscape down to the neighborhood of a minimum. 
This is formalized by the following

\begin{theorem}[Optimality]\label{thm:optimality}
    When $\sigma = 0$, Eq.~\eqref{eq:objective} has no local optima within the open domain $(-1,1)^n \times \mathbb{R}^m$, \emph{i.e.}, projected gradient will converge to the boundary of $[-1,1]^n \times \mathbb{R}^m$.
\end{theorem}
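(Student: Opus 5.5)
At $\sigma=0$ the Gaussian $\mathcal{N}(b,\sigma^2 I)$ collapses to the point mass at $b$. By Eq.~\eqref{eq:objective} and Eq.~\eqref{eq:expectation-Fw}, $\mathcal{C}_0(a,b)=\mathcal{F}_w(a,b)$, so I will study the multilinear extension in $a$ with the atom indicators $\delta_i(b)$ frozen at their exact values $\pm1$. The first step is to use the xWFE of Eq.~\eqref{eq:general-WFE} to rewrite, for fixed $b$,
\begin{equation*}
\mathcal{C}_0(a,b)=\sum_{S\subseteq[n]}\Big(\sum_{c\in C}w_c\sum_{T\subseteq[k]}\hat f_c(S,T)\prod_{i\in T}\delta_i(b)\Big)\prod_{i\in S}a_i=:\sum_{S\subseteq[n]} g_S(b)\prod_{i\in S}a_i .
\end{equation*}
Here each $g_S(b)$ is piecewise constant in $b$: it is constant on every cell of the arrangement cut out by the hyperplanes of the atoms $\alpha_i$.

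The second step is the key observation. On the interior of a cell, $\mathcal{C}_0$ does not depend on $b$, and in $a$ it is a multilinear polynomial, so it is affine in each coordinate $a_i$ separately. Suppose $(a^*,b^*)$ is a local minimum with $a^*\in(-1,1)^n$. Fix all coordinates except $a_i$. The restriction $t\mapsto \mathcal{C}_0(a^*_1,\dots,t,\dots,a^*_n,b^*)$ is affine on $[-1,1]$ and has a local minimum at an interior point, so it must be constant. Hence $\partial_{a_i}\mathcal{C}_0=0$ for every $i$. I then claim the function cannot strictly increase in all directions. Because it is affine in $a_1$, we can slide $a_1$ to $\pm1$ without changing the value. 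Repeating coordinate by coordinate, restricting at each stage to the remaining free coordinates (each restriction is again multilinear, and the value stays constant along the way), we reach a vertex of $[-1,1]^n$ with the same objective value. So no strict local optimum lies in the open box.

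For a non-strict local minimum I will phrase the conclusion in the form the theorem intends. The gradient in $a$ vanishes and the objective is constant along coordinate lines, so along the set of such points the projected gradient can be driven to the boundary without increasing the objective. In the $b$-directions, $\nabla_b\mathcal{C}_0=0$ away from the atom hyperplanes, so motion in $b$ never changes anything. The standard argument for multilinear objectives (as in \textsc{FourierSAT}) is that a critical point is either a saddle or lies on a face along which the function is constant. I will use the Hessian in $a$: its diagonal is zero, so if any off-diagonal entry $\partial^2_{a_ia_j}\mathcal{C}_0$ is nonzero, the Hessian is indefinite and the point is a saddle, not a local optimum. If all second derivatives vanish, the argument repeats on higher-order mixed derivatives, and then the function is locally constant.

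The main obstacle is the behaviour of $b^*$ on an atom hyperplane, where $\delta_i$ jumps and $\mathcal{C}_0$ is discontinuous in $b$. There I will argue that the value at $b^*$ equals the value on one adjacent cell, determined by the convention for $\delta_i$ at the boundary ($\le$ versus $<$). A local minimum at $b^*$ is then also a local minimum of the restriction to that closed cell, where the multilinear argument above applies unchanged. Since $\mathbb{R}^m$ has no boundary, "boundary of $[-1,1]^n\times\mathbb{R}^m$" means exactly that $a$ lies on the boundary of the box, which is the conclusion the coordinate-sliding argument gives.
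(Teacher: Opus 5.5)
Your core argument is the paper's: freeze $b^*$, note that $a\mapsto\mathcal{C}_0(a,b^*)=\sum_S g_S(b^*)\prod_{i\in S}a_i$ is multilinear, and use the fact that a non-constant multilinear polynomial has no local extremum. The paper imports this fact as Lemma~3 of \cite{kyrillidis2021solving} inside Lem.~\ref{lmm:optimality}; you re-derive it. Because this is the load-bearing step, replace ``the argument repeats on higher-order mixed derivatives'' by the precise version. The multilinear Taylor expansion is exact:
\[
\mathcal{C}_0(a^*+h,b^*)-\mathcal{C}_0(a^*,b^*)=\sum_{\emptyset\neq S\subseteq[n]}\partial_S\mathcal{C}_0(a^*,b^*)\prod_{i\in S}h_i .
\]
If some coefficient is nonzero, take an inclusion-minimal $S_0$ with $\partial_{S_0}\mathcal{C}_0(a^*,b^*)\neq0$ and take $h$ supported on $S_0$. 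The difference is then $\partial_{S_0}\mathcal{C}_0(a^*,b^*)\prod_{i\in S_0}h_i$, which takes both signs for arbitrarily small $h$.

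Your suspicion about the remaining constant-slice case is justified, and that case breaks the statement as written. Take $n=m=1$ and the single constraint $c=x_1\vee(y\le0)$. Then $f_c=-\tfrac12+\tfrac12x_1+\tfrac12\delta_1+\tfrac12x_1\delta_1$, so $\mathcal{C}_0(a,b)=-w_c$ for all $a$ whenever $b\le0$. Every point of $(-1,1)\times(-\infty,0]$ is therefore a global minimum. The paper's proof excludes this case only through the claim that $\hat g_b(S)\neq0$ whenever some $\hat f(S,T)\neq0$. That claim fails by cancellation: here $\hat g_b(\{1\})=\tfrac12(1+\delta_1(b))=0$. So state the conclusion you actually prove: an interior point can be a local optimum only if its slice is constant in $a$, and in particular there are no strict interior local optima.

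Two of your auxiliary steps are wrong as written, though neither is needed.

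\begin{itemize}
\item The ``main obstacle'' is not one. A joint local optimum at $(a^*,b^*)$ is a local optimum of the slice at $b^*$. That slice is multilinear for every $b^*$, on a hyperplane or not, because $\delta_i(b^*)\in\{\pm1\}$ is well defined. Your proposed fix is also false in general: for an equality atom, or for the pair $y\le0$, $y\ge0$, the value on the hyperplane matches neither adjacent cell.
\item The coordinate-sliding step is unjustified. Once $a_1$ is moved to $\pm1$ you have left the neighbourhood where local minimality holds, so nothing you have shown forces the $a_2$-restriction to be constant. That only follows after the minimal-$S_0$ argument has shown the whole slice is constant. To exclude strict optima you need no sliding, since constancy along one coordinate line already suffices. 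If you want a boundary point that is no worse, use the standard rounding fact that each coordinate can be moved to $\pm1$ \emph{without increasing} the value.
\end{itemize}

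Finally, neither your argument nor the paper's proves the ``i.e.'' clause about PGD dynamics. PGD is stationary at interior zero-gradient points, such as the saddle $a=0$ of $\mathcal{C}_0=w_c\,a_1a_2$ arising from $c=x_1\oplus x_2$, or anywhere on a constant slice. That clause holds only in the weak rounding sense just described.
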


The proof of Thm.~\ref{thm:optimality} is in the Supplementary Note~\ref{note:optimality}.

Thm.~\ref{thm:optimality} states that reduction of the sampling parameter $\sigma$ to zero recovers the soundness property of Thm.~\ref{thm:reduction}, as it ensures that the smoothing mechanism introduced for optimization does not alter the semantics of the underlying SMT(LRA) formulation.
At the same time, the theoretical guarantees at $\sigma=0$ do not fully characterize the behavior of the optimization dynamics at finite $\sigma$. 
In practice, the overall performance of the solver depends on several factors, including the optimization method, initialization, and, in particular, the annealing schedule used to decrease $\sigma$. 
As in other annealing-based optimization frameworks, the schedule influences how effectively the algorithm explores the landscape before localizing the search to specific regions, and thus plays an important role in determining empirical performance~\cite{triki2005theoretical,morita2008mathematical,bohm2018understanding}. 
In Fig.~\ref{fig:trajectory}b, we illustrate a trial in which $\sigma$ decreases at each gradient step. 
In step 19, the point converges to a $\epsilon$-projected critical point.

The above analysis confirms both the convergence and soundness of the proposed annealing-based optimization scheme, ensuring that the obtained solutions are theoretically grounded.
We next validate these properties empirically through extensive experiments on random hybrid benchmarks and large-scale combinatorial problems.

\begin{figure*}[t]
    \centering
    \includegraphics[width=\linewidth]{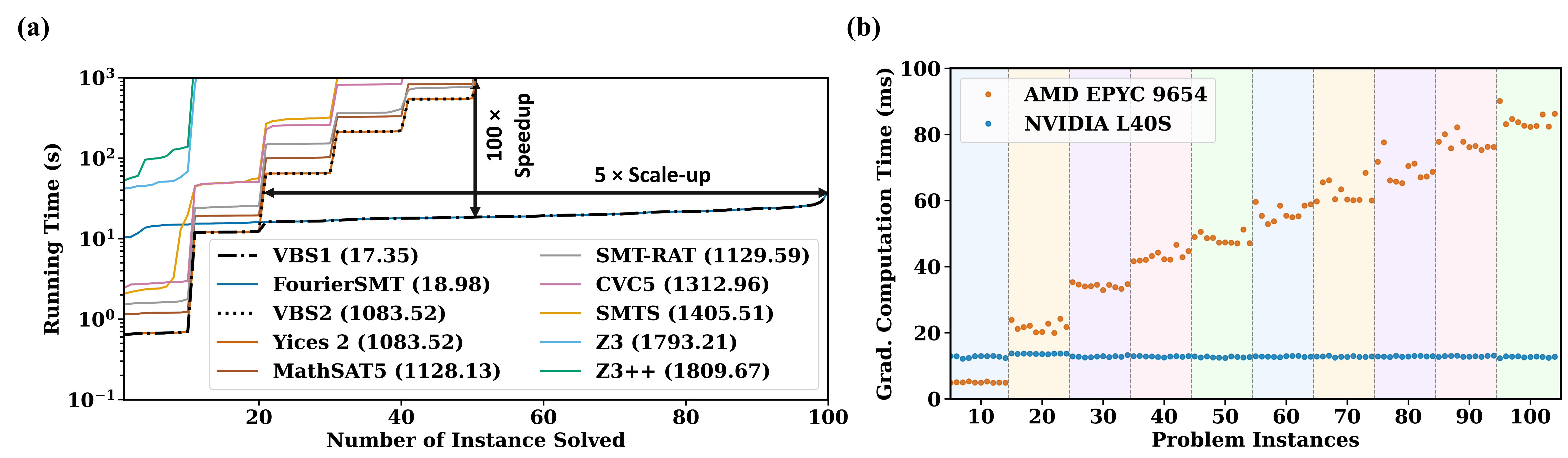}
    \caption{\textbf{Benchmark results on random instances with mixed Boolean-real hybrid constraints.}
    \textbf{(a)} The results show the number of solved instances and runtime efficiency (average PAR-2 score of the solvers). It can be seen that the virtual-best solver (\textsc{VBS1}, including \textsc{FourierSMT}) is 62.45$\times$ faster and solves 50 more instances than its counterpart (\textsc{VBS2}, excluding \textsc{FourierSMT}). 
    The 20 easy instances (small-scale) are solved more efficiently by CDCL($\mathcal{T}$) solvers, whereas all larger instances are solved most efficiently by \textsc{FourierSMT}. This pattern demonstrates that \textsc{FourierSMT} is substantially more scalable.  \textbf{(b)} The average gradient computation time of \textsc{FourierSMT} using CPU (AMD EPYC 9654) and GPU (NVIDIA L40S). 
    For each problem instance, we measure the time required to compute a single gradient step.
    The GPU exhibits near-constant runtime across all sizes, reflecting efficient parallel amortization of fixed overheads, \emph{e.g.}, kernel launch, memory movement. 
    In contrast, CPU runtime increases steadily with instance size, scaling approximately linearly.
    }
    \label{fig:random}
\end{figure*}


\subsection*{Performance Analysis and Application to Combinatorial Problems}
To evaluate the performance of \textsc{FourierSMT}, we first conduct experiments on a benchmark set of random problem instances, each consisting of hybrid constraints following the setting in~\cite{kyrillidis2020fouriersat}, \emph{i.e.}, cardinality ($card$), not-all-equal ($nae$), and parity ($xor$) constraints. 
Such constraints are versatile for encoding a wide range of combinatorial problems.
We vary $n \in \{100, 200, \ldots, 1000\}$ to generate 10 benchmark instances for each $n$.
Each instance contains $n$ Boolean variables and $n$ real variables (this choice is arbitrary, and our solver works for any proportion of Boolean and real variables).
We set the number of each type of constraint to $m_{\text{card}} = m_{\text{nae}} = n/5$, and $m_{\text{xor}}=n/50$, while their respective constraint lengths are $l_{\text{card}} = l_{\text{nae}} = \min(50, n/5)$ and $l_{\text{xor}} = 50$. 
The behavior and interactions of such hybrid Boolean constraints have been studied extensively in prior work, and transitions in the hardness of the problems were studied in~\cite{dudek2016combining,pote2019phase,gupta2020phase}.
We extend the setting here to Boolean and real variables. 
In addition, each instance includes $n$ LRA atoms, enabling the extension of the benchmark to continuous variables.
The details of the random benchmark are given in the Methods.

We compare \textsc{FourierSMT} with representative SMT solvers, including \textsc{Z3}~\cite{de2008z3}, \textsc{Z3++}~\cite{li2023local}, \textsc{CVC5}~\cite{barbosa2022cvc5}, \textsc{Yices2}~\cite{dutertre2014yices}, \textsc{SMTS}~\cite{bruttomesso2010opensmt}, \textsc{MathSAT5}~\cite{cimatti2013mathsat5}, and \textsc{SMT-RAT}~\cite{corzilius2012smt}.
All solvers are applied to the corresponding SMT(LRA) encodings, where the Boolean backbone is expressed in conjunctive normal form.
For reference, we also report two virtual best solvers: \textsc{VBS1}, defined as the best performance achievable across all solvers, and \textsc{VBS2}, defined as the best performance excluding \textsc{FourierSMT}.
The gap between VBS1 and VBS2 quantifies the advantage of \textsc{FourierSMT} compared to the current state-of-the-art.  
The solver performance is assessed using the metric called PAR-2 (penalized average runtime 2) score, as in Eq.~\eqref{eq:PAR2}. 
This is the most common metric used in SMT and SAT solving \cite{amadini2023evaluation}. For all problems, we set a $1000$-second time limit for the benchmark problems. 
The solvers are executed on AMD EPYC 9654 CPUs, while \textsc{FourierSMT} is run on NVIDIA L40S GPUs, see Methods for more details.

For CDCL($\mathcal{T}$)-based solvers, the encoding of instances generated with $n \le 500$ in CNF form creates an overhead such that the encoded formulas contain up to 64,000 variables and 127,000 clauses (see Methods section). 
For \textsc{FourierSMT}, only 1000 variables ($n$ Boolean variables and $n$ real variables) are needed. As a consequence, all solvers except FourierSMT fail to solve instances for which $n \ge 500$. 
\textsc{FourierSMT} scales to problems with up to $n = 1000$. 
This would involve more than 130,000 variables and 250,000 clauses for CDCL($\mathcal{T}$)-based solvers, rendering problems of this magnitude extremely challenging for current state-of-the-art SMT solvers. 

\begin{figure*}[t]
    \centering
    \includegraphics[width=\linewidth]{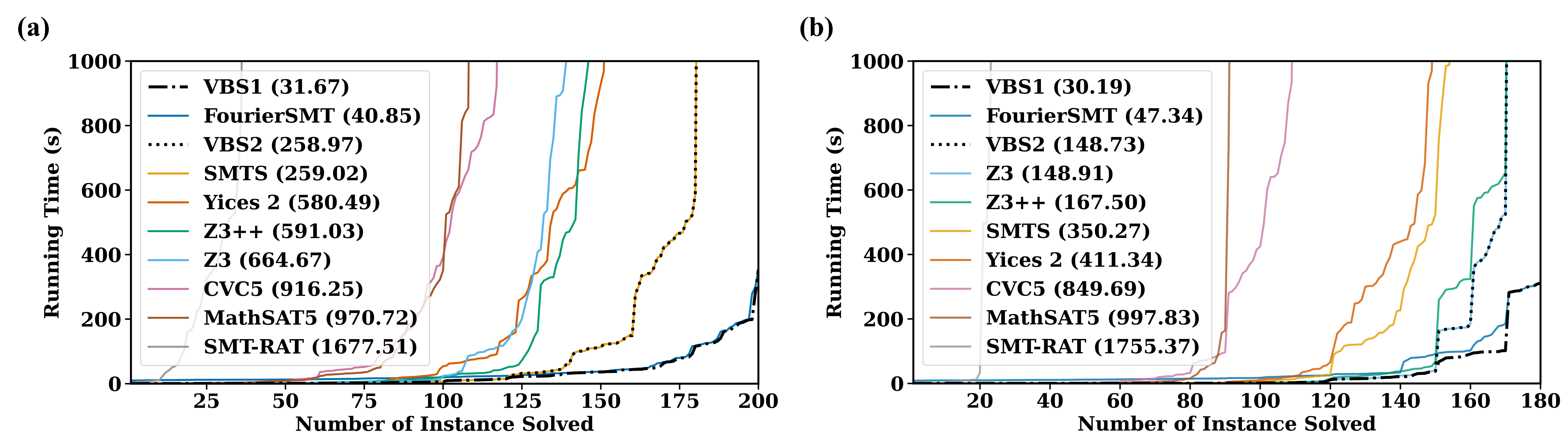}
    \caption{\textbf{Performance comparison of SMT solvers on combinatorial problems.}
        The results show the number of solved instances and runtime efficiency (average PAR-2 score of the solvers).
        \textbf{(a)} Scheduling problems, which mainly consist of non-overlap, feasibility, and task-dependency constraints. The virtual-best solver (\textsc{VBS1}, including \textsc{FourierSMT}) is 8.18$\times$ faster and solves 20 more instances than the counterpart (\textsc{VBS2}, excluding \textsc{FourierSMT}).
        \textbf{(b)} Placement problems, which mainly consist of non-overlap, feasibility, and routing-aware constraints. 
        In this category, \textsc{Z3} and \textsc{Z3++} achieve the best performance cross CDCL($\mathcal{T}$) solvers. \textsc{VBS1} is 4.93$\times$ faster and solves 10 more instances than \textsc{VBS2}. All solvers operate on identical encodings, indicating that the performance gain stems purely from the optimization framework rather than from encoding-free advantages.}
    \label{fig:overall}
\end{figure*}

Despite this hardness, \textsc{FourierSMT} successfully solves all 100 instances. 
In contrast, the portfolio of all competing solvers, \emph{i.e.}, VBS2, solves only 50 small- to medium-scale instances. 
This demonstrates the superior scalability and robustness of FourierSMT on large random hybrid benchmarks. 
From Fig.~\ref{fig:random}a we can observe that, although \textsc{FourierSMT} is relatively slower on small-scale instances ($n=100$), it achieves  100$\times$ speedup on medium-scale instances ($n=500$) and is capable of solving instances that are at least twice as large as those handled by competing solvers within this benchmark.
Within the maximum runtime of \textsc{VBS1} across all instances (37.84 seconds), \textsc{VBS2} only solves 20 instances, reflecting a five-fold improvement in scalability contributed by \textsc{FourierSMT} (see per-instance results in the Supplementary Note~\ref{note:detail}, Table~\ref{tab:random}).

To determine the extent of GPU-induced speedup, we conducted an additional experiment that compares the gradient evaluation time of FourierSMT on CPU and on GPU.
For each instance, we measure the average gradient computation time on a total of 10,000 random points. 
As shown in Fig.~\ref{fig:random}b, for the smallest instance category, the CPU is 2.55$\times$ faster than the GPU.
This is expected, as the workload is insufficient to saturate the GPU’s massively parallel architecture, and the runtime is dominated by fixed overheads such as kernel-launch latency and device synchronization.
As the instance size increases, the GPU amortizes these fixed costs, whereas the runtime remains nearly constant until the hardware becomes fully saturated. 
In contrast, the CPU exhibits approximately linear growth in runtime as the instance size increases.
For the largest instance category, the GPU achieves a 6.67$\times$ speedup over the CPU, demonstrating the scalability advantages of GPU-accelerated gradient computation.
The detailed result is provided in the Supplementary Note~\ref{note:gradient}.

We next evaluate \textsc{FourierSMT} on two classes of combinatorial optimization problems, scheduling and placement, which are widely used as benchmarks due to their computational hardness and constraint nature~\cite{ansotegui2011satisfiability,saif2016pareto}. For both problem classes, SMT solvers are considered to be state-of-the-art. 
To better reflect practical scenarios, we design benchmark instances in which constraints are encoded with Boolean and real-valued variables, thereby extending beyond purely discrete formulations.
We compare \textsc{FourierSMT} against state-of-the-art SMT solvers in two benchmark suites comprising 380 instances.

\noindent\textbf{Scheduling Problems.}
We consider a continuous-time variant of the scheduling problem, where a number $n_j$ of jobs are assigned to $n_w$ workers, and execution can begin at any real-valued time stamp rather than at discrete intervals. This formulation remains NP, but generalizes the problem to more practical scenarios involving continuous scheduling. 
The goal is to find the assignment of all jobs within the given time budget. 
A full description of the problem is given in the Methods.

Benchmark instances were generated by varying the number of workers $n_w$ and the ratio between jobs and workers $r=n_j / n_w$.  
Specifically, we set $n_w \in \{16, 32, 64, 128, 256\}$ and chose $r \in \{2,3,4,5\}$. 
We generate 10 instances for each $n_w$ and $r$. 
Precedence constraints among jobs capture task dependencies. 
For each configuration, we created ten benchmark instances, with time budgets specified by a greedy heuristic. 

Fig.~\ref{fig:overall}a summarizes the results, where \textsc{FourierSMT} achieves the lowest PAR-2 score. 
The strongest baseline, \textsc{SMTS}, records 259.01 seconds, whereas \textsc{FourierSMT} is 6.34$\times$ faster.  
Relative to widely deployed solvers, speedups are even more pronounced: 14.2$\times$ over \textsc{Yices2}, 14.5$\times$ over \textsc{Z3++}, 16.3$\times$ over \textsc{Z3}, 22.4$\times$ over \textsc{CVC5}, 23.8$\times$ over \textsc{MathSAT5}, and 41.1$\times$ over \textsc{SMT-RAT}.  

Notably, \textsc{SMTS} is fastest on 79 out of 180 instances, followed closely by \textsc{FourierSMT} on 76, while the latter dominates the larger-scale instances.
\textsc{Yices2} and \textsc{Z3} lead on 38 and 8 instances, respectively. 
In small- to medium-scale instances, \textsc{FourierSMT} exhibits a warm-up phase as a result of exploration of the local objective landscape, leading to slightly slower runtimes.  
Crucially, on the largest instances, where other solvers fail to scale, \textsc{FourierSMT} solves all cases within the 1000-second time limit (see per-instance results in the Supplementary Note~\ref{note:detail}, Table~\ref{tab:scheduling}).  

To further assess generality, we next consider another NP-complete problem with different structural characteristics: the placement problem.  

\noindent\textbf{Placement Problems.}
In VLSI physical design, placement assigns components to precise positions on chiplets subject to area and non-overlap constraints. 
Our benchmark focuses on the 3D variant, where modules may be stacked across multiple layers. The solution of the benchmark problem encodes the spatial coordinates of all modules subject to the constraints.

Instances were generated by varying the number of macros $n_m \in \{2,4,8,16,32,64\}$ and the number of layers $n_l \in \{2,4,8\}$.  
We generate 10 instances for each $n_m$ and $n_l$. 
Routing-aware constraints were applied: modules within the same macro were colocated, modules on the same layer were placed adjacently, and cross-layer modules were connected via through-silicon vias, thereby reducing downstream routing cost (see Methods).

We find that \textsc{FourierSMT} again delivers the lowest PAR-2 score, see Fig.~\ref{fig:overall}b. 
Interestingly, here \textsc{Z3} surpasses \textsc{SMTS} as the strongest CDCL($\mathcal{T}$)-based baseline. 
Its variant \textsc{Z3++}, which augments \textsc{Z3} with local search, shows mixed behavior: outperforming \textsc{Z3} in scheduling but falling behind in placement.
This behavior arises because, in placement instances where local search offers limited improvement, the additional search overhead reduces the time available for CDCL($\mathcal{T}$) reasoning, causing \textsc{Z3++} to underperform relative to pure \textsc{Z3}.
In general, \textsc{FourierSMT} is 3.15$\times$ faster than \textsc{Z3} and 3.54$\times$ faster than \textsc{Z3++}. 
Relative improvement are larger compared to other baselines: 7.4$\times$ over \textsc{SMTS}, 8.69$\times$ over \textsc{Yices2}, 17.95$\times$ over \textsc{CVC5}, 21.8$\times$ over \textsc{MathSAT5}, and 36.0$\times$ over \textsc{SMT-RAT}. 
It can be seen that \textsc{FourierSMT} excels primarily on the largest 30 instances, which are intractable for the other solvers (see per-instance results in the Supplementary Note~\ref{note:detail}, Table~\ref{tab:placement}).

In conclusion, on the problems of scheduling and placement, \textsc{FourierSMT} demonstrates superior scalability while retaining efficiency. 
In particular, \textsc{VBS1} achieves speedups of 8.18$\times$ and 4.93$\times$ over \textsc{VBS2}. 
This advantage arises from the use of continuous relaxations, Gaussian smoothing, and projected gradient methods to efficiently navigate high-dimensional search spaces, allowing effective handling of instances where traditional CDCL($\mathcal{T}$) heuristics stagnate. 
Furthermore, the \ac{CLS} framework allows ForierSMT to achieve shorter runtimes due to strong parallelizability on multi-core hardware, which is very challenging for CDCL($\mathcal{T}$)-based approaches.

\section*{Discussion}
We introduced \textsc{FourierSMT}, a continuous optimization framework to solve SMT problems with linear real arithmetic (\ac{LRA}) constraints. 
While most state-of-the-art solvers are based on conflict-driven clause learning, \textsc{FourierSMT} is a \ac{CLS}-based approach, which allows gradient-dynamics for SMT. 
Concretely, we encode the SMT clauses in piecewise-linear functions with an extended Walsh-Fourier expansion (xWFE). 
We then reduce the number of terms in the xWFE, which grow exponentially in the number of variables and atoms, by introducing an efficient data structure called extended binary decision diagrams (xBDDs). 
The output probability of such xBDDs has been proven to be equivalent to the expectation value of xWFE. 
This allows us to find solutions to the SMT with gradient-based search on the output distribution of xBDDs. 
To boost convergence, we smooth the xBDD output distribution with randomized rounding and Gaussian sampling techniques. We adopt an annealing schedule, which scans the global landscape first and later localizes the search to a suitable region. 
This approach was shown to efficiently search the landscape while maintaining the optimality of the solutions.
Benchmarking \textsc{FourierSMT} both on random SMT instances as well as combinatorial optimization problems, such as scheduling and 3D placement, reveals superior scalability, surpassing previous state-of-the-art solvers in particular for large problem instances.

A key advantage of the framework is its compatibility with modern parallel hardware: gradient-based procedures map naturally to GPUs, enabling substantial acceleration at scale.
This combination of theoretical novelty and computational efficiency suggests that \textsc{FourierSMT} can serve as a foundation for solving industrial combinatorial problems at scale.

For future directions, extending the \textsc{FourierSMT} framework to nonlinear real arithmetic (NRA) would allow one to address nonlinear atomic constraints that induce highly nonconvex feasible regions and generally lack tractable closed-form expectations under Gaussian smoothing. 
Such an extension would broaden the applicability to domains such as hybrid system verification~\cite{cimatti2012quantifier} and biological network analysis~\cite{mccaig2008process}, while offering a fresh perspective on scalable algorithms for NRA.
Moreover, generalizing Proposition~\ref{prop:qp} to accommodate nonlinear constraints would provide efficient projection mechanisms for complex feasible sets, thereby further integrating continuous optimization with symbolic reasoning.
In this way, extending \textsc{FourierSMT} to NRA would help establish a unifying framework that complements complete methods such as cylindrical algebraic decomposition~\cite{caviness2012quantifier} and MCSAT~\cite{jovanovic2013design}.

\section*{Methods}

\subsection*{Projected Gradient Descent}
In \ac{PGD}, each iteration updates the current point by following the negative gradient. If the updated point lies outside the feasible domain, it is mapped back by a projection operator:
\begin{align}
    (a', b') & = (a, b) - \eta \nabla \mathcal{C}_\sigma(a, b), \label{eq:pg1} \\
    (a, b)   & = \text{proj}_\mathcal{D}(a',b'). \label{eq:pg2}
\end{align}
where $\mathcal{D}$ denotes the feasible set. 
Next, we formally define the projection operation and analyze its computational properties.

\begin{definition}[Projection]\label{def:qp}
    Let $(a',b') \in \mathbb{R}^{n+m}$ denote the point obtained after a gradient step.  
    The projection of $(a',b')$ onto the feasible set $\mathcal{D}$ is defined as
    \begin{align*}
        \text{proj}_\mathcal{D}(a',b')= \argmin_{(a,b)} (\lVert a-a' \rVert^2 + \lVert b-b' \rVert^2 + I_{\mathcal{D}}(a,b))
    \end{align*}
    where $I_\mathcal{D}$ is an indication. $I_\mathcal{D}(a,b)$ is 0 (resp. $\infty$) if $(a,b)\in{}\mathcal{D}$ (resp. $\notin$).
\end{definition}

Based on Definition~\ref{def:qp}, we can establish that the projection problem is a convex \ac{QP}.
\begin{proposition}\label{prop:qp}
    The projection defined in Definition~\ref{def:qp} is a convex \ac{QP} problem.
    \begin{align*}
        \min_{a,b}  &\quad \sum_{i=1}^n\left(a_i^2 - 2a_i'a_i\right) + \sum_{i=1}^m\left(b_i^2 - 2b_i'b_i\right)\\
        \text{s.t.} &\quad -1 \le a_i \le 1, \forall i  \\
                    &\quad \sum_{k=1}^m q_{j,k}b_{k} - q_{j,0} \bowtie 0, \forall \text{unit atomic constraint}
    \end{align*}
    where a unit atomic constraint refers to a constraint that only has an atomic constraint without propositional variables.
\end{proposition}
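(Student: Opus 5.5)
The plan has two parts. First, identify the feasible set $\mathcal{D}$ explicitly. Second, show that the indicator-penalized objective of Definition~\ref{def:qp} becomes the stated quadratic program with convex objective and polyhedral constraints.

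For the first step, I take $\mathcal{D}$ to be the set of points $(a,b)$ that the solver is allowed to occupy. This means the box $[-1,1]^n$ for the relaxed Boolean variables, intersected with the set of $b\in\mathbb{R}^m$ satisfying every unit atomic constraint. The justification for the second factor is that a constraint $c$ consisting of a single atom $\alpha_j$ and no propositional variables must hold in any satisfying assignment, by Theorem~\ref{thm:reduction}. It can therefore be imposed directly as a hard constraint on $b$ rather than penalized in $\mathcal{C}_\sigma$. Each such atom has the form $\sum_k q_{j,k}b_k-q_{j,0}\bowtie 0$, which is linear in $b$. Hence $\mathcal{D}$ is an intersection of finitely many half-spaces, hyperplanes, and box constraints, so it is a convex polyhedron.

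For the second step, minimizing $\lVert a-a'\rVert^2+\lVert b-b'\rVert^2+I_{\mathcal{D}}(a,b)$ is the same as minimizing $\lVert a-a'\rVert^2+\lVert b-b'\rVert^2$ subject to $(a,b)\in\mathcal{D}$. I expand the squares and drop the constants $\lVert a'\rVert^2+\lVert b'\rVert^2$, which do not affect the argmin. This gives exactly $\sum_i(a_i^2-2a_i'a_i)+\sum_i(b_i^2-2b_i'b_i)$. The Hessian is $2I\succ 0$, so the objective is strictly convex, and the constraints are linear. Therefore the problem is a convex QP. Strict convexity together with a closed convex nonempty feasible set also gives a unique projection, which is worth noting.

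The one delicate point is the strict inequalities $<$ and $>$ among the $\bowtie$ relations. With these, $\mathcal{D}$ is convex but not closed, so the minimum may fail to be attained. I would handle this as is standard in LRA: replace each strict inequality by its non-strict closure, or equivalently by a tightened version $\ge \epsilon$ for a small $\epsilon>0$, so that the feasible region becomes a closed polyhedron. Convexity is preserved in either case, and the equality constraints are harmless. This is the only step that needs care; the rest is routine.
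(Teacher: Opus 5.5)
Your proof is correct and follows essentially the same route as the paper: expand the squared distance up to a constant, note that the Hessian is a positive multiple of the identity so the objective is convex, and observe that the box and unit-atom constraints are linear, so their intersection is convex. Your additional remarks go beyond the paper's proof and are sound: the projection is unique, and strict relations $<,>$ must be replaced by their closure (or a tightened version) for the minimum to be attained.
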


Substituting Eq.~\eqref{eq:pg2} into Eq.~\eqref{eq:pg1} yields the \emph{projected gradient mapping} $g$:
\begin{equation}\label{eq:grad_map}
    g(a, b) = \frac{1}{\eta} ((a, b) - \text{proj}_\mathcal{D}((a, b) - \eta \nabla \mathcal{C}_\sigma(a, b))).
\end{equation}

\subsection*{Annealing with Weight Adaptation}
\begin{algorithm}[h]
\caption{CLS with Annealing}
\label{alg:annealing}
\KwIn{A constraint set $C$, initial constraint weights $w_c$'s, an initial point $(a, b)$, a decay factor $\beta$, a scaling base $\gamma$, a update iteration $\tau$, an annealing schedule $\{\sigma_1, \sigma_2, \ldots, \sigma_T\}$}
\KwOut{Result \textsc{SAT} or \textsc{Unknown}}
Initialize $h_c\gets 0$ for all constraint $c\in{}C$\;
\For{$\sigma_t \in \{\sigma_1, \sigma_2, \ldots, \sigma_T\}$}{
    Local minimum $(a, b) \gets \texttt{ProjGradDesc}(a, b, \sigma_t, w)$\;
    Rounded assignment $(x, y) \gets (\texttt{sgn}(a), b)$\;
    $\#$\textsc{Unsat} $\gets0$\;
    \For{$c\in{}C$}{
        $u_c \gets \frac{1}{2}f_c(x, y) + \frac{1}{2}$\;
        $h_c \gets \rho\cdot h_c + u_c$\;
        $\#$\textsc{Unsat}$\gets\#$\textsc{Unsat}$ + u_c$\;
        \If{$t\bmod \tau = 0$}{
            $w_c \gets w_c \cdot \gamma ^ {h_c}$\tcp*[r]{update weight}
            $h_c \gets 1$\tcp*[r]{reset history}
        }
    }
    \If{$\#$\textsc{Unsat} $= 0$}{
        \Return \textsc{SAT}\;
    }
}
\Return \textsc{Unknown}\;
\end{algorithm}
\textsc{FourierSMT} integrates an annealing strategy with an adaptive clause-weighting scheme based on exponential recency-weighted averaging (ERWA), as outlined in Algorithm~\ref{alg:annealing}.
The optimization proceeds in a smoothed objective landscape, where the sampling parameter $\sigma$ is gradually decreased according to a predefined annealing schedule.
At early iterations, a large $\sigma$ yields a highly smoothed objective, enabling efficient descent toward promising regions.
As $\sigma$ decreases, the optimization progressively sharpens around the original combinatorial objective, allowing the algorithm to refine candidate solutions.

To guide the search away from local minima, we maintain dynamic clause weights with ERWA.
At each iteration, for every constraint $c \in C$, \textsc{FourierSMT} checks whether the rounded assignment satisfies $c$. 
The exponentially decayed history $h_c$ is then updated as
\begin{equation*}
    h_c[t+1] = \rho \cdot h_c[t] + \left[f_c(\mathcal{R}(a), b) = 1\right],
\end{equation*}
where $f_c(\mathcal{R}(a)) = 1$ indicates that the rounded assignment of $a$ violates constraint $c$, and $\rho \in (0,1)$ is a decay rate controlling the memory of past violations. 
After a fixed number of iterations $T$, clause weights are updated multiplicatively as
\begin{align*}
    \omega_c[t] &\gets \omega_c[t] \cdot \gamma^{h_c[t]} \\
\end{align*}
with scaling base $\gamma > 1$. 
This update rule exponentially emphasizes recently falsified clauses while attenuating the influence of earlier ones, adaptively biasing the search toward persistent sources of violation.

\textsc{FourierSMT} repeatedly invokes Alg.~\ref{alg:annealing} with different random seeds until either a \textsc{SAT} result is returned or the time limit is reached. 
For the benchmark experiments, we set $\rho = 0.5$, $\gamma = 2$, and $\tau = 1$.

\subsection*{Projected Critical Point}
The $\epsilon$-projected critical point is a concept used in the convergence analysis of projected gradient methods — it refers to a point where the projected gradient mapping has small norm, \emph{i.e.}, the optimization has nearly reached a stationary point within the feasible set.

Formally, in the FourierSMT framework, the projected gradient mapping is defined in Eq.~\eqref{eq:grad_map}. 
Then, a point $(a^*, b^*)$ is said to be an $\epsilon$-projected critical point if
\begin{equation}\label{eq:critical}
    ||g(a^*, b^*)||^2\le\epsilon^2,
\end{equation}
meaning the norm of the projected gradient is smaller than a small threshold $\epsilon$, \emph{i.e.}, the optimization has nearly converged.

\subsection*{Evaluation Metrics}
The PAR-2 score is a standard metric that balances runtime efficiency and robustness by penalizing timeouts:  
\begin{equation}\label{eq:PAR2}
    \text{PAR-2} = \frac{1}{N}\sum_{i=1}^N \Big( t_i \cdot \mathbf{1}[t_i \leq T] + 2T \cdot \mathbf{1}[t_i > T] \Big),
\end{equation}
where $\mathbf{1}(\cdot)$ is the indicator function and $T$ denotes the time limit, set to 1000 seconds in our experiments.

\subsection*{Benchmark Problem Descriptions}
\noindent\textbf{Random Hybrid Constraints} 
The random benchmark consists of cardinality, not-all-equal, and parity constraints, where these constraints are called hybrid constraints. 
WFE-based CLS approaches, \emph{i.e.}, \textsc{FourierSAT}~\cite{kyrillidis2020fouriersat} and \textsc{FastFourierSAT}~\cite{cen2025massively}, achieve performance competitive with SOTA CDCL-based SAT solvers on random hybrid constraint problems.
As \textsc{FourierSMT} is an extended version of \textsc{FourierSAT} that natively supports real-valued variables, we construct the benchmark instances using the following types of constraints.
\begin{align*}
    \text{cardinality}: & \sum_{i=1}^n l_i \le k \\
    \text{not-all-equal}:& \neg\left(l_1=l_2=\cdots=l_n\right) \\
    \text{parity}: & \left(l_1\oplus l_2\oplus\cdots\oplus l_n\right)
\end{align*}
$l_i$ are the literals, which can be propositional variables $x_j$, inequalities with real-valued variables for \textsc{Reals} $\alpha_k$ or their's negation $\neg x_j$, $\neg \alpha_k$. 

For \textsc{FourierSMT}, the coefficients of the \ac{xWFE} are computed following~\cite{kyrillidis2020fouriersat}, as the hybrid constraints are symmetric (For non-symmetric constraints, we can construct the \ac{xBDD} as in~\cite{kyrillidis2021continuous}). 
For the SMT solver competitors, the hybrid constraints are encoded into disjunctive clauses, where each clause is the disjunction of literals.
The cardinality constraints are encoded using the cardinality encoding in PySAT~\cite{imms-sat18,itk-sat24}.
The NAE constraints are encoded using the standard two-clause encoding that eliminates the all-true and all-false assignments.
The XOR constraints use the encoding in~\cite{li2000integrating}.

\noindent\textbf{Scheduling with Continuous Time} 
The scheduling problem is a classical combinatorial optimization problem that involves assigning a set of jobs (or tasks) to a set of workers (or machines, processors, or other resources) subject to various constraints.
Constraint programming solvers, such as Google’s OR-Tools CP-SAT~\cite{perron_et_al:LIPIcs.CP.2023.3}, are highly effective in handling discrete logical constraints and have achieved state-of-the-art performance on many discrete scheduling benchmarks.
There exist numerous variants of the scheduling problem, differing in the characteristics of jobs, resources, and temporal dependencies, each leading to distinct formulations and computational challenges. 
In this study, we focus on continuous-time scheduling problems, where both discrete logical decisions and real-valued timing constraints coexist.
Such problems can be naturally encoded and solved by \ac{SMT} solvers, which integrate logical reasoning with \ac{LRA}.

We encode the problem with the following variables:
\begin{itemize}
    \item Worker index: for each job $j$, we use a variable $X_{id,j}\in\{0,...,n_w-1\}$ to encode which worker it is assigned to.
    $X_{id,j}$ is Booleanized as follows
    \begin{equation*}
        X_{id,j} = \sum_{i=0}^{\log_2(n_w)-1}2^i \cdot [x_{i,j}=True].
    \end{equation*}
    \item Job start time: job $j$ is launched at $y_j\in\mathbb{R}$.
\end{itemize}

We generate instances by the following parameters. 
\begin{itemize}
    \item Worker start time: worker will start admitting job after $d_{id}\in\mathbb{R}$ time, which is sampled from $\mathcal{U}(0,1)$. 
    \item Job run time: to complete job $j$ need $t_j\in\mathbb{R}$ time, which is sampled from $\mathcal{U}(0,1)$. 
    \item Cutoff time: all workers will stop after $d_{id}+T$ time, where $T$ is a solution from a greedy heuristic to ensure feasibility.
\end{itemize}

The problems include the following constraints
\begin{itemize}
    \item Task dependency: if the input of a job $j$ depends on the output of another job $j'$, then $j$ has to be executed after $j'$. 
    \begin{equation*}
        y_j \geq y_{j'} + t_{j'}
    \end{equation*}
    \item Non-overlap: one worker can run only one job at a time. 
    Hence, for every pair of jobs $(j, j')$, 
    \begin{align*}
        &\left(x_{0,j}\oplus{}x_{0,j'}\right)\vee \ldots \vee \left(x_{\log_2(n_w)-1,j}\oplus{}x_{\log_2(n_w)-1,j'}\right) \\
        &\vee (y_j - y_{j'} \geq t_{j'}) \vee (y_{j'} - y_j \geq t_j).
    \end{align*}
    \item Feasibility: the job can only run on a worker if the worker is active. 
    Hence, for each job $j$, 
    \begin{align*}
        &\left(\neg x_{0,j}\right) \vee \ldots \vee \left(\neg x_{\log_2(n_w)-1,j}\right) \\
        &\vee (y_j \geq d_{id}) \vee (y_j \leq d_{id} + T - t_j).
    \end{align*}
\end{itemize}

\noindent\textbf{3D Placement}. 
The placement problem is another canonical combinatorial problem, widely studied in electronic design automation (EDA).
It involves assigning a set of components (or modules) to specific spatial locations on a chip, board, or layout region while satisfying geometric and connectivity constraints.
In integrated circuit and 3D chiplet design, placement determines the exact coordinates of macros and standard cells to minimize wirelength, delay, and congestion, while avoiding overlaps among modules.
Traditional approaches in the EDA design flow are mostly heuristic approaches~\cite{lin2019dreamplace}. 
Recent advances in SMT have enabled efficient modeling of placement problems by combining Boolean logic for non-overlap and connectivity constraints with linear real arithmetic for geometric relationships~\cite{lee2020sp}.
In this study, we evaluate a 3D placement problem, where modules are distributed across multiple layers and interconnected through through-silicon vias, posing additional spatial and connectivity challenges.

The placement of a module is represented by Booleanized indices and real-valued coordinates:  
\begin{itemize}
    \item Macro index: for each module $j$, a variable $M_{id,j} \in \{0,\dots,n_m-1\}$ encodes the macro assignment, Booleanized as    
    \begin{equation*}
        M_{id,j} = \sum_{i=0}^{\log_2(n_m)-1}2^i \cdot [m_{i,j}=True].
    \end{equation*}
    \item Layer index: a variable $L_{id,j} \in \{0,\dots,n_\ell-1\}$ encodes the assigned layer, Booleanized as
    \begin{equation*}
        L_{id,j} = \sum_{i=0}^{\log_2(n_l)-1}2^i \cdot [l_{i,j}=True].
    \end{equation*}
    \item Coordinates: each module $j$ has continuous position variables $(x_j,y_j) \in \mathbb{R}^2$.  
\end{itemize}

Each instance contains:  
\begin{itemize}
    \item $n_m$ macros, each occupying $1 \times 1$ nm.  
    \item $n_m$ large process elements ($0.4 \times 0.4$ nm) and $n_m \cdot n_\ell$ small process elements ($0.2 \times 0.2$ nm).  
    \item $n_m$ large memory elements ($0.1 \times 0.1$ nm) and $\tfrac{n_m \cdot n_\ell}{2}$ small memory elements ($0.1 \times 0.05$ nm).  
\end{itemize}  

Randomization is applied by alternating the alignment of small memory elements along the $x$- or $y$-axis.
The dimensions of each module are denoted by $w_j$ (width along $x$) and $d_j$ (height along $y$).

The problems include the following constraints
Three classes of constraints are enforced:  
\begin{itemize}
    \item Routing-aware: each large process is paired with a large memory, and each small process with two small memories. 
    For associated modules $(j,j')$ and $\forall i\in[0,\log_2(n_m)-1]$,  
    \begin{align*}
        \Big\{&\left(m_{i,j}=m_{i,j'}\right),\quad (x_j - x_{j'} \le w_{j'}),\quad (x_{j'} - x_j \le w_j) \\
        &(y_j - y_{j'} \le h_{j'}), \quad (y_{j'} - y_j \le h_j)\Big\}
    \end{align*}
    \emph{i.e.}, they should be placed in the same macro and should be adjacent to each other. 
    Such that the length of the metal wire can be reduced in the preceding routing stage. 
    \item Non-overlap: modules sharing the same macro and layer must not overlap. For pairs $(j,j')$:  
    \begin{align*}
        & \left(m_{0,j}\oplus{}m_{0,j'}\right) \vee \left(m_{\log_2(n_m)-1,j}\oplus{}m_{\log_2(n_m)-1,j'}\right)\\
        & \left(l_{0,j}\oplus{}l_{0,j'}\right) \vee \left(l_{\log_2(n_l)-1,j}\oplus{}l_{\log_2(n_l)-1,j'}\right)\\
        &\vee (x_j - x_{j'} \ge w_{j'}) \vee (x_{j'} - x_j \ge w_j) \\
        &\vee (y_j - y_{j'} \ge h_{j'}) \vee (y_{j'} - y_j \ge h_j)
    \end{align*}
    \item Feasibility: all modules must remain within the macro boundary, 
    \begin{equation*}
        \left\{(x_j \ge 0), (x_j \le 1 - w_j), (y_j \ge 0), (y_j \le 1 - d_j)\right\}
    \end{equation*}
\end{itemize}

\subsection*{Experimental Setup}
All experiments were conducted on a high-performance computing node equipped with 2$\times$ AMD EPYC 9654 CPUs, 24$\times$96 GB of DRAM, and 8$\times$ NVIDIA L40S GPUs.
Baseline solvers were permitted to use up to 64 threads of the CPUs. 
Because SAT solvers are highly memory-intensive due to clause database management, naive parallelization via multi-threaded racing, without inter-thread clause sharing, typically yields limited performance gains. 
Consequently, \textsc{MathSAT5}, which lacks multithreading support, was executed in single-threaded mode.
Our proposed solver, \textsc{FourierSMT}, leverages GPU acceleration. 
Eight independent replicas were executed across the eight available GPUs, and we report the median runtimes to provide more robust results.
A timeout of 1000 seconds was imposed per benchmark instance for all solvers.

\section*{Data Availability}
The benchmark instances that support the results of this study are available from the corresponding author upon request. 

\section*{Code availability}
The algorithms are described in the main text and the supplementary notes.
The code is available from the corresponding author upon request. 

\section*{Acknowledgements}
This work was supported in part by the National Research Foundation under the Prime Minister’s Office, Singapore, through the Competitive Research Programme (project ID NRF-CRP24-2020-0002 and NRF-CRP24-2020-0003); in part by the Ministry of Education, Singapore, through the Academic Research Fund Tier 2 (project ID MOE-T2EP50221-0008); and in part by the National University of Singapore, through the Microelectronics Seed Fund (FY2024).

\section*{Author contributions}
Y.C. conceived the research idea, developed the theoretical framework, and implemented the methodology.
X.F. supervised the project and provided conceptual guidance.
Y.C. and D.E. designed the experiments, and X.F. configured the hardware environment for experimental evaluations.
All authors analyzed the experimental results and wrote the manuscript. 

\section*{Competing interests}
The authors declare no competing interests.

\section*{Correspondence}
Correspondence and requests for materials should be addressed to Yunuo Cen and Xuanyao Fong.

\bibliography{apssamp}

\appendix
\onecolumngrid
\clearpage

\renewcommand{\appendixname}{Supplementary Note}
\makeatother

\appendix
\setcounter{section}{0}
\renewcommand{\thesection}{\arabic{section}}
\renewcommand{\thesubsection}{\thesection.\arabic{subsection}}

\section{Walsh-Fourier Expansion}\label{note:WFE}
In this section, we introduce the Walsh-Fourier expansion (WFE, Thm.~\ref{thm:wfe}) and further extend it for adapting real variables (xWFE, Cor.~\ref{cor:wfe}).
The results in this section lead to Eq.~\eqref{eq:general-WFE} in the main text. 

\begin{theorem}[Walsh-Fourier Expansion~\cite{o2014analysis}]\label{thm:wfe}
    Given a function $f:\{\pm{}1\}^n\to\mathbb{R}$, there is a unique way of expressing $f$ as a multilinear polynomial, where each term corresponds to a subset of $[n]$, according to:
    \begin{equation*}
        f(x)=\sum_{S\subseteq[n]}\left(\hat{f}(S)\cdot\prod_{i\in{}S}x_i\right)
    \end{equation*}
    where $\hat{f}(S)\in\mathbb{R}$ is called Walsh-Fourier coefficient, given $S$ and computed as:
    \begin{equation*}
        \hat{f}(S)=\mathop{\mathbb{E}}_{x\sim\{\pm1\}^n}\left(f(x)\cdot\prod_{i\in{}S}x_i\right)
    \end{equation*}
\end{theorem}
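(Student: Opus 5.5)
The plan is to prove the theorem in two parts. First, every $f:\{\pm1\}^n\to\mathbb{R}$ admits a multilinear representation of the stated form with the stated coefficients. Second, that representation is unique. The central object is the vector space $V$ of real functions on $\{\pm1\}^n$, which has dimension $2^n$. We equip it with the inner product $\langle f,g\rangle=\mathbb{E}_{x\sim\{\pm1\}^n}[f(x)g(x)]$, where $x$ is uniform on the hypercube. For each $S\subseteq[n]$ we use the monomial (character)
\[
\chi_S(x)=\prod_{i\in S}x_i ,
\]
with $\chi_\emptyset\equiv1$.

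\textbf{Step 1: the characters are orthonormal.} Since $x_i^2=1$ on the hypercube, $\chi_S\chi_T=\chi_{S\triangle T}$. So it suffices to show that $\mathbb{E}[\chi_U]=0$ whenever $U\neq\emptyset$. This follows from independence of the coordinates under the uniform distribution: $\mathbb{E}[\chi_U]=\prod_{i\in U}\mathbb{E}[x_i]=0$. Hence $\langle\chi_S,\chi_T\rangle=1$ if $S=T$ and $0$ otherwise.

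\textbf{Step 2: the characters form a basis.} Orthonormal vectors are linearly independent. There are exactly $2^n=\dim V$ of them, so they form a basis of $V$. Any $f\in V$ therefore has a unique expansion $f=\sum_S c_S\chi_S$.

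\textbf{Step 3: identify the coefficients.} Take the inner product of the expansion with $\chi_S$. Orthonormality gives $c_S=\langle f,\chi_S\rangle=\mathbb{E}_x[f(x)\prod_{i\in S}x_i]$, which is exactly $\hat f(S)$.

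\textbf{Step 4: uniqueness as a multilinear polynomial.} Two multilinear polynomials that agree on $\{\pm1\}^n$ define the same element of $V$. By the basis property their coefficient vectors must then coincide, so the polynomials are equal.

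If one wants to avoid inner products for existence, there is an alternative route. Write $f$ as an interpolation over points, $f(x)=\sum_{a}f(a)\prod_i\frac{1+a_ix_i}{2}$; expanding the products shows directly that $f$ is multilinear. Uniqueness then follows from the dimension count above.

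\textbf{Main obstacle.} The only genuinely substantive point is Step 1, specifically the vanishing of $\mathbb{E}[\chi_U]$ for nonempty $U$. It rests on the product structure of the uniform measure, and it is what forces all cross terms to cancel. Everything else is linear algebra together with the identity $x_i^2=1$.
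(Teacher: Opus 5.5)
Your proof is correct: orthonormality of the characters $\chi_S$ under the uniform measure, the dimension count $2^n$, and reading off $\hat f(S)=\langle f,\chi_S\rangle$ establish both the expansion and its uniqueness, and the interpolation identity $f(x)=\sum_a f(a)\prod_i\frac{1+a_ix_i}{2}$ is a valid alternative for existence. The paper gives no proof of this statement and only cites O'Donnell's textbook; your argument is essentially the standard proof found there.
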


To extend WFE-based CLS methods to SMT(LRA), we generalize Thm.~\ref{thm:wfe} as follows.
\begin{corollary}[Extended Walsh-Fourier Expansion, xWFE]\label{cor:wfe}
    Let $c$ be a constraint consisting of $n$ propositional variables and $k$ atoms.
    Then $c$ can be represented as a function $f_c : \{\pm1\}^n \times \mathbb{R}^m \to \{\pm1\}$, where $-1$ denotes \textsc{True} and $+1$ denotes \textsc{False}.
    There is a unique way of expressing $f_c$ as a piecewise multilinear polynomial, where each term corresponds to a pair of subsets of $[n]$ and $[k]$, according to:
    \begin{equation*}
        f_c\left(x, y\right)=\sum_{\substack{S\subseteq[n] \\ T\subseteq[k]}}\left(\hat{f}_c(S,T)\cdot\prod_{i\in{}S}x_i\cdot\prod_{i\in{}T}\delta_i(y)\right)
    \end{equation*}
    where $\hat{f}(S,T)\in\mathbb{R}$ is called Walsh-Fourier coefficient, given $S$, $T$, and computed as:
    \begin{equation*}
        \hat{f}_c(S,T)=\mathop{\mathbb{E}}_{\substack{x\sim\{\pm 1\}^n \\ \delta(y)\sim\{\pm 1\}^k}}
        \left(f_c(x,y)\cdot\prod_{i\in{}S}x_i \cdot \prod_{i\in{}T}\delta_i(y)\right)
    \end{equation*}
\end{corollary}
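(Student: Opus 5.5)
The plan is to reduce Corollary~\ref{cor:wfe} to Theorem~\ref{thm:wfe} by factoring $f_c$ through the Boolean abstraction of its atoms. Since $c$ is a Boolean combination of $n$ propositional variables and $k$ atoms $\alpha_1,\dots,\alpha_k$, the value $f_c(x,y)$ depends on $y$ only through the indicator vector $\delta(y)=(\delta_1(y),\dots,\delta_k(y))\in\{\pm1\}^k$. I would first make this precise by defining the propositional skeleton $g_c:\{\pm1\}^{n}\times\{\pm1\}^{k}\to\{\pm1\}$, obtained by replacing each atom $\alpha_i$ with a fresh Boolean variable $z_i$. With the paper's convention that $-1$ means \textsc{True} and $+1$ means \textsc{False}, this gives
\[
f_c(x,y)=g_c\bigl(x,\delta(y)\bigr)\qquad\text{for all }(x,y)\in\{\pm1\}^n\times\mathbb{R}^m .
\]
This follows by a straightforward structural induction on the formula $c$.

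Next I would apply Theorem~\ref{thm:wfe} to $g_c$, viewed as a function on $\{\pm1\}^{n+k}$. The subsets of $[n+k]$ correspond bijectively to pairs $(S,T)$ with $S\subseteq[n]$ and $T\subseteq[k]$. The theorem therefore gives a unique multilinear expansion
\[
g_c(x,z)=\sum_{S,T}\hat g_c(S,T)\prod_{i\in S}x_i\prod_{i\in T}z_i ,
\]
with coefficients
\[
\hat g_c(S,T)=\mathbb{E}_{x,z}\Bigl[g_c(x,z)\prod_{i\in S}x_i\prod_{i\in T}z_i\Bigr].
\]
Substituting $z=\delta(y)$ produces the claimed expansion. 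The result is piecewise multilinear because each $\delta_i$ is piecewise constant on the regions cut out by the atoms. Setting $\hat f_c(S,T):=\hat g_c(S,T)$ recovers the stated coefficient formula, where the expectation over ``$\delta(y)\sim\{\pm1\}^k$'' is read as the uniform expectation over the skeleton variables $z$.

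The main obstacle is uniqueness. Some sign patterns $z\in\{\pm1\}^k$ may not be realizable as $\delta(y)$ for any $y$; for example, the atoms $y\le0$ and $y>0$ cannot both be false. In that case many different coefficient vectors represent the same function of $(x,y)$. I would handle this by stating uniqueness at the level of the skeleton: the coefficients are unique among expansions in the formal variables $(x,z)$ that agree with $g_c$ on all of $\{\pm1\}^{n+k}$. Uniqueness then follows from the linear independence of the characters $\prod_{i\in S}x_i\prod_{i\in T}z_i$, as in Theorem~\ref{thm:wfe}.

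I would also add a remark on the realizable case. When the map $y\mapsto\delta(y)$ is onto $\{\pm1\}^k$ (for instance, when the atoms are in general position), uniqueness holds directly as functions of $(x,y)$, since the expansion is then determined by its values at every Boolean point $(x,z)$.
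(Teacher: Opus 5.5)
Your proposal is correct and follows the paper's proof: both lift $f_c$ to a Boolean function on $\{\pm1\}^{n+k}$ through $z=(x,\delta(y))$, apply Theorem~\ref{thm:wfe}, and split each $U\subseteq[n+k]$ into a pair $(S,T)$. Your explicit skeleton $g_c$ and your restriction of uniqueness to the formal variables $(x,z)$ are a genuine sharpening: the paper's $\tilde f_c(z)=f_c(x,y)$ is only defined on realizable sign patterns, and uniqueness as functions of $(x,y)$ does fail when some patterns are unrealizable. One correction to your closing remark: general position alone does not make $\delta$ onto $\{\pm1\}^k$ once $k>m$, whereas linear independence of the normals $q_i$ (which forces $k\le m$) is a correct sufficient condition.
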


\begin{proof}[Proof of Corollary~\ref{cor:wfe}]
A constraint $c$ consists of at most $n$ propositional variables and $k$ atomic constraints. 
Let $z=\left(x,\delta(y)\right)$
\begin{equation}\label{eq:sup-composite}
    \tilde{f}_c(z)=f_c(x, y)
\end{equation}
be a Boolean function $\tilde{f}_c:\{\pm1\}^{n+k}\to\{\pm1\}$. 
By applying Theorem~\ref{thm:wfe} to $\tilde{f}_c$, we have
\begin{equation*}
    \tilde{f}_c(z) = \sum_{U\subseteq[n+k]}\left(\hat{f}_c(U)\cdot\prod_{i\in U}z_i\right)
\end{equation*}
with coefficient
\begin{equation*}
    \hat{f}(U)=\mathop{\mathbb{E}}_{z\sim\{\pm1\}^n}\left(\tilde{f}(z)\cdot\prod_{i\in{}S}z_i\right).
\end{equation*}

By substituting the above equations with Eq.~\eqref{eq:sup-composite}, we have
\begin{align*}
    f_c\left(x, y\right)
    & = \sum_{U\subseteq[n+k]}\left(\hat{f}_c(U)\cdot\prod_{i\in U}z_i\right) \\
    & =\sum_{\substack{S\subseteq[n] \\ T\subseteq[k]}}\left(\hat{f}_c(S,T)\cdot\prod_{i\in{}S}x_i\cdot\prod_{i\in{}T}\delta_i(y)\right)
\end{align*}
where $x_i=z_i$ and $\delta_i(y) = z_{i+n}$, and
\begin{equation*}
    \hat{f}_c(S,T)=\mathop{\mathbb{E}}_{\substack{x\sim\{\pm 1\}^n \\ \delta(y)\sim\{\pm 1\}^k}}
    \left(f_c(x,y)\cdot\prod_{i\in{}S}x_i \cdot \prod_{i\in{}T}\delta_i(y)\right)
\end{equation*}
\end{proof}
\newpage

\section{Expectation}\label{note:expectation}
In Supplementary Note~\ref{note:WFE}, we extend the WFE to adapt real variables in SMT.
The extension is called xWFE and is still discrete in nature. 
The objective function in the \ac{CLS} framework is the expectation of xWFEs. 
The results in this section lead to Eq.~\eqref{eq:expectation} and Eq.~\eqref{eq:expectation-Fw} in the main text.
Furthermore, we mathematically derive the analytical expression of $d_i(b)$.

\begin{lemma}\label{lmm:expectation}
    Let $f_c$ denote the WFE of a constraint $c$.  
    For any real point $(a, b) \in [-1,1]^n \times \mathbb{R}^m$, the expectation of $f_c$ with respect to the distribution $\mathcal{S}_a$ is
    \begin{align*}\begin{split}
         \mathop{\mathbb{E}}_{x\sim\mathcal{S}_a}[f_c(x,b)] 
         =& f_c(a,b) \\
         =&\sum_{\substack{S\subseteq[n] \\ T\subseteq[k]}}\left(\hat{f}_c(S,T)\cdot\prod_{i\in{}S}a_i\cdot\prod_{i\in{}T}\delta_i(b)\right).
    \end{split}\end{align*}
\end{lemma}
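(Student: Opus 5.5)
The plan is to fix $b\in\mathbb{R}^m$ throughout. Then each indicator $\delta_i(b)\in\{\pm1\}$ is a deterministic constant, and only the propositional part $x$ is random. By Corollary~\ref{cor:wfe}, for every $x\in\{\pm1\}^n$,
\begin{equation*}
f_c(x,b)=\sum_{\substack{S\subseteq[n] \\ T\subseteq[k]}}\hat{f}_c(S,T)\cdot\prod_{i\in S}x_i\cdot\prod_{i\in T}\delta_i(b).
\end{equation*}
With $b$ fixed, this is a multilinear polynomial in $x$ whose coefficients $\hat{f}_c(S,T)\prod_{i\in T}\delta_i(b)$ are constants. We then take the expectation over $x\sim\mathcal{S}_a$ and use linearity of expectation. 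This reduces the claim to computing $\mathbb{E}_{x\sim\mathcal{S}_a}\bigl[\prod_{i\in S}x_i\bigr]$ for each $S\subseteq[n]$.

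Next we compute this monomial expectation. By Eq.~\eqref{eq:rounding}, randomized rounding acts coordinatewise and independently, so $\mathcal{S}_a(x)=\prod_{i=1}^n\mathbb{P}[\mathcal{R}(a)_i=x_i]$ is a product distribution. Independence then gives
\begin{equation*}
\mathbb{E}\Bigl[\prod_{i\in S}x_i\Bigr]=\prod_{i\in S}\mathbb{E}[x_i],
\end{equation*}
and each factor is
\begin{equation*}
\mathbb{E}[x_i]=(+1)\cdot\tfrac{1+a_i}{2}+(-1)\cdot\tfrac{1-a_i}{2}=a_i.
\end{equation*}
For $S=\emptyset$ the empty product is $1$ on both sides. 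Substituting back yields the displayed sum $\sum_{S,T}\hat{f}_c(S,T)\prod_{i\in S}a_i\prod_{i\in T}\delta_i(b)$.

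Finally we identify this sum with $f_c(a,b)$. Here $f_c(a,b)$ is understood as the multilinear (piecewise in $y$) polynomial of Corollary~\ref{cor:wfe} evaluated at $x=a\in[-1,1]^n$, that is, its canonical multilinear extension. The two expressions therefore agree term by term.

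The only delicate point is the independence of the coordinates of $\mathcal{R}(a)$. The statement leaves it implicit, but Eq.~\eqref{eq:rounding} specifies only marginals. I would state explicitly that $\mathcal{R}$ rounds each coordinate independently; this is the intended meaning of $\mathcal{S}_a$. Everything else is linearity of expectation.
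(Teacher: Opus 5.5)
Your proposal is correct and follows the paper's proof essentially step for step: expand $f_c(x,b)$ via the xWFE, exchange the sum over $x$ with the sum over $(S,T)$, and use $\mathbb{E}_{x\sim\mathcal{S}_a}[\prod_{i\in S}x_i]=\prod_{i\in S}a_i$. Your explicit remark that this step requires the coordinates of $\mathcal{R}(a)$ to be rounded independently is an assumption the paper uses without stating it, so spelling it out is a genuine improvement.
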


\begin{proof}[Proof of Lemma~\ref{lmm:expectation}]
\begin{align*}
      \mathop{\mathbb{E}}_{x\sim\mathcal{S}_a}[f_c(x, y)] 
     =& \sum_{x\in\{\pm1\}^n}f_c(x,\delta(y))\cdot\mathcal{S}_a(x)\\
     =& \sum_{x\in\{\pm1\}^n}\sum_{\substack{S\subseteq[n] \\ T\subseteq[k]}}\left(\hat{f}(S,T)\cdot\prod_{i\in{}S}x_i \cdot\prod_{i\in{}T}\delta_i(y)\right)\cdot \mathcal{S}_a(x)\\
     =& \sum_{\substack{S\subseteq[n] \\ T\subseteq[k]}} \hat{f}(S,T) 
     \cdot \sum_{x\in\{\pm1\}^n} \left(\prod_{i\in{}S} x_i\cdot\mathcal{S}_a(x)\right)
     \cdot \left(\prod_{i\in{}T}\delta_i(y)\right)  \\
     =& \sum_{\substack{S\subseteq[n] \\ T\subseteq[k]}} \hat{f}(S,T) 
     \cdot \mathop{\mathbb{E}}_{x\sim\mathcal{S}_a}\left[\prod_{i\in{}S} x_i\right]
     \cdot \left(\prod_{i\in{}T}\delta_i(y)\right)  \\
     =& \sum_{\substack{S\subseteq[n] \\ T\subseteq[k]}} \left(\hat{f}(S,T) 
     \cdot \prod_{i\in{}S} a_i
     \cdot \prod_{i\in{}T}\delta_i(y)\right)  \\
     =& f_c(a, y)
\end{align*}
\end{proof}

$d_i(b)$ is the expectation of $\delta_i(y)$ given $y\sim\mathcal{N}(b, \sigma^2I)$.
When $\delta_i(y)$ corresponds to a linear constraint of the form $q_i^\top y \leq q_{i,0}$, it can be written as
\begin{align*}
    d_i(b)
    &=-\mathop{\mathbb{P}}_{y\sim\mathcal{N}(b,\sigma^2I)}\left[\sum_{j=1}^m q_{i,j}y_j - q_{i,0}\le 0 \right] +\mathop{\mathbb{P}}_{y\sim\mathcal{N}(b,\sigma^2I)}\left[\sum_{j=1}^m q_{i,j}y_j - q_{i,0}> 0 \right]\\
    &=1-2\mathop{\mathbb{P}}_{y\sim\mathcal{N}(b,\sigma^2I)}\left[\sum_{j=1}^m q_{i,j}y_j - q_{i,0}\le 0 \right]
\end{align*}
Let $z=\sum_{j=1}^m q_{i,j}b_j - q_{i,0}$ and $s^2=\sigma^2\sum_{j=1}^m q_{i,j}^2$.
\begin{align*}
    d_i(z)
    &=1-2\mathop{\mathbb{P}}_{z'\sim\mathcal{N}(z,s^2)}\left[z'\le 0 \right]\\
    &=1-2\Phi\left(\frac{-z}{s}\right)\\
    &=1-2\cdot\left(\frac{1}{2}\left(1+\mathrm{erf}\left(\frac{-z}{s\sqrt{2}}\right)\right)\right)\\
    &=\mathrm{erf}\left(\frac{z}{s\sqrt{2}}\right)
\end{align*}
where $\Phi$ is the standard cumulative distribution function, and $\mathrm{erf}$ is the error function.
By substituting $b$ and $\sigma$ back to the equation, we have
\begin{align*}
    d_i(b)&=\mathrm{erf}\left(
        \frac{\sum_{j=1}^m q_{i,j}b_j-q_{i,0}}{\sqrt{2\sum_{j=1}^m q_{i,j}^2}\sigma}
    \right)
\end{align*}

And the differentiation with respect to $b_j$ is
\begin{align*}
    \frac{\partial d_i(b)}{\partial b_j} = \frac{\sqrt{2} q_{i,j}}{\sqrt{\pi\sum_{j=1}^m q_{i,j}^2}\sigma}
    \mathrm{exp}\left(
        -\frac{\left(\sum_{j=1}^m q_{i,j}b_j-q_{i,0}\right)^2}{\sqrt{2\sum_{j=1}^m q_{i,j}^2}\sigma}.
    \right)
\end{align*}

Theorem~\ref{thm:expectation} is the consequence of Lemma~\ref{lmm:expectation} with Gaussian sampling $y\sim\mathcal{N}(b, \sigma^2I)$.

\begin{theorem}\label{thm:expectation}
    For any point $(a,b) \in [-1,1]^n \times \mathbb{R}^m$, the expectation of $f_c(x,y)$ under $x \sim \mathcal{S}_a$ and $y \sim \mathcal{N}(b,\sigma^2 I)$ is
    \begin{align*}
        \mathop{\mathbb{E}}_{\substack{x\sim\mathcal{S}_a \\ y\sim\mathcal{N}(b,\sigma^2I)}} [f_c(x, y)] 
        =\sum_{\substack{S\subseteq[n] \\ T\subseteq[k]}}\left(\hat{f}_c(S,T)\cdot\prod_{i\in{}S}a_i\cdot\prod_{i\in{}T}d_i(b)\right).
    \end{align*}
    Moreover, when $\sigma^2 = 0$, the expectation reduces to
    \begin{align*}
        \mathop{\mathbb{E}}_{\substack{x\sim\mathcal{S}_a \\ y\sim\mathcal{N}(b,\sigma^2I)}} [f_c(x, y)] 
         &= f_c(a,b) 
    \end{align*}
\end{theorem}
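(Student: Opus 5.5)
We plan to prove Theorem~\ref{thm:expectation} by starting from the xWFE of Corollary~\ref{cor:wfe} and exploiting independence between the Boolean sampling and the Gaussian sampling. Since $x\sim\mathcal{S}_a$ and $y\sim\mathcal{N}(b,\sigma^2I)$ are drawn independently, we first write
\begin{equation*}
\mathop{\mathbb{E}}_{x,y}[f_c(x,y)]=\sum_{\substack{S\subseteq[n]\\ T\subseteq[k]}}\hat{f}_c(S,T)\cdot\mathop{\mathbb{E}}_{x\sim\mathcal{S}_a}\Big[\prod_{i\in S}x_i\Big]\cdot\mathop{\mathbb{E}}_{y\sim\mathcal{N}(b,\sigma^2I)}\Big[\prod_{i\in T}\delta_i(y)\Big],
\end{equation*}
using linearity of expectation over the finitely many terms. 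The Boolean factor is handled exactly as in the proof of Lemma~\ref{lmm:expectation}. Under $\mathcal{S}_a$ the coordinates are independent with $\mathbb{E}[x_i]=a_i$, so this factor equals $\prod_{i\in S}a_i$.

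The main obstacle is the real factor. The indicators $\delta_i(y)$ for different atoms all depend on the same Gaussian vector $y$, so they are in general \emph{not} independent. For example, two atoms sharing a variable are correlated, and $\mathbb{E}[\prod_{i\in T}\delta_i(y)]$ need not equal $\prod_{i\in T}d_i(b)$. I would therefore make explicit the modelling assumption under which the claimed identity holds: each atom $\alpha_i$ is smoothed by its own independent Gaussian draw $y^{(i)}\sim\mathcal{N}(b,\sigma^2I)$. This is the "sampling per atom" interpretation used in Algorithm~\ref{alg:cls}, where each atom gets an independent probability $p_{i+n}$. The statement should be read with $y$ standing for the tuple $(y^{(1)},\dots,y^{(k)})$.

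Under that assumption, independence gives $\mathbb{E}[\prod_{i\in T}\delta_i(y^{(i)})]=\prod_{i\in T}\mathbb{E}[\delta_i(y^{(i)})]=\prod_{i\in T}d_i(b)$, where $d_i(b)$ is the erf expression computed above. Substituting both factors yields the first formula. If one insists on a single shared $y$, the identity holds only when the atoms in each $T$ involve disjoint variable sets, or in the limit case treated next.

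For $\sigma^2=0$ the Gaussian is the point mass at $b$, so $y=b$ almost surely. Then $\delta_i(y)=\delta_i(b)$ deterministically and no independence issue arises. The expectation reduces to $\mathbb{E}_{x\sim\mathcal{S}_a}[f_c(x,b)]$, which equals $f_c(a,b)$ by Lemma~\ref{lmm:expectation}. Equivalently, one can take the limit $\sigma\to0$ in the erf formula, which gives $d_i(b)\to\mathrm{sgn}(q_i^\top b-q_{i,0})=\delta_i(b)$ away from the boundary hyperplane. Boundary points would need a convention matching the non-strict versus strict inequality $\bowtie$; I would handle them by the point-mass argument rather than the limit.
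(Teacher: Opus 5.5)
Your skeleton matches the paper's: linearity over the xWFE terms, Lemma~\ref{lmm:expectation} for the Boolean factor, then $\prod_{i\in T} d_i(b)$ for the Gaussian factor. At the step you single out, however, you are right and the paper's proof is not.

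The paper simply writes $\int_y \prod_{i\in T}\delta_i(y)\,dy=\prod_{i\in T}\int_y\delta_i(y)\,dy$, and also drops the Gaussian density from the integrand. It thereby tacitly treats the indicators as independent under one shared $y$, which is false in general. Take $m=1$ and $c=\alpha_1\oplus\alpha_2$ with $\alpha_1: y\le 0$ and $\alpha_2: y\le 1$, so that $f_c=\delta_1\delta_2$. At $b=0$, $\sigma=1$ the right-hand side is $d_1(0)d_2(0)=0$, while the left-hand side is $1-2(\Phi(1)-\Phi(0))\approx 0.317$. This is not a corner case: the paper's feasibility and non-overlap clauses pair atoms with anti-parallel normals.

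Your per-atom independent-draw reading is exactly what the xBDD pipeline evaluates. Alg.~\ref{alg:cls}, Alg.~\ref{alg:forward} and Lemma~\ref{lmm:forward} all use independent $p_{i+n}=(1-d_i(b))/2$. So once that interpretation is stated explicitly, your argument is complete and gives the correct form of the theorem.

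Two refinements:

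\textbf{Shared-$y$ condition.} Under a shared $y\sim\mathcal{N}(b,\sigma^2I)$, the forms $q_i^\top y$ are jointly Gaussian with covariance $\sigma^2 q_i^\top q_j$. Term-wise factorization therefore already holds when the normals of the atoms in each $T$ with $\hat f_c(S,T)\neq 0$ are pairwise orthogonal. Disjoint supports are only a special case, and ``only when'' overstates necessity.

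\textbf{The case $\sigma^2=0$.} Your point-mass argument is the right one; the paper's proof never actually writes this part out. Your boundary caveat is also correct: the erf expression is undefined at $\sigma=0$, and on the hyperplane $q_i^\top b=q_{i,0}$ its limit is $\mathrm{erf}(0)=0\neq\delta_i(b)$.
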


\begin{proof}[Proof of Theorem~\ref{thm:expectation}]

$x$ is subject a binary ditribution $\mathcal{S}_a$ and $y$ is subject to Gaussian distribution $\mathcal{N}(b,\sigma^2I)$. 
For the expectation on the joint distribution, we have:
\begin{align*}
    \mathop{\mathbb{E}}_{\substack{x\sim\mathcal{S}_a \\ y\sim\mathcal{N}(b,\sigma^2I)}} [f_c(x, y)] 
     =& \int_{y\in\mathbb{R}^m}\sum_{x\in\{\pm1\}^n}f_c(x,\delta(y))\cdot\mathcal{S}_a(x)\cdot dy \\
     =& \int_{y\in\mathbb{R}^m}\sum_{\substack{S\subseteq[n] \\ T\subseteq[k]}} \left(\hat{f}(S,T) 
     \cdot \prod_{i\in{}S} a_i
     \cdot \prod_{i\in{}T}\delta_i(y)\right)\cdot dy \text{\qquad(Due to Lemma~\ref{lmm:expectation})}\\
     =& \sum_{\substack{S\subseteq[n] \\ T\subseteq[k]}} \left(\hat{f}(S,T) 
     \cdot \prod_{i\in{}S} a_i
     \cdot \prod_{i\in{}T}\int_{y\in\mathbb{R}^m}\delta_i(y)\cdot dy\right)\\
     =& \sum_{\substack{S\subseteq[n] \\ T\subseteq[k]}} \left(\hat{f}(S,T) 
     \cdot \prod_{i\in{}S} a_i
     \cdot \prod_{i\in{}T}\mathop{\mathbb{E}}_{y\sim\mathcal{N}(b,\sigma^2I)}[\delta_i(y)]\right)\\
     =& \sum_{\substack{S\subseteq[n] \\ T\subseteq[k]}} \left(\hat{f}(S,T) 
     \cdot \prod_{i\in{}S} a_i
     \cdot \prod_{i\in{}T} d_i(b)\right)\\
\end{align*}
\end{proof}
\newpage

\section{Decision Diagrams}\label{note:DD}
To explicitly evaluate the WFE of a constraint requires state enumeration. 
This section introduces decision diagrams, an efficient data structure for evaluating the constraints (via the equivalence between the \ac{COP} of the decision diagram and expectation of the \ac{xWFE}, as in Supplementary Note~\ref{note:WMI}). 

\textsc{GradSAT} avoids state enumeration by maintaining compact representations of Boolean functions using binary decision diagrams (BDDs)~\cite{kyrillidis2021continuous}.
A BDD represents Boolean constraints $\mathbb{B}^n \to \mathbb{B}$ as an ordered directed acyclic graph, where each node corresponds to a variable and each edge encodes a branching decision.
Every decision node tests a Boolean variable and has two outgoing edges, corresponding to the true and false outcomes.
To evaluate a BDD, variables are assigned values, and the diagram is traversed according to the corresponding branches until reaching a terminal node.
The Boolean value at the terminal node is the output of the function for that assignment. 

In the extended algebraic decision diagram (xADD) framework~\cite{sanner2011symbolic}, linear programming is often used to prune unreachable nodes (e.g., a branch following $y > 1$ cannot subsequently satisfy $y < 0$), thereby yielding a minimal decision diagram.
Motivated by the xADD, we generalize BDDs to handle continuous variables.
\ac{xBDD} provides a unified representation in which Boolean and real-valued variables are captured within a decision diagram through piecewise case statements.
However, in our setting, we do not employ such pruning for two reasons.
First, the canonicity of linear-programming-pruned diagrams remains an open question.
Second, in the presence of probabilistic sampling, deterministically unreachable nodes can still be visited with non-zero probability.
For example, if $y = 1.1$, samples drawn from $y' \sim \mathcal{N}(y, \sigma^2)$ may still yield $y' < 0$.
Therefore, we treat an XBDD analogously to a BDD, interpreting atomic constraints as propositional variables, while acknowledging that minimality and canonicity are not guaranteed in the extended setting.

\begin{definition}[Extended Binary Decision Diagram, xBDD]~\label{def:XBDD}
    An xBDD is a directed acyclic graph $G(V,E)$ representing a multivariate function $\mathbb{B}^n\times\mathbb{R}^m\to\mathbb{B}$.
    Each decision node $v \in V$ corresponds either to a Boolean variable or to a theory atom over real variables.  
    Each edge $e \in E$ is labeled $(v,\texttt{true})$ or $(v,\texttt{false})$, depending on the outcome of evaluating the decision node $v$.  
    A path $p$ in $G$ is a sequence of edges such that each decision node appears at most once, with the edge labels indicating the branch taken at that node.
    Formally, a path can be written as a finite subset of all possible decision outcomes:  
    \begin{equation*}
        p \subset \{(v_1, \texttt{true}), (v_1, \texttt{false}), (v_2, \texttt{true}), (v_2, \texttt{false}), \cdots\}
    \end{equation*}
\end{definition}
\newpage

\section{Circuit-Output Probability}\label{note:WMI}
In Supplementary Note~\ref{note:DD} we present the decision diagram data structure.
By running the message propagation algorithm (Supplementary Note~\ref{note:MP}) on the decision diagram, we can efficiently evaluate the \ac{COP}, which is introduced in this section.  
The result in this section leads to Eq.~\eqref{eq:objective} in the main text.

The \ac{COP} is defined as a special case of weighted model counting (WMC). 
To extend \ac{COP} to adapt the real variable, we further discuss weighted model integration (WMI). 

\begin{definition}[Weighted model counting~\cite{chavira2008probabilistic}]
    Let $f$ be a function over Boolean variables $x$.
    Let $\mathcal{W}: \{-1,1\}^{n} \to \mathbb{R}$ be a weight function. 
    The weighted model counting of $f_c$ with respect to $\mathcal{W}$ is defined as
    \begin{equation*}
        \text{WMC}(\mathcal{W}) = \sum_{\substack{x \in \{-1,1\}^n}} f(x,y) \cdot \mathcal{W}(x),
    \end{equation*}
    Here $\mathcal{W}$ assigns a weight to each complete assignment, which can be expressed as the product of literal weights associated with Boolean variables:
    \begin{equation*}
        \mathcal{W}(x) = \prod_{x_i=-1}\omega^T_i \cdot\prod_{x_i=1}\omega^F_i
    \end{equation*}
    where $\omega^T_i, \omega^F_i \in \mathbb{R}$ are the literal weights.
\end{definition}

When the weights satisfy $\omega^T_i + \omega^F_i = 1$, and $\omega^T_i, \omega^F_i \ge 0$, the weighted model counting problem reduces to the circuit-output probability problem.
And the weight function becomes an input probability function. 
The circuit-output probability (COP) of $f$ is defined as
\begin{align*}
    \text{COP}(\mathcal{S}_a) = \sum_{x \in \{\pm1\}^{n}} f(x)\cdot \mathcal{S}_a(x),
\end{align*}
where $\mathcal{S}_a(x)$ is the randomized rounding distribution over Boolean variables, \emph{i.e.}, Eq.~\eqref{eq:obj}.

\begin{definition}[Weighted model integration]
    Let $f$ be a function over Boolean variables $x$ and real variables $y$.
    Let $\mathcal{W}: \{-1,1\}^{n}\times\mathbb{R}^m \to \mathbb{R}$ be a weight function. 
    The weighted model integration of $f_c$ with respect to $\mathcal{W}$ is defined as
    \begin{equation*}
        \text{WMI}(\mathcal{W}) = \sum_{x \in \{-1,1\}^n}\int_{y\in\mathbb{R}^m} f(x,y) \cdot \mathcal{W}(x, y)dy,
    \end{equation*}
    and
    \begin{equation*}
        \mathcal{W}(x,y) = \prod_{x_i=-1}\omega^T_i \cdot\prod_{x_i=1}\omega^F_i \cdot \prod_{j=1}^m \omega_j(y_j).
    \end{equation*}
    $\omega_j$ is the weight of the real variable when taking a different value $y_j$. 
\end{definition}

To generalize circuit-output probability for weighted model integration, the weight of the real variable should satisfy 
\begin{equation*}
    \int_{y_j}w_j(y_j)dy_j=1,
\end{equation*}
and for $y_j\in\mathbb{R}$, $w_j(y_j)\ge0$. 

\begin{remark}[Circuit-output probability]
    Let $f: \{\pm1\}^n \times \mathbb{R}^m \to \{\pm1\}$ be a function over propositional variables $x$ and real variables $y$.
    Let $\mathcal{P}_\sigma : \{\pm1\}^{n+k} \to [0,1]$ denote the input probability distribution.  
    Then the circuit-output probability (COP) of $f'$ is defined as
    \begin{align*}
         \text{COP}(\mathcal{P}_\sigma) = \sum_{x \in \{\pm1\}^{n}} \sum_{z \in \{\pm1\}^{k}} f(x,y)\cdot \mathcal{P}_\sigma(x,z).
    \end{align*}
    Here, the input probability distribution factorizes as
    {\begin{align*}
        \mathcal{P}_\sigma(x, z) = \mathcal{S}_a(x)\cdot \mathop{\mathbb{P}}_{y\sim\mathcal{N}(b,\sigma^2I)}[\delta(y)=z]. 
    \end{align*}}
    where $\mathop{\mathbb{P}}_{y\sim\mathcal{N}(b,\sigma^2I)}[\delta(y)=z]$ is the probability that the atomic constraints $\delta(y)$ evaluate to $z$ under Gaussian sampling.
\end{remark}

Intuitively, the COP measures the probability that a constraint is satisfied under randomized Boolean rounding and Gaussian perturbation of real variables.

\begin{corollary}\label{cor:cop=wfe}
    For any real point $(a,b) \in [-1,1]^n \times \mathbb{R}^m$,  let $\mathcal{P}_\sigma$ denote the joint probability distribution induced by $\mathcal{S}_a$ and $\mathcal{N}(b,\sigma^2 I)$.  
    For an SMT(LRA) constraint $c$, the circuit-output probability satisfies
    \begin{equation*}
        \text{COP}_c(\mathcal{P}_\sigma) = \mathop{\mathbb{E}}_{\substack{x\sim\mathcal{S}_a \\ y\sim\mathcal{N}(b,\sigma^2I)}} \left[f_c(x, y)\right].
    \end{equation*}
    Let $\mathcal{C}_\sigma(a,b)=\sum_{c\in{}C}w_c\cdot\texttt{COP}_c(\mathcal{P}_\sigma)$, we have
    \begin{equation*}
        \mathcal{C}_\sigma(a,b) = 
        \mathop{\mathbb{E}}_{\substack{x\sim\mathcal{S}_a \\ y\sim\mathcal{N}(b,\sigma^2I)}} \left[{\mathcal{F}_w(x, y)}\right]
    \end{equation*}
    Particularly, when $\sigma=0$, the above reduced to 
    \begin{equation*}
        \mathcal{C}_\sigma(a,b) = \mathcal{F}_w(a, b)
    \end{equation*}
\end{corollary}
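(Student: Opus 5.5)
The plan is to evaluate the sum defining $\text{COP}_c(\mathcal{P}_\sigma)$ by conditioning on the joint sign pattern of the atoms, and then to recognize the result as the expectation already computed in Thm.~\ref{thm:expectation}. The second and third claims will then follow from linearity of expectation and from the degenerate case $\sigma=0$.

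\textbf{Step 1: rewrite the COP as an expectation.} The key observation is that $f_c(x,y)$ depends on $y$ only through $\delta(y)$. By the proof of Cor.~\ref{cor:wfe} we may write $f_c(x,y)=\tilde f_c(x,\delta(y))$. In the definition of $\text{COP}_c(\mathcal{P}_\sigma)$, the summand $f_c(x,y)$ should accordingly be read as $\tilde f_c(x,z)$, where $z\in\{\pm1\}^k$ is the atom pattern being summed over. Since $\mathcal{P}_\sigma(x,z)=\mathcal{S}_a(x)\cdot\mathbb{P}_{y\sim\mathcal{N}(b,\sigma^2I)}[\delta(y)=z]$, the double sum equals
\[
\sum_{x}\mathcal{S}_a(x)\sum_{z}\tilde f_c(x,z)\,\mathbb{P}[\delta(y)=z]
=\mathop{\mathbb{E}}_{x\sim\mathcal{S}_a}\mathop{\mathbb{E}}_{y\sim\mathcal{N}(b,\sigma^2I)}\bigl[\tilde f_c(x,\delta(y))\bigr].
\]
This is the law of total expectation, obtained by partitioning $\mathbb{R}^m$ into the $2^k$ measurable cells $\{y:\delta(y)=z\}$.

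\textbf{Step 2: identify the two expectations.} Because $x$ and $y$ are drawn independently, the iterated expectation from Step 1 is exactly $\mathbb{E}_{x\sim\mathcal{S}_a,\,y\sim\mathcal{N}(b,\sigma^2I)}[f_c(x,y)]$. This gives the first identity. By Thm.~\ref{thm:expectation}, the same quantity also equals the relaxed xWFE $\sum_{S,T}\hat f_c(S,T)\prod_{i\in S}a_i\prod_{i\in T}d_i(b)$, but we do not need that form here.

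\textbf{Step 3: pass to the weighted objective.} Multiply each identity by $w_c$ and sum over $c\in C$. Linearity of expectation, together with the definition of $\mathcal{F}_w$ in Eq.~\eqref{eq:obj}, then gives $\mathcal{C}_\sigma(a,b)=\mathbb{E}[\mathcal{F}_w(x,y)]$.

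\textbf{Step 4: the case $\sigma=0$.} I interpret $\mathcal{N}(b,0)$ as the point mass $\delta_b$. Then $\mathbb{P}[\delta(y)=z]=\mathbf{1}[\delta(b)=z]$, and the expectation collapses to $\mathbb{E}_{x\sim\mathcal{S}_a}[f_c(x,b)]$. Lemma~\ref{lmm:expectation} evaluates this as $f_c(a,b)$, and summing with the weights yields $\mathcal{C}_0(a,b)=\mathcal{F}_w(a,b)$, consistent with the second part of Thm.~\ref{thm:expectation}.

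\textbf{Main obstacle.} I expect the $\sigma=0$ limit to be the only delicate point. The erf expression for $d_i(b)$ does not match $\delta_i(b)$ exactly on the boundary hyperplanes $q_i^\top b=q_{i,0}$: there the limit of the erf is $0$, while $\delta_i(b)=\pm1$. So I will define the $\sigma=0$ case directly through the point mass rather than as the limit $\sigma\to0$ of the erf formula. If the limit interpretation is wanted instead, I will restrict to $b$ off those hyperplanes, where $d_i(b)\to\delta_i(b)$ and the formula is continuous. Everything else is bookkeeping.
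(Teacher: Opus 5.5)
Your proposal is correct and follows essentially the paper's argument: use the product form of $\mathcal{P}_\sigma$ to regroup the double sum as an iterated expectation over $x\sim\mathcal{S}_a$ and $y\sim\mathcal{N}(b,\sigma^2I)$, then apply linearity of expectation to obtain $\mathcal{C}_\sigma$. The differences are minor. The paper detours through Lemma~\ref{lmm:expectation}, collapsing the $x$-sum to $f_c(a,y)$ and later re-expanding it, whereas you go directly via independence. Your explicit point-mass treatment of $\sigma=0$, with the correct caveat that the $\mathrm{erf}$ limit fails on the hyperplanes $q_i^\top b=q_{i,0}$, also covers a case that the paper's proof of this corollary leaves implicit.
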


\begin{proof}[Proof of Corollary~\ref{cor:cop=wfe}]

By the definition of circuit-output probability, we have
\begin{align*}
\mathrm{COP}_c(\mathcal{P}_\sigma)
&= \sum_{x\in\{\pm1\}^n}\sum_{z\in\{\pm1\}^k} f_c(x,y)\cdot\mathcal{P}_\sigma(x,z) \\
&= \sum_{z\in\{\pm1\}^k}\sum_{x\in\{\pm1\}^n} f_c(x,y)\cdot\mathcal{S}_a(x)\cdot \mathop{\mathbb{P}}_{y\sim\mathcal{N}(b,\sigma^2I)}[\delta(y)=z]\\
&= \sum_{z\in\{\pm1\}^k}\mathop{\mathbb{E}}_{x\sim\mathcal{S}_a} \left[f_c(x,y)\right] \cdot \mathop{\mathbb{P}}_{y\sim\mathcal{N}(b,\sigma^2I)}[\delta(y)=z]\\
&= \sum_{z\in\{\pm1\}^k}f_c(a,y) \cdot \mathop{\mathbb{P}}_{y\sim\mathcal{N}(b,\sigma^2I)}[\delta(y)=z]
\end{align*}

Define $\tilde f_c(x,\delta(y))=f_c(x,y)$, we have

\begin{align*}
\mathrm{COP}_c(\mathcal{P}_\sigma)
&= \sum_{z\in\{\pm1\}^k}\tilde{f}_c(a,z) \cdot \mathop{\mathbb{P}}_{y\sim\mathcal{N}(b,\sigma^2I)}[\delta(y)=z] \\
&= \mathop{\mathbb{E}}_{y\sim\mathcal{N}(b,\sigma^2I)} \left[\tilde{f}_c(a,z)\right] \\
&= \mathop{\mathbb{E}}_{y\sim\mathcal{N}(b,\sigma^2I)} \left[f_c(a,y)\right] \\
&= \mathop{\mathbb{E}}_{\substack{x\sim\mathcal{S}_a \\ y\sim\mathcal{N}(b,\sigma^2I)}} \left[f_c(x,y)\right]
\end{align*}

For the weighted objective, by linearity of expectation,
\begin{align*}
\mathcal{C}_\sigma(a,b)
&= \sum_{c\in C} w_c\cdot \mathrm{COP}_c(\mathcal{P}_\sigma) \\
&= \sum_{c\in C} w_c\cdot \mathop{\mathbb{E}}_{\substack{x\sim\mathcal{S}_a \\ y\sim\mathcal{N}(b,\sigma^2I)}}\left[f_c(x,y)\right] \\
&= \mathop{\mathbb{E}}_{\substack{x\sim\mathcal{S}_a \\ y\sim\mathcal{N}(b,\sigma^2I)}}\Big[\sum_{c\in C} w_c \cdot f_c(x,y)\Big] \\
&= \mathop{\mathbb{E}}_{\substack{x\sim\mathcal{S}_a \\ y\sim\mathcal{N}(b,\sigma^2I)}}\big[\mathcal{F}_w(x,y)\big]. 
\end{align*}
\end{proof}

\newpage

\section{Message Propagation}\label{note:MP}
Direct evaluation or differentiation of $\mathbb{E}[f_c]$ is, however, intractable, as it requires enumerating $\mathcal{O}(2^n)$ Boolean assignments.
In contrast, message propagation on the associated xBDD (see Supplementary Notes~\ref{note:DD} and~\ref{note:WMI}) provides an efficient surrogate: both forward and backward traversals scale with the size of the decision diagram rather than with the exponential size of the Boolean domain.
Thus, message propagation yields a tractable upper bound on computational cost.

The procedure consists of two phases: forward propagation (Alg.~\ref{alg:forward}) and backward propagation (Alg.~\ref{alg:backward}).
In the forward pass, top-down messages are initialized at the root and recursively distributed along decision edges according to assignment-dependent branching probabilities.
The true terminal node accumulates the top-down message, which encodes the probability that constraint $c$ is satisfied under Gaussian sampling.
In the backward pass, bottom-up messages are initialized at the true terminal and propagated in reverse topological order.
Local sensitivity terms are then computed by contrasting the adjoint contributions of the true and false successors.

By leveraging message passing on xBDDs, we can evaluate the expectation of the objective function without explicit state enumeration. The following theorem formalizes the correctness and computational complexity of this procedure.
\begin{theorem}[Evaluation]\label{thm:forward}
    Alg.~\ref{alg:forward} returns the expectation of the objective function in Eq.~\eqref{eq:obj}.
    The complexity of running Alg.~\ref{alg:forward} scales at $\mathcal{O}\left(\sum_c|V_c|\right)$.
\end{theorem}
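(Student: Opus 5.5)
The plan is to prove correctness and complexity separately, working one constraint $c$ at a time and then summing over $C$ with the weights $w_c$. Alg.~\ref{alg:forward} is not reproduced above, so I assume it does the following. Each decision node $v$ of the xBDD for $c$ gets a branching probability $p_v$. If $v$ tests a propositional variable $x_i$, then $p_v=\frac{1-a_i}{2}$ is the probability that $x_i=-1$ (\textsc{True}) under $\mathcal{S}_a$. If $v$ tests an atom $\alpha_i$, then $p_v=\frac{1-d_i(b)}{2}$, which by the erf computation after Lemma~\ref{lmm:expectation} equals $\mathbb{P}_{y\sim\mathcal{N}(b,\sigma^2I)}[\delta_i(y)=-1]$. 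A top-down message $\mu(\text{root})=1$ is then pushed along edges, so that $\mu(u)=\sum_{(v,\texttt{true})\to u}\mu(v)p_v+\sum_{(v,\texttt{false})\to u}\mu(v)(1-p_v)$, and the algorithm outputs $\mathrm{COP}_c=\mu(\text{true terminal})-\mu(\text{false terminal})$, up to the sign convention $-1=\textsc{True}$.

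For correctness, I will first treat every decision node as a fresh Boolean literal $z$, following Supplementary Note~\ref{note:DD}. By induction in topological order, $\mu(u)$ equals the sum, over root-to-$u$ paths $p$, of the product of the branch probabilities along $p$. Every complete assignment of $(x,z)$ follows exactly one path to a terminal, and variables not tested on that path marginalize out. Hence $\mu(\text{terminal }t)=\sum_{(x,z):\tilde f_c(x,z)=t}\mathcal{P}_\sigma(x,z)$, and the signed difference of the terminal messages is $\sum_{x,z}\tilde f_c(x,z)\mathcal{P}_\sigma(x,z)=\mathrm{COP}_c(\mathcal{P}_\sigma)$. Corollary~\ref{cor:cop=wfe} then identifies this with $\mathbb{E}[f_c(x,y)]$. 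Summing over $c$ with weights $w_c$ gives $\mathcal{C}_\sigma(a,b)=\mathbb{E}[\mathcal{F}_w(x,y)]$, and at $\sigma=0$ this reduces to Eq.~\eqref{eq:obj} evaluated at $(a,b)$.

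The main obstacle is the factorization $\mathcal{P}_\sigma(x,z)=\mathcal{S}_a(x)\prod_i\mathbb{P}[\delta_i(y)=z_i]$, which the path-product argument needs. The atoms share the real variables $y$, so their indicators are not independent under $\mathcal{N}(b,\sigma^2I)$. This is the same issue that Theorem~\ref{thm:expectation} passes over when it splits $\mathbb{E}\prod_{i\in T}\delta_i$ into a product. I would handle it by interpreting the remark's $\mathcal{P}_\sigma$ as the product measure in which each atom is sampled independently (as Alg.~\ref{alg:cls} implements through the $p_{i+n}$). Under that reading the path products are exact, and "expectation" in the theorem means the expectation under this product measure, consistent with Eq.~\eqref{eq:expectation2}. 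I would state this assumption explicitly. A second point is that an atom may appear on several paths; the reduced ordered structure ensures it appears at most once per path, so no probability is squared along a single path.

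For complexity, each node's message is finalized once all its predecessors are processed. Each node has exactly two outgoing edges, so the pass costs $O(|V_c|+|E_c|)=O(|V_c|)$. Computing each $p_v$ costs $O(1)$ for propositional nodes. For atom nodes it costs $O(1)$ after precomputing $d_i(b)$, which I will either charge to the input size or note is shared across constraints. Summing over constraints gives $O(\sum_c|V_c|)$.
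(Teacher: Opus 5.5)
Your proposal is correct and follows the paper's proof. The paper shows by induction over the diagram (Lemma~\ref{lmm:forward}) that each top-down message is the probability that the node lies on the sampled path. It then reads $\mathrm{COP}_c$ off the true terminal; the algorithm's $1-2m_{td}[V_c[T]]$ equals your signed difference $\mu(\text{false})-\mu(\text{true})$, since the terminal masses sum to one. Finally it gets $O(|V_c|)$ from a topological sort plus two outgoing edges per node.

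The paper's induction silently relies on two things you state explicitly: that $\mathcal{P}_\sigma$ is the product measure over atoms, and that each atom is tested at most once per path. You are also right that the algorithm does not compute the true Gaussian joint law of correlated atoms. Making these points explicit makes your argument more careful than the paper's, without changing the route.
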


To prove Thm.~\ref{thm:forward}, we first reproduce Lem.~2 in~\cite{kyrillidis2021continuous}. 
\begin{lemma}\label{lmm:forward}
    Let $G_c(V_c, E_c)$ be an xBDD encoded constraint $c$ and run Alg.~\ref{alg:forward} on $G_c$. 
    For a real assignment $(a, b)\in[-1,1]^n\times\mathbb{R}^m$, we have
    \begin{align*}
        \mathbb{P}[x_i=-1] &= \frac{1-a_i}{2}, \\
        \mathbb{P}[\delta_i(b)=-1] &= \frac{1-d_i(b)}{2}.
    \end{align*}
    Then for each node $v\in{}V_c$, we have
    \begin{equation*}
        m_{td}[v] = {Pr}(G_c,v),
    \end{equation*}
    where ${Pr}(G_c,v)$ is the probability that the node $v$ is on the path generated by $x$ and $d(\cdot)$ on $G_c$.
\end{lemma}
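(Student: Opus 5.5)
We argue by induction over the xBDD $G_c$ in topological order, following the forward pass of Alg.~\ref{alg:forward}. The random path through $G_c$ is generated as follows. Each Boolean decision node labelled $x_i$ branches to its \texttt{true} child with probability $\frac{1-a_i}{2}$. Each atom node labelled $\alpha_i$ branches to its \texttt{true} child with probability $\frac{1-d_i(b)}{2}$. These are exactly the branching probabilities $p_i$ and $p_{i+n}$ used in lines~4--7 of Alg.~\ref{alg:cls}. The randomness is the product distribution $\mathcal{P}_\sigma$ of the Remark: independent Bernoulli variables for the $x_i$ under $\mathcal{S}_a$, and independent $\pm1$ variables $z_i$ for the atoms with $\mathbb{P}[z_i=-1]=\frac{1-d_i(b)}{2}$.

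This marginal for the atoms follows from Eq.~\eqref{eq:sampling}. Since $d_i(b)=\mathbb{E}[\delta_i(y)]$ and $\delta_i\in\{\pm1\}$, we get $\mathbb{P}[\delta_i(y)=-1]=\frac{1-d_i(b)}{2}$. As in Supplementary Note~\ref{note:DD}, we treat every atom as an independent propositional variable; this is the modelling assumption already built into the COP.

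\textbf{Base case.} At the root, $m_{td}[\text{root}]=1$, and the root lies on every path, so $Pr(G_c,\text{root})=1$.

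\textbf{Inductive step.} Take a non-root node $v$ and assume the claim holds for every predecessor of $v$. The event ``$v$ is on the path'' is the disjoint union, over incoming edges $(u,\ell)\to v$, of the events ``$u$ is on the path and the branch taken at $u$ is $\ell$''. The events are disjoint because the path leaves $u$ along exactly one edge, and because a path is determined by the sequence of nodes it visits.

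Next we need that, conditioned on $u$ being on the path, the outcome at $u$ still has its unconditional probability. This holds because each variable or atom is tested at most once along any path (Definition~\ref{def:XBDD}). Hence the event that $u$ is reached depends only on variables tested strictly above $u$ on that path, and the variable tested at $u$ is independent of those. The branch probability therefore factorizes:
\[
Pr(G_c,v)=\sum_{(u,\ell)\to v} Pr(G_c,u)\,\pi_u(\ell),
\]
where $\pi_u(\ell)$ is the probability of taking branch $\ell$ at $u$. This is precisely the message update in Alg.~\ref{alg:forward}, so the induction hypothesis gives $m_{td}[v]=Pr(G_c,v)$.

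\textbf{Main obstacle.} The step I expect to need care is the independence argument. The events ``$u$ is reached'' depend on a set of variables that varies from path to path. The cleanest fix is to write $Pr(G_c,u)$ as a sum over the partial assignments that lead from the root to $u$. Each such partial assignment fixes only variables other than the one tested at $u$, because of the at-most-once property. Then the product form of $\mathcal{P}_\sigma$ lets the factor $\pi_u(\ell)$ be pulled out of every term.

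For atoms, the independence comes from the product model $\mathcal{P}_\sigma$ rather than from the Gaussian itself, since distinct atoms that share a $y_j$ are correlated under $\mathcal{N}(b,\sigma^2 I)$. So the lemma must be read with respect to $\mathcal{P}_\sigma$, exactly as the COP is defined in the Remark, and that is the setting the proof uses.
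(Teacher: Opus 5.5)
Your proof is correct and takes the same route as the paper: induction in topological order with the root as base case, and an inductive step that writes $Pr(G_c,v)$ as a sum over mutually exclusive incoming edges, matching the message update of Alg.~\ref{alg:forward}. Your partial-assignment argument for why the branch probability at $u$ factors out fills in a step the paper leaves implicit. One correction: the Remark's $\mathcal{P}_\sigma$ uses the joint law $\mathbb{P}_{y\sim\mathcal{N}(b,\sigma^2I)}[\delta(y)=z]$, which is not a product over atoms, so the Remark does not supply the independence you need. That independence is an assumption built into Alg.~\ref{alg:forward} and into the lemma's phrase ``the path generated by $x$ and $d(\cdot)$''. Similarly, the read-once property comes from the variable ordering of the diagram, not literally from Definition~\ref{def:XBDD}, which only says that no node is repeated along a path.
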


\begin{proof}[Proof of Lemma~\ref{lmm:forward}]
    We prove the statement by structural induction on xBDD $G_c$.
    We first denote 
    \begin{equation*}
        z = \left[\begin{array}{cccccc}
             \mathbb{P}[x_1=-1] & ... & \mathbb{P}[x_n=-1] &
            \mathbb{P}[d_1(b)=-1] & ... & \mathbb{P}[d_k(b)=-1]
        \end{array}\right]
    \end{equation*}
    
    \noindent\textbf{Basic step}: For the root node $r$ of $G_c$, 
    \begin{equation*}
        m_{td}[v] = 1, 
    \end{equation*}
    where the statement holds because $r$ is on the path generated by every discrete assignment on $G_c$. \\
    
    \noindent\textbf{Inductive step}: For each non-root node $v$ of $G_c$, let ${par}_T(v)$ be the set of parents of $v$ with an edge labeled by \textsc{True} and ${par}_F(v)$ be the set of parents of $v$ with an edge labeled by \textsc{False} in xBDD $G_c$. 
    After Alg.~\ref{alg:forward} terminates, we have
    \begin{equation*}
        m_{td}[v] = \sum_{u\in{par}_T(v)} z_i \cdot m_{td}[u] + \sum_{u\in{par}_T(v)} (1-z_i) \cdot m_{td}[u] 
    \end{equation*}
    By inductive hypothesis, 
    \begin{align*}
        m_{td}[v] 
        &= \sum_{u\in{par}_T(v)}  z_i \cdot{Pr}(G_c,u) + \sum_{u\in{par}_T(v)}  (1-z_i) \cdot{Pr}(G_c,u) \\
        &=  {Pr}(G_c,v) 
    \end{align*}
    The last inequality holds because the events of reaching $v$ from different parents are exclusive.
\end{proof}

\begin{proof}[Proof of Theorem~\ref{thm:forward}]
The above Lemma shows that, after running Alg.~\ref{alg:forward}, we have
\begin{align*}
    m_{td}[V_c[T]] &= {Pr}(G_c,V_c[T]) \\
    & = {COP}_c(\mathcal{P}_\sigma) \text{\qquad(By definition of circuit-output probability)}
\end{align*}

\noindent\textbf{Complexity}: 
Alg.~\ref{alg:forward} traverse all xBDD $G_c$ for all $c\in{}C$. 
The topological sort takes $O(|V_c|+|E_c|)$ time by Kahn's algorithm. 
The message propagation takes $O(|E_c|)$ time. 
Since $|E_c|=2|V_c|$, \emph{i.e.}, all non-terminal nodes in xBDD only have two outgoing edges. 
Hence, the overall complexity scales at $O(|V_c|)$.
\end{proof}

\begin{algorithm}[t]
\caption{Forward Traversal of XBDDs}
\label{alg:forward}
\KwIn{A constraint set $C$, constraint weight $w_c$'s, a real assignment $(a,b) \in [-1,1]^n\times\mathbb{R}^m$, and sampling area $\sigma$}
\KwOut{Objective function value}
\For{$i \in \texttt{range}(n)$}{
    $p_i\gets\frac{1-a_i}{2}$\;
}
\For{$i \in \texttt{range}(m)$}{
    $p_{i+n}\gets\frac{1-d_i(b)}{2}$\;
}
$fval\gets0$\;
\For{$c\in{}C$}{
    Get the XBDD $G_c(V_c, E_c)$\;
    Initialize $m_{td}[v] \leftarrow 1$ if $v$ is the root node; else $0$\;
    Sort node index $V\gets\texttt{sorted}(V_c)$\;
    \For{$v \in V$}{
        $m_{td}[v.t] \leftarrow m_{td}[v.t] + p_{id(v)} \cdot m_{td}[v]$\;
        $m_{td}[v.f] \leftarrow m_{td}[v.f] + (1-p_{id(v)}) \cdot m_{td}[v]$\;
    }
    $fval\gets fval + w_c \cdot (1-2\cdot m_{td}[V_c[T]])$
}
\Return $fval$\;
\end{algorithm}
\begin{algorithm}[t]
\caption{Backward Traversal of XBDD}
\label{alg:backward}
\KwIn{A constraint set $C$, constraint weight $w_c$'s, a real assignment $(a,b) \in [-1,1]^n\times\mathbb{R}^m$, and sampling area $\sigma$}
\KwOut{Gradient of the objective function}
Run Alg. 1 to get forward message $m_{td}$\;
\For{$i \in \texttt{range}(1, n+m+1)$}{
    $grad_i \gets 0 $\;
}
\For{$c\in{}C$}{
    Get the XBDD $G_c(V_c, E_c)$\;
    Initialize $m_{bu}[v] \leftarrow 1$ if $v$ is the true node; else $0$\;
    Sort node index $V\gets\texttt{sorted}(V_c, \texttt{reverse}=\texttt{true})$\;
    \For{$v \in V$}{
        $m_{bu}[v] \leftarrow m_{bu}[v] + p_{id(v)} \cdot m_{bu}[v.t]$\;
        $m_{bu}[v] \leftarrow m_{bu}[v] + (1-p_{id(v)}) \cdot m_{bu}[v.f]$\;
        $\Delta = m_{td}[v]\cdot(m_{bu}[v.F] - m_{bu}[v.T])$\;
        \If{$id(v) \le n$} {
            $grad_{id(v)} \leftarrow grad_{id(v)} + w_c\cdot\Delta $\;
        } \Else {
            \For{$i\in\texttt{range}(1, m+1)$}{
                $grad_{i+n} \leftarrow grad_{i+n} + w_c\cdot\frac{\partial d_{id(v)}(b)}{\partial b_{i+n}}\cdot\Delta $\;
            }
        }
    }
}
\Return $grad$\;
\end{algorithm}

By running Alg.~\ref{alg:forward} on xBDDs, the expectation can be obtained from top-down messages accumulated at the true terminal nodes.
Darker colors correspond to larger message values, which in the example increase with successive gradient steps.
We now extend this analysis to differentiation. 
The following theorem establishes the correctness and complexity of computing gradients of the objective function.
\begin{theorem}[Differentiation]\label{thm:backward}
    Alg.~\ref{alg:backward} returns the gradient of the expectation of the objective function in Eq.~\eqref{eq:obj}.
    The complexity of running Alg.~\ref{alg:backward} scales at $\mathcal{O}\left(m\cdot\sum_c|V_c|\right)$.
\end{theorem}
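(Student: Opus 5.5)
We prove Thm.~\ref{thm:backward} by differentiating the xBDD representation of $\text{COP}_c$ node by node, in the same way Thm.~\ref{thm:forward} was obtained from Lem.~\ref{lmm:forward}.

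\textbf{Step 1: path decomposition.} Write $\text{COP}_c$ as a sum over root-to-true-terminal paths $p$ of $G_c$. Each summand is the product of the branch probabilities $p_{id(v)}$ or $1-p_{id(v)}$ along $p$. In a BDD-style diagram each decision variable appears at most once on a path (Def.~\ref{def:XBDD}), so every path probability is multilinear in the vector $(p_1,\dots,p_{n+k})$.

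\textbf{Step 2: a bottom-up analogue of Lem.~\ref{lmm:forward}.} By induction in reverse topological order, the message $m_{bu}[v]$ computed by Alg.~\ref{alg:backward} equals the probability of reaching the true terminal starting from $v$. The base case is the true terminal, where $m_{bu}=1$; at the false terminal $m_{bu}=0$. The inductive step is the recursion $m_{bu}[v]=p_{id(v)}m_{bu}[v.t]+(1-p_{id(v)})m_{bu}[v.f]$.

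\textbf{Step 3: the node-split identity.} Every accepting path through $v$ factors as (root to $v$)$\times$(branch at $v$)$\times$($v$'s child to true). Therefore the contribution of node $v$ to $\partial\,\text{COP}_c/\partial p_{id(v)}$ is
\[
m_{td}[v]\,\bigl(m_{bu}[v.t]-m_{bu}[v.f]\bigr).
\]
Summing over all nodes carrying the same label gives the full partial derivative. This uses multilinearity: a path contains at most one node with a given label, so different nodes with that label contribute disjoint sets of paths.

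\textbf{Step 4: chain rule.} The objective term is $w_c(1-2\,\text{COP}_c)$.
\begin{itemize}
\item For a Boolean variable, $p_i=(1-a_i)/2$ gives $\partial p_i/\partial a_i=-\tfrac12$. The factor $-2\cdot(-\tfrac12)=1$ then turns Step 3 into $\Delta=m_{td}[v](m_{bu}[v.f]-m_{bu}[v.t])$ times $w_c$, which is what the algorithm accumulates.
\item For an atom, $p=(1-d_{id(v)}(b))/2$, so $\partial p/\partial b_j=-\tfrac12\,\partial d_{id(v)}/\partial b_j$. This yields $w_c\,\Delta\,\partial d_{id(v)}/\partial b_j$, using the closed-form derivative of the erf given in Supplementary Note~\ref{note:expectation}.
\end{itemize}
Summing over constraints by linearity, together with Cor.~\ref{cor:cop=wfe}, identifies the result with the gradient of $\mathbb{E}[\mathcal{F}_w]$.

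\textbf{Step 5: complexity.} The two passes each touch every edge once, costing $O(|V_c|+|E_c|)=O(|V_c|)$ as in Thm.~\ref{thm:forward}. The only extra cost is the inner loop over the $m$ real coordinates at atom nodes, which is at most $m$ per node. This gives $O(m\sum_c|V_c|)$.

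\textbf{Main obstacle.} The delicate point is Step 3, specifically justifying that the per-node contributions add up exactly to the derivative. This requires that no accepting path visits two nodes with the same label, i.e.\ each label appears at most once per path. The xBDD may be unreduced and is not pruned by LP, so this read-once property must be taken from Def.~\ref{def:XBDD} ("each decision node appears at most once") together with the fixed variable ordering used in construction.

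A second care point is atoms sharing real variables. There the chain rule must sum over all atoms depending on $b_j$, which is exactly why the inner loop ranges over all $m$ coordinates. The sign conventions (true edge $\leftrightarrow -1$) must also be tracked consistently; the $v.T$/$v.F$ notation in the algorithm is read as $v.t$/$v.f$.
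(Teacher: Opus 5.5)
Your Steps~1--3 and~5 follow the paper's route: forward and backward lemmas, then the per-label identity $\partial\,\mathrm{COP}_c/\partial p_i=\sum_{id(v)=i}m_{td}[v]\bigl(m_{bu}[v.t]-m_{bu}[v.f]\bigr)$, then the edge count. Your path-factorization argument for Step~3 is actually sounder than the paper's identity $\mathrm{COP}_c=\sum_{id(v)=i}m_{td}[v]\,m_{bu}[v]$. That identity tacitly assumes every accepting path meets a node labelled $i$, which fails once redundant tests are reduced away; your argument shows such paths simply contribute nothing to the derivative.

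The genuine error is in Step~4: the arithmetic does not give what you claim. With objective $w_c(1-2\,\mathrm{COP}_c)$ and $\partial p_i/\partial a_i=-\tfrac12$, the chain rule gives
\[
\frac{\partial}{\partial a_i}\,w_c\bigl(1-2\,\mathrm{COP}_c\bigr)=(-2)\bigl(-\tfrac12\bigr)\,w_c\sum_{id(v)=i}m_{td}[v]\bigl(m_{bu}[v.t]-m_{bu}[v.f]\bigr)=-\,w_c\sum_{id(v)=i}\Delta_v ,
\]
where $\Delta_v=m_{td}[v]\bigl(m_{bu}[v.f]-m_{bu}[v.t]\bigr)$ is the algorithm's quantity. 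The factor $+1$ leaves the bracket $m_{bu}[v.t]-m_{bu}[v.f]$ unchanged; it does not reverse it. The same happens for atoms, where the correct contribution is $-w_c\,\Delta_v\,\partial d_{id(v)}/\partial b_j$, not $+w_c\,\Delta_v\,\partial d_{id(v)}/\partial b_j$. A one-node check confirms this: for $c=x_1$ one has $\mathbb{E}[f_c]=a_1$ with derivative $+1$, while the root gives $\Delta=1\cdot(0-1)=-1$. So your sentence ``which is what the algorithm accumulates'' is false. As literally written, Alg.~\ref{alg:backward} returns $-\nabla\mathcal{C}_\sigma$.

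To finish the proof you have to say so explicitly. Either $\Delta$ in the algorithm has a sign typo and should be $m_{td}[v]\bigl(m_{bu}[v.T]-m_{bu}[v.F]\bigr)$, or the theorem holds only up to an overall sign. The paper's proof never meets this problem because it stops at the derivative with respect to the branch probability $z_i$ (and also drops the factor $2$ from $1-2\,\mathrm{COP}_c$). It never chains back to $(a,b)$, which is where the sign flips, since $\partial z_i/\partial a_i<0$. Chaining back, as you do, is the right move, but then you must report the sign rather than match it to the algorithm.

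One further caveat you inherit from the paper. The product form in Step~1 treats the atom outcomes as independent, so what is being differentiated is the surrogate $\sum\hat f_c(S,T)\prod_{i\in S}a_i\prod_{i\in T}d_i(b)$ of Thm.~\ref{thm:expectation}. This coincides with the true Gaussian expectation when the atoms' normal vectors $q_i$ are pairwise orthogonal (e.g.\ atoms over disjoint variables), but not in general. Your final appeal to Cor.~\ref{cor:cop=wfe} is therefore valid only in that surrogate sense.
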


To prove Thm.~\ref{thm:backward}, we first reproduce Lem.~4 in~\cite{kyrillidis2021continuous}. 
\begin{lemma}\label{lmm:backward}
    Let $G_c(V_c, E_c)$ be an xBDD encoded constraint $c$ and run Alg.~\ref{alg:backward} on $G_c$. 
    For a real assignment $(a, b)\in[-1,1]^n\times\mathbb{R}^m$, we have
    \begin{align*}
        \mathbb{P}[x_i=-1] &= \frac{1-a_i}{2}, \\
        \mathbb{P}[\delta_i(b)=-1] &= \frac{1-d_i(b)}{2}.
    \end{align*}
    Then, for each node $v\in{}V_c$, let $f_{c,v}$ be the subfunction corresponding to the sub-xBDD generated by regarding $v$ as the root. 
    The following holds:
    \begin{equation*}
        m_{bu}[v] = {COP}_{c,v}(P_\sigma).
    \end{equation*}
\end{lemma}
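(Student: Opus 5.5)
We argue by structural induction on the xBDD $G_c$, going upward from the terminals in reverse topological order. This mirrors the top-down induction in the proof of Lemma~\ref{lmm:forward}, and follows Lem.~4 in~\cite{kyrillidis2021continuous}. The quantity ${COP}_{c,v}(P_\sigma)$ is read as the probability that a random path starting at $v$ reaches the true terminal. The path is generated by independently setting each Boolean node $x_i$ to \textsc{True} with probability $\frac{1-a_i}{2}$ and each atom node $\delta_i$ to \textsc{True} with probability $\frac{1-d_i(b)}{2}$. Independence across nodes is legitimate here because, as stated in Supplementary Note~\ref{note:DD}, atoms are treated as independent propositional variables, and $\mathcal{P}_\sigma$ factorizes into the product of the marginals $p_{id(v)}$ used by the algorithm.

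\textbf{Base case.} For the terminals, Alg.~\ref{alg:backward} initializes $m_{bu}$ to $1$ at the true terminal and $0$ at the false terminal. These are exactly the probabilities that the trivial path from each terminal ends at \textsc{True}, so the claim holds there.

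\textbf{Inductive step.} Take a non-terminal node $v$ labeled by variable $id(v)$, with successors $v.t$ and $v.f$. Both successors come after $v$ in topological order, so they are processed before $v$ in the reversed sweep. By the inductive hypothesis, $m_{bu}[v.t]={COP}_{c,v.t}$ and $m_{bu}[v.f]={COP}_{c,v.f}$ are already final when $v$ is processed. Condition on the outcome of the decision at $v$ (the Shannon expansion $f_{c,v} = z\, f_{c,v.t} + (1-z) f_{c,v.f}$ in probability form). This gives
\begin{equation*}
{COP}_{c,v} = p_{id(v)}\,{COP}_{c,v.t} + (1-p_{id(v)})\,{COP}_{c,v.f}.
\end{equation*}
Conditioning is valid because in an ordered xBDD the variable at $v$ does not reappear below $v$ along any path (Definition~\ref{def:XBDD}). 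Its randomness is therefore independent of the sub-path distributions from $v.t$ and $v.f$. The right-hand side is exactly what the two update lines of Alg.~\ref{alg:backward} accumulate into $m_{bu}[v]$, starting from $0$, which closes the induction.

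\textbf{Main obstacle.} The hardest step is justifying that independence: the sub-COP at a child must not depend on the outcome already fixed at $v$. Real-valued atoms sharing variables $y_j$ are genuinely correlated under $\mathcal{N}(b,\sigma^2 I)$. I would resolve this by stating explicitly that the lemma concerns the product distribution over decision outcomes with marginals $p_{id(v)}$, which is the distribution the algorithm actually uses. Any mismatch with the joint Gaussian is a modeling issue and not part of this lemma. I would also check that each variable label appears at most once per path, so that the product form is well defined.
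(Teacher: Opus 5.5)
Your proposal is correct and follows the same route as the paper. Both argue by structural induction upward from the terminals ($m_{bu}=1$ at the true terminal, $0$ at the false one). The inductive step is the one-level decomposition ${COP}_{c,v}=p_{id(v)}\,{COP}_{c,v.t}+(1-p_{id(v)})\,{COP}_{c,v.f}$, which is exactly what the two update lines of Alg.~\ref{alg:backward} accumulate.

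You also state explicitly that the identity needs the product distribution with marginals $p_{id(v)}$, with atoms treated as independent and each variable tested at most once per path. It does not hold in general for the correlated joint law of atoms sharing real variables under $\mathcal{N}(b,\sigma^2 I)$. The paper's proof relies on this assumption only implicitly, so spelling it out makes your version more rigorous without changing the argument.
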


\begin{proof}[Proof of Lemma~\ref{lmm:backward}]
    We prove the statement by structural induction on xBDD $G_c$.
    We first denote 
    \begin{equation*}
        z = \left[\begin{array}{cccccc}
             \mathbb{P}[x_1=-1] & ... & \mathbb{P}[x_n=-1] &
            \mathbb{P}[d_1(b)=-1] & ... & \mathbb{P}[d_k(b)=-1]
        \end{array}\right]
    \end{equation*}
    
    \noindent\textbf{Basic step}: For the terminal nodes, we have
    \begin{align*}
        m_{bu}[V_c[T]] &= 1,\\
        m_{bu}[V_c[F]] &= 0.
    \end{align*}
    Since the subfunctions given by \textsc{True} and \textsc{False} are just Boolean constants, the statement holds. 
    
    \noindent\textbf{Inductive step}: For each non-terminal node $v$ of $G_c$, after Alg.~\ref{alg:backward} terminates, we have
    \begin{equation*}
        m_{bu}[v] = \frac{1-z_i}{2} \cdot m_{bu}[v.T] + \frac{1+z_i}{2} \cdot m_{bu}[v.F]
    \end{equation*}
    By inductive hypothesis, 
    \begin{align*}
        m_{bu}[v] 
        &= \frac{1-z_i}{2} \cdot COP_{c,v.T}(P_\sigma)+ \frac{1+z_i}{2} \cdot COP_{c,v.F}(P_\sigma) \\
        &= COP_{c,v}(P_\sigma)
    \end{align*}
\end{proof}

\begin{proof}[Proof of Theorem~\ref{thm:backward}]
With Lem.~\ref{lmm:forward} and Lem.~\ref{lmm:backward}, we are ready to prove Thm.~\ref{thm:backward}.
Given an xBDD $G_c$, we expand all terminal nodes such that each can have only one parent node. 
The expanded xBDD is called $G_{c'}$, which has a true terminal set $T$ and a false terminal set $F$. 
By repeating the proof of Lem.~\ref{lmm:forward} and Lem.~\ref{lmm:backward} on $G_{c'}$, for any non-terminal node $v\notin(T\cup F)$, we have
\begin{align*}
    m_{td}'[v] &= m_{td}[v] \\
    m_{bu}'[v] &= m_{bu}[v]
\end{align*}
For any true terminal node $v\in{}T$ and false terminal node $u\in{}F$, we have
\begin{align*}
    m_{bu}'[v] &= 1 \\
    m_{bu}'[u] &= 0
\end{align*}

\begin{align*}
    COP_c(\mathcal{P}_\sigma) 
    &=\mathbb{P}[f_c=-1|\mathcal{P}_\sigma]\\
    &= \sum_{v\in \{u:id(u)=i\}}{Pr}(G_c', v) \cdot \mathbb{P}[f_c=-1|v, \mathcal{P}_\sigma] \\
    &= \sum_{v\in \{u:id(u)=i\}}m_{td}'[v] \cdot m_{bu}'[v] \\
    &= \sum_{v\in \{u:id(u)=i\}}m_{td}[v] \cdot m_{bu}[v] \\
    &= \sum_{v\in \{u:id(u)=i\}}m_{td}[v] \cdot \left(z_i\cdot m_{bu}[v.T] + (1-z_i)\cdot m_{bu}[v.F]\right)
\end{align*}

Hence we have

\begin{align*}
    \frac{\partial COP_c(\mathcal{P}_\sigma)}{\partial z_i}
    &= \sum_{v\in \{u:id(u)=i\}}m_{td}[v] \cdot \left( m_{bu}[v.T] - m_{bu}[v.F]\right) \\
    \frac{\partial f_c(a, b)}{\partial z_i}
    &= \sum_{v\in \{u:id(u)=i\}}m_{td}[v] \cdot \left( m_{bu}[v.F] - m_{bu}[v.T]\right)
\end{align*}

\noindent\textbf{Complexity}: 
Alg.~\ref{alg:forward} traverses all xBDD $G_c$ for all $c\in{}C$. 
The topological sort takes $O(|V_c|+|E_c|)$ time by Kahn's algorithm. 
The message propagation takes $O(|E_c|)$ time. 
For gradient computation, the worst case is that every literal is an atomic constraint that consists of all $m$ real variables. 
Therefore, the gradient computation takes $O(m\cdot|E_c|)$ time. 
Since $|E_c|=2|V_c|$, \emph{i.e.}, all non-terminal nodes in xBDD only have two outgoing edges. 
Hence, the overall complexity scales at $O(m\cdot|V_c|)$.
\end{proof}

\newpage

\section{Smoothness}\label{note:smoothness}
This section analyzes the Lipschitz continuity and local convexity of $\mathcal{C}_\sigma$, both of which are critical for establishing convergence guarantees.

\begin{proposition}[Lipschitz]\label{prop:lipschitz}
    Let $\alpha=\sum_c w_c$ and $\mathcal{C}_\sigma:[-1,1]^n\times\mathbb{R}^m\to[-\alpha, \alpha]$.
    Let $\beta$ denote the maximum number of occurrences of any single continuous variable within a constraint.  
    Let $\gamma=\frac{\sqrt{2}\beta}{\sqrt{\pi}\sigma}$.
    Then, for any two points $(a',b'), (a,b) \in [-1,1]^n \times \mathbb{R}^m$, we have
    \begin{align*}
        \left|\mathcal{C}_\sigma(a', b') - \mathcal{C}_\sigma(a, b)\right| 
            &\le \rho\left(\lVert a' - a \rVert^2 + \lVert b' - b \rVert^2\right)^\frac{1}{2} \\
        \|\nabla\mathcal{C}_\sigma(a', b') - \nabla\mathcal{C}_\sigma(a, b)\| 
            &\le L\left(\lVert a' - a \rVert^2 + \lVert b' - b \rVert^2\right)^\frac{1}{2} 
    \end{align*}
    where $\rho=\alpha\sqrt{n+m}\max\left(1,\gamma\right)$ and $L=\sqrt{n+m\gamma^2}\rho$.
    As a consequence of the Lipschitz continuity~\cite{nesterov2013introductory}, we have
    \begin{equation}\label{eq:lipschitz}
        \Big|\mathcal{C}_\sigma(a',b') - \mathcal{C}_\sigma(a,b) - \left<\nabla_a\mathcal{C}_\sigma(a,b), a'-a\right> 
        - \left<\nabla_b\mathcal{C}_\sigma(a,b), b'-b\right>\Big| \le \frac{L}{2} \left(\lVert a'-a\rVert^2 + \lVert b'-b\rVert^2\right)
    \end{equation}
\end{proposition}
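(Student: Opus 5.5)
We bound the partial derivatives of $\mathcal{C}_\sigma$ uniformly and then use the mean value theorem for the first inequality. For the second inequality we bound the Hessian entries and pass from an entrywise bound to an operator-norm bound. The descent-lemma inequality \eqref{eq:lipschitz} then follows from the gradient Lipschitz bound by the standard integral argument of~\cite{nesterov2013introductory}, applied on the convex set $[-1,1]^n\times\mathbb{R}^m$.

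The first step is to fix the representation. By Cor.~\ref{cor:cop=wfe} and Thm.~\ref{thm:expectation},
\[
\mathcal{C}_\sigma(a,b)=\sum_c w_c\, g_c\bigl(a,d(b)\bigr),
\]
where $g_c(a,u)=\sum_{S,T}\hat f_c(S,T)\prod_{i\in S}a_i\prod_{i\in T}u_i$ is the multilinear extension of $\tilde f_c$. Its arguments range over $[-1,1]^{n+k}$, since $d_i(b)=\mathrm{erf}(\cdot)\in(-1,1)$. Because $g_c$ is multilinear and $\tilde f_c\in\{\pm1\}$, the following holds. For each coordinate, $g_c$ is affine in that coordinate, and it is the expectation of $\tilde f_c$ under a product distribution. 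Hence $\partial g_c/\partial a_i=\tfrac12\bigl(g_c|_{a_i=1}-g_c|_{a_i=-1}\bigr)$, which has absolute value at most $1$. The same bound holds for $\partial g_c/\partial u_i$.

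The second step handles the real variables by the chain rule:
\[
\partial_{b_j}\mathcal{C}_\sigma=\sum_c w_c\sum_i \partial_{u_i}g_c\,\partial_{b_j}d_i .
\]
From the explicit derivative of $d_i$ in Supplementary Note~\ref{note:expectation}, we have $|\partial_{b_j}d_i|\le \sqrt2|q_{i,j}|/(\sqrt{\pi}\,\|q_i\|\sigma)\le\sqrt2/(\sqrt\pi\sigma)$. At most $\beta$ atoms of a constraint involve $b_j$, so $|\partial_{b_j}\mathcal{C}_\sigma|\le\alpha\gamma$, and likewise $|\partial_{a_i}\mathcal{C}_\sigma|\le\alpha$. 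Summing the squares of these bounds gives $\|\nabla\mathcal{C}_\sigma\|\le\alpha\sqrt{n+m}\max(1,\gamma)=\rho$. Since the domain is convex, the mean value theorem yields the first inequality.

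The third step, bounding the second derivatives, is where I expect the main difficulty. The mixed terms are handled as in the first step. The entry $\partial^2_{a_ia_{i'}}g_c$ is again a difference-of-restrictions expression bounded by $1$, and $\partial^2_{a_i}g_c=0$ by multilinearity. The mixed entries $a$–$b$ and $b$–$b$ pick up factors $\partial d$ and products $\partial d\,\partial d$, each bounded by the same $\gamma$-type constants. The delicate term is $\partial_{u_i}g_c\,\partial^2_{b_jb_{j'}}d_i$. The second derivative of $\mathrm{erf}$ contributes a factor $z\,e^{-z^2}/\sigma^2$, where $z$ is the normalized argument of $\mathrm{erf}$. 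I would bound $\sup_z |z|e^{-z^2}$ by a constant and absorb the result into a $\gamma^2$-scale bound. If the constants do not match exactly, I would aim for an entrywise bound $|H_{a\,\cdot}|\le\rho/\sqrt{n+m}\cdot\max(1,\gamma)$ and $|H_{b\,\cdot}|\le\gamma\,\rho/\sqrt{n+m}$. Row $i$ of the Hessian then has Euclidean norm at most $\rho$ for the $a$-rows and $\gamma\rho$ for the $b$-rows. Summing the squared row norms gives
\[
\|H\|_{op}\le\|H\|_F\le\sqrt{n\rho^2+m\gamma^2\rho^2}=\sqrt{n+m\gamma^2}\,\rho=L .
\]
The gradient Lipschitz bound follows from applying the mean value theorem to $\nabla\mathcal{C}_\sigma$, and inequality \eqref{eq:lipschitz} follows by integrating along the segment between the two points.
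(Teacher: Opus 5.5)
Your skeleton matches the paper's. You use uniform bounds $|\partial_{a_i}\mathcal{C}_\sigma|\le\alpha$ and $|\partial_{b_j}\mathcal{C}_\sigma|\le\alpha\gamma$, and your restriction-difference argument for $|\partial_{u_l}g_c|\le 1$ justifies this more cleanly than the paper does. You then aggregate in $\ell_2$ for the first inequality. Finally you take per-component Lipschitz constants $\rho$ (for $a$) and $\gamma\rho$ (for $b$) and sum their squares to get $L=\sqrt{n+m\gamma^2}\,\rho$. The paper simply asserts those per-component constants, by viewing $\nabla\mathcal{C}_\sigma$ as a vector of ``multilinear-like'' functions. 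By working with Hessian entries you correctly expose the term $\partial_{u_l}g_c\,\partial^2_{b_jb_{j'}}d_l$ that this glosses over. But you then leave exactly that step open, and your fallback entrywise target is mis-stated.

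\textbf{The $b$--$b$ entries.} Let $K_j$ be the set of atoms of $c$ involving $b_j$ (so $|K_j|\le\beta$), and write $\kappa=\sqrt2/(\sqrt\pi\sigma)$. Suppose you bound the $b_j$--$b_{j'}$ entry as sketched, adding two contributions:
\begin{itemize}
\item the $\partial d\,\partial d$ term, with at most $\beta^2$ pairs, each at most $\kappa^2$;
\item the $\partial^2 d$ term, with at most $\beta$ atoms, each at most $\frac{2}{\sqrt\pi\sigma^2}\sup_z|z|e^{-z^2}=\sqrt{\pi/(2e)}\,\kappa^2$.
\end{itemize}
This gives $|H_{b_jb_{j'}}|\le\alpha\gamma^2\bigl(1+\sqrt{\pi/(2e)}/\beta\bigr)$. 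That exceeds the required $\alpha\gamma\max(1,\gamma)$ whenever $\gamma>\bigl(1+\sqrt{\pi/(2e)}/\beta\bigr)^{-1}$, which is the small-$\sigma$ regime that matters. So ``absorbing into a $\gamma^2$-scale bound'' does not yield the stated $L$.

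The missing idea is multilinearity in $u$: $\partial^2_{u_l}g_c=0$.
\begin{itemize}
\item In $\sum_{l,l'}\partial^2_{u_lu_{l'}}g_c\,\partial_{b_j}d_l\,\partial_{b_{j'}}d_{l'}$, only ordered pairs $l\neq l'$ in $K_j\times K_{j'}$ survive. There are at most $|K_j||K_{j'}|-|K_j\cap K_{j'}|$ of them, each at most $\kappa^2$.
\item The $\partial^2 d_l$ term is nonzero only for $l\in K_j\cap K_{j'}$, and each such atom costs at most $\sqrt{\pi/(2e)}\,\kappa^2<\kappa^2$.
\end{itemize}
Each diagonal pair removed frees more room than the matching $\partial^2 d_l$ term uses. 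So each constraint contributes at most $\kappa^2|K_j||K_{j'}|\le\gamma^2$, giving $|H_{b_jb_{j'}}|\le\alpha\gamma^2\le\alpha\gamma\max(1,\gamma)$.

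\textbf{The $a$-row target.} It must be $\rho/\sqrt{n+m}=\alpha\max(1,\gamma)$. With the extra factor $\max(1,\gamma)$ you wrote, the row norm comes out as $\rho\max(1,\gamma)$, not $\rho$. You also cannot shrink the target to $\alpha$, since $|H_{a_ib_j}|$ can genuinely reach order $\alpha\gamma$. The correct entrywise facts are $|H_{a_ia_{i'}}|\le\alpha$, $|H_{a_ib_j}|=|H_{b_ja_i}|\le\alpha\gamma$ and the $b$--$b$ bound above. These put every $a$-row entry within $\alpha\max(1,\gamma)$ and every $b$-row entry within $\alpha\gamma\max(1,\gamma)$. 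With that, your row-norm and Frobenius argument gives exactly $L$, and the descent inequality follows by integrating along segments as you say.
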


\begin{proof}[Proof of Proposition~\ref{prop:lipschitz}]
\begin{align*}
    \nabla_{b_j}d_i(b) &= \frac{\sqrt{2}q_{i,j}}{\sqrt{\pi}\sigma\lVert q_i \rVert} \exp\left(
        -\frac{\left(\sum_{j=1}^mq_{i,j}b_j-q_{i,0}\right)^2}{2\sigma^2\lVert q_i\rVert^2}
    \right) \\
    \left|\nabla_{b_j}d_i(b)\right| &= \frac{\sqrt{2}\left|q_{i,j}\right|}{\sqrt{\pi}\sigma\lVert q_i \rVert} \exp\left(
        -\frac{\left(\sum_{j=1}^mq_{i,j}b_j-q_{i,0}\right)^2}{2\sigma^2\lVert q_i\rVert^2}
    \right) \\
    &\le \frac{\sqrt{2}\left|q_{i,j}\right|}{\sqrt{\pi}\sigma\lVert q_i \rVert} \\
    &\le \frac{\sqrt{2}}{\sqrt{\pi}\sigma}
\end{align*}

Therefore, we can have

\begin{align*}
    \nabla_{b_j}\text{COP}_c(a,b) & = \nabla_{b_j}\sum_{\substack{S\subseteq[n] \\ T\subseteq[k]}}\left(\hat{f}_c(S,T)\cdot\prod_{i\in{}S}a_i\cdot\prod_{i\in{}T}d_i(b)\right) \\
    & = \sum_{\substack{S\subseteq[n] \\ T\subseteq[k]}}\left(\hat{f}_c(S,T)\cdot\prod_{i\in{}S}a_i\cdot\nabla_{b_j}\left(\prod_{i\in{}T}d_i(b)\right)\right)\\
    & = \sum_{\substack{S\subseteq[n] \\ T\subseteq[k]}}\left(\hat{f}_c(S,T)\cdot
    \prod_{i\in{}S}a_i\cdot \left(\sum_{i\in{}T}\nabla_{b_j} d_i(b)\cdot\prod_{k\in{T},k\ne{}i}d_i(b)\right)\right) \\
\end{align*}

Suppose a real variable $b_j$ appears in the constraint at most $\beta$ times.
Equivalently, let $K=\{i:b_{i,j}\ne 0\}$, then $|K|\le\beta$. 
Then the above reduces to 
\begin{align*}
    \nabla_{b_j}\text{COP}_c(a,b) 
    & = \sum_{j'\in{}K}\sum_{\substack{S\subseteq[n] \\ T\subseteq[k]}}\left(\hat{f}_c(S,T)\cdot
    \prod_{i\in{}S}a_i\cdot \nabla_{b_j} d_{j'}(b)\cdot\prod_{i\in{T},i\ne{}j'}d_i(b)\right) \\
    & = \sum_{j'\in{}K}\nabla_{b_j}d_{j'}(b)\cdot\sum_{\substack{S\subseteq[n] \\ T\subseteq[k]}}\left(\hat{f}_c(S,T)\cdot
    \prod_{i\in{}S}a_i\cdot \prod_{i\in{T},i\ne{}j'}d_i(b)\right) \\
    & = \sum_{j'\in{}K}\nabla_{b_j}d_{j'}(b)\cdot\sum_{\substack{S\subseteq[n] \\ T\subseteq[k]}}\left(\hat{f}_c(S,T)\cdot
    \prod_{i\in{}S}a_i\cdot \prod_{i\in{T}}d_i(b)\right) \\
    \left|\nabla_{b_j}\text{COP}_c(a,b)\right| &\le \sum_{j'\in{}K}\underbrace{\left| \nabla_{b_j} d_i(b) \right|}_{\le \frac{\sqrt{2}}{\sqrt{\pi}\sigma}} \cdot \underbrace{\left|\sum_{\substack{S\subseteq[n] \\ T\subseteq[k]}}\left(\hat{f}_c(S,T)\cdot
    \prod_{i\in{}S}a_i \cdot\prod_{k\in{T},k\ne{}i}d_i(b)\right)\right|}_{\le 1} \\
    & \le \frac{\sqrt{2}\beta}{\sqrt{\pi}\sigma}\\
    & = \gamma.
\end{align*}

Then we have
\begin{align*}
    \left|\nabla_{b_j}\mathcal{C}_\sigma(a, b)\right| 
    & = \left|\sum_{c\in{}C}w_c\cdot\nabla_{b_j}\text{COP}_c(a,b) \right|\\
    & \le \sum_{c\in{}C}\left|w_c\cdot\nabla_{b_j}\text{COP}_c(a,b) \right|\\
    & \le \sum_{c\in{}C}\left|w_c\right|\cdot\left|\nabla_{b_j}\text{COP}_c(a,b) \right|\\
    & \le \alpha\gamma.
\end{align*}

Let $a^{(i)} = [a_1, \cdots, a_{i-1}, a_i', \cdots, a_n]$ and $b^{(i)} = [b_1, \cdots, b_{i-1}, b_i', \cdots, b_m']$, then we have:
\begin{align*}
            \left|\mathcal{C}_\sigma(a', b') - \mathcal{C}_\sigma(a, b)\right| 
        \le & \sum_{i=1}^n\left|\mathcal{C}_\sigma(a^{(i)}, b) - \mathcal{C}_\sigma(a^{(i-1)}, b)\right|  
          + \sum_{i=1}^m\left|\mathcal{C}_\sigma(a', b^{(i)}) - \mathcal{C}_\sigma(a', b^{(i-1)})\right| \\
        = & \sum_{i=1}^n \left|(a'_i-a_i) \nabla_{a_i}\mathcal{C}_\sigma(a^{(i-1)}, b)\right| + \sum_{i=1}^m \left|(b'_i-b_i) \nabla_{b_i}\mathcal{C}_\sigma(a, b^{(i-1)})\right| \\
        \le & \alpha\sum_{i=1}^n \left|a'_i-a_i\right| + \alpha\gamma\sum_{i=1}^m \left|b'_i-b_i\right| \\
        \le & \alpha\max\left(1,\gamma\right)\left(\sum_{i=1}^n \left|a'_i-a_i\right| + \sum_{i=1}^m \left|b'_i-b_i\right|\right) \\
        \le & \alpha\max\left(1,\gamma\right)\underbrace{\sqrt{n+m}\left(\lVert a'-a\rVert^2+\lVert b'-b\rVert^2\right)^\frac{1}{2}}_\text{Due to Cauchy–Schwarz inequality},
\end{align*}
where we denote $\rho=\alpha\sqrt{n+m}\max\left(1,\gamma\right)$.

Note that we can view $\nabla\mathcal{C}_\sigma(a, b)$ as a vector of $n+m$ multilinear-like functions. 

\begin{align*}
    \|\nabla\mathcal{C}_\sigma(a', b') - \nabla\mathcal{C}_\sigma(a, b)\|^2 
    &= \sum_{i=1}^n \left(\nabla_{a_i}\mathcal{C}_\sigma(a', b') - \nabla_{a_i}\mathcal{C}_\sigma(a, b)\right)^2+
    \sum_{j=1}^m \left(\nabla_{b_j}\mathcal{C}_\sigma(a', b') - \nabla_{b_j}\mathcal{C}_\sigma(a, b)\right)^2\\
    &\le \sum_{i=1}^n \rho^2\left((a'-a)^2+(b'-b)^2\right) + \sum_{j=1}^m \rho^2\gamma^2\left((a'-a)^2+(b'-b)^2\right) \\
    &= (n+m\gamma^2)\rho^2\left((a'-a)^2+(b'-b)^2\right) \\
    \|\nabla\mathcal{C}_\sigma(a', b') - \nabla\mathcal{C}_\sigma(a, b)\|
    & \le \rho\sqrt{n+m\gamma^2}\left(\lVert a'-a\rVert^2+\lVert b'-b\rVert^2\right)^\frac{1}{2}
\end{align*}

Then we have $L=\rho\sqrt{n+m\gamma^2}$.

\end{proof}

We next investigate the local convexity of $\mathcal{C}_\sigma$ and its descent property, which are essential for analyzing gradient-based methods. 

\begin{proposition}[Local convexity~\cite{nesterov2013introductory}]\label{prop:descent}
    Suppose $\mathcal{C}_\sigma$ is locally convex on a convex set $\Delta$.  
    Let $(a^*, b^*) = \argmin_{(a,b) \in \Delta} \mathcal{C}_\sigma(a,b)$.  
    Then, for all $(a,b) \in \Delta$,
    \begin{equation*}
        \left<\nabla_a\mathcal{C}_\sigma(a^*, b^*), a-a^*\right> + \left<\nabla_b\mathcal{C}_\sigma(a^*, b^*), b-b^*\right> \ge 0.
    \end{equation*}
\end{proposition}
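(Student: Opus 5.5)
This is the standard first-order optimality condition for minimizing a differentiable function over a convex set. I would argue along a segment and use only differentiability of $\mathcal{C}_\sigma$, which holds for $\sigma>0$: by Theorem~\ref{thm:expectation} and Corollary~\ref{cor:cop=wfe}, $\mathcal{C}_\sigma$ is a finite sum of products of the coordinates $a_i$ and of the smooth functions $d_i(b)$. Local convexity is not needed for the inequality itself; it matters only for the converse, that a point satisfying the inequality is a minimizer.

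Fix any $(a,b)\in\Delta$ and set $h=(a-a^*,\,b-b^*)$. For $t\in[0,1]$, convexity of $\Delta$ gives
\begin{equation*}
(a^*,b^*)+t h\in\Delta .
\end{equation*}
Define $\phi(t)=\mathcal{C}_\sigma\bigl((a^*,b^*)+th\bigr)$. By minimality of $(a^*,b^*)$ over $\Delta$, $\phi(t)\ge\phi(0)$ for all $t\in[0,1]$. Hence the right derivative satisfies
\begin{equation*}
\phi'(0^+)=\lim_{t\downarrow 0}\frac{\phi(t)-\phi(0)}{t}\ge 0 .
\end{equation*}

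By the chain rule this derivative equals $\left<\nabla_a\mathcal{C}_\sigma(a^*,b^*),a-a^*\right>+\left<\nabla_b\mathcal{C}_\sigma(a^*,b^*),b-b^*\right>$, which is the claimed inequality. The chain rule applies because $\mathcal{C}_\sigma$ is $C^1$ on a neighborhood of the segment. To get this, I would extend the multilinear polynomial in $a$ beyond $[-1,1]^n$ if $a^*$ lies on the boundary, so the gradient is defined there. Alternatively, Eq.~\eqref{eq:lipschitz} of Proposition~\ref{prop:lipschitz} gives the same conclusion quantitatively:
\begin{equation*}
\phi(t)-\phi(0)= t\left<\nabla\mathcal{C}_\sigma(a^*,b^*),h\right>+O\!\left(\tfrac{L}{2}t^2\|h\|^2\right).
\end{equation*}
If the inner product were negative, then $\phi(t)<\phi(0)$ for small $t>0$, contradicting minimality.

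The main obstacle is the boundary case, where $(a^*,b^*)$ lies on the boundary of $[-1,1]^n$ or of $\Delta$. There only one-sided directions into $\Delta$ are admissible, which is exactly why the statement is an inequality rather than $\nabla\mathcal{C}_\sigma=0$. The argument above handles this because it uses only $t\ge 0$ along feasible segments. I would also remark that for $\sigma=0$, $\mathcal{C}_\sigma$ is not differentiable in $b$ across the atom boundaries, so the proposition should be read for $\sigma>0$, or with $\Delta$ inside a single cell where every $\delta_i$ is constant.
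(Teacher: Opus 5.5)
Your argument is correct and is the standard first-order optimality proof from the cited reference (Nesterov); the paper states the proposition with that citation and gives no proof of its own. Your side remarks are also accurate: local convexity is needed only for the converse, and what the argument really uses is differentiability of $\mathcal{C}_\sigma$ at $(a^*,b^*)$, which holds for $\sigma>0$ via the polynomial extension in $a$ and fails in $b$ across atom boundaries at $\sigma=0$.
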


The following property of the projected gradient mapping provides a key inequality for convergence analysis.  

\begin{proposition}[Inequality of gradient mapping~\cite{kyrillidis2021solving}]\label{prop:grad_map}
    If $\mathcal{C}_\sigma$ is locally convex on a convex set $\Delta$, then for every $(a,b) \in \Delta$,
    \begin{equation*}
        \left<\nabla\mathcal{C}_\sigma(a,b), g(a, b)\right> \ge \lVert g(a, b)\rVert^2.
    \end{equation*}
\end{proposition}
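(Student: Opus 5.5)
The plan is to derive the inequality directly from the variational characterization of the Euclidean projection onto a convex set. Local convexity of $\mathcal{C}_\sigma$ should play no essential role. Write $u=(a,b)\in\Delta$, $w=u-\eta\nabla\mathcal{C}_\sigma(u)$, and $p=\text{proj}_\mathcal{D}(w)$. Then by Eq.~\eqref{eq:grad_map}, $g(u)=\frac{1}{\eta}(u-p)$, equivalently $p=u-\eta g(u)$.

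\textbf{Step 1: projection optimality.} By Proposition~\ref{prop:qp}, the projection is a convex QP over $\mathcal{D}$. Here $\mathcal{D}$ is the intersection of the box $[-1,1]^n$ (in $a$) with the half-spaces and hyperplanes coming from unit atomic constraints (in $b$), so $\mathcal{D}$ is convex. Strict atoms $<,>$ are the one subtlety. To make the minimizer exist and be unique, I will replace $\mathcal{D}$ by its closure, i.e.\ treat strict inequalities as non-strict for the projection, which is the standard convention.

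The first-order optimality condition of the strongly convex objective $\lVert z-w\rVert^2$ on the closed convex set $\mathcal{D}$ then gives
\begin{equation*}
\left<w-p,\; z-p\right>\le 0 \qquad \text{for all } z\in\mathcal{D}.
\end{equation*}
This is the only real input, and it is the step I would state carefully, since it is where convexity and closedness of $\mathcal{D}$ are used.

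\textbf{Step 2: choose $z=u$.} The current iterate $u$ lies in $\Delta\subseteq\mathcal{D}$, because PGD iterates are always projected back into $\mathcal{D}$. Substituting $z=u$, together with $w-p=\eta g(u)-\eta\nabla\mathcal{C}_\sigma(u)$ and $u-p=\eta g(u)$, yields
\begin{equation*}
\eta^2\left<g(u)-\nabla\mathcal{C}_\sigma(u),\; g(u)\right>\le 0 .
\end{equation*}
Dividing by $\eta^2>0$ and rearranging gives $\lVert g(u)\rVert^2\le\left<\nabla\mathcal{C}_\sigma(u),g(u)\right>$, which is the claim.

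\textbf{Step 3: role of the hypothesis.} I would remark that the local convexity hypothesis on $\Delta$ is used only to guarantee that $\Delta$ is a convex subset of the feasible region containing $u$. The inequality itself holds for any differentiable $\mathcal{C}_\sigma$ and any feasible $u$. The main obstacle is therefore only the well-posedness issue in Step 1: the projection must be onto a closed convex set. If the authors intend $\mathcal{D}$ to include strict inequalities, I would either pass to the closure or note that the minimizer of the QP in Proposition~\ref{prop:qp} is taken over the closure.
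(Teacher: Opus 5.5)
Your argument is correct. The paper gives no proof of its own here, only the citation \cite{kyrillidis2021solving}, and your route is the standard one. Applying the projection variational inequality $\langle w-p,\,z-p\rangle\le 0$ at $z=u$ gives $\eta^2\langle g(u)-\nabla\mathcal{C}_\sigma(u),\,g(u)\rangle\le 0$, which is exactly the claim. You are also right that neither the local convexity of $\mathcal{C}_\sigma$ nor the convexity of $\Delta$ is used. What is used is that $\mathcal{D}$ (or its closure, under your convention for strict atoms) is closed and convex, and that $u$ lies in it.

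The one point to tighten is that membership. It does not follow from the stated hypotheses. Your justification (``PGD iterates are always projected back into $\mathcal{D}$'') only covers projected iterates. It does not cover an infeasible initial point or an arbitrary point of $\Delta$. So state $\Delta\subseteq\mathcal{D}$ (or $u\in\overline{\mathcal{D}}$) as an explicit assumption rather than deriving it.

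This assumption cannot be dropped. Since $\text{proj}_{\mathcal{D}}(\cdot)\in\mathcal{D}$, you always have $\lVert g(u)\rVert\ge \mathrm{dist}(u,\mathcal{D})/\eta$. By Cauchy--Schwarz, the inequality therefore fails at any $u$ with $\mathrm{dist}(u,\mathcal{D})>\eta\lVert\nabla\mathcal{C}_\sigma(u)\rVert$. Such points exist in $[-1,1]^n\times\mathbb{R}^m$ whenever a unit atom is present, because the gradient is bounded. Moreover, a singleton $\Delta=\{u\}$ satisfies the convexity hypothesis trivially.

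Where the paper actually uses the proposition (Lemma~\ref{lmm:descent}, at PGD iterates), the assumption holds at every projected iterate, and at the start if the initial point is feasible. So with that hypothesis made explicit, your proof covers everything the paper needs.
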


Equipped with the above results, we present convergence guarantees for our projected gradient approach in the following lemma and theorem.

\begin{lemma}[Descent Lemma]\label{lmm:descent}
    For the objective function $\mathcal{C}_\sigma$ under linear constraints, let $g$ denote its projected gradient.  
    If the step size satisfies $\eta \leq \frac{1}{L}$, then the projected gradient descent sequence obeys
    \begin{equation*}
        \mathcal{C}_\sigma\left(a^{(t+1)}, b^{(t+1)}\right) - \mathcal{C}_\sigma\left(a^{(t)}, b^{(t)}\right) \le -\frac{\eta}{2}\lVert g\left(a^{(t)}\right)\rVert^2
    \end{equation*}
\end{lemma}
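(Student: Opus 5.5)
The plan is the standard descent argument for projected gradient methods with $L$-smooth objectives, specialised to the feasible set $\mathcal{D}$ of Proposition~\ref{prop:qp}. Write $u^{(t)}=(a^{(t)},b^{(t)})$ and note that the PGD update of Eqs.~\eqref{eq:pg1}--\eqref{eq:pg2} can be rewritten through the projected gradient mapping of Eq.~\eqref{eq:grad_map} as $u^{(t+1)}=u^{(t)}-\eta\, g(u^{(t)})$. The set $\mathcal{D}$ is convex, since it is the box $[-1,1]^n$ times the polyhedron cut out by the unit atomic constraints; for strict inequalities we work with the closure. The projection is therefore a well-defined, nonexpansive Euclidean projection.

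The first step applies the quadratic upper bound of Eq.~\eqref{eq:lipschitz} from Proposition~\ref{prop:lipschitz} at the points $u=u^{(t)}$ and $u'=u^{(t+1)}$. This gives
\begin{equation*}
\mathcal{C}_\sigma(u^{(t+1)})-\mathcal{C}_\sigma(u^{(t)})\le -\eta\left<\nabla\mathcal{C}_\sigma(u^{(t)}),g(u^{(t)})\right>+\frac{L\eta^2}{2}\lVert g(u^{(t)})\rVert^2 .
\end{equation*}

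The second step is the key inequality $\left<\nabla\mathcal{C}_\sigma(u),g(u)\right>\ge\lVert g(u)\rVert^2$. I would not invoke Proposition~\ref{prop:grad_map} here, since it carries a local-convexity hypothesis that is not available. Instead I would prove the inequality directly from the variational characterisation of the projection. Let $p=\mathrm{proj}_\mathcal{D}(u-\eta\nabla\mathcal{C}_\sigma(u))$. Optimality of $p$ in Definition~\ref{def:qp} gives
\begin{equation*}
\left<u-\eta\nabla\mathcal{C}_\sigma(u)-p,\ v-p\right>\le 0 \quad\text{for all } v\in\mathcal{D}.
\end{equation*}
Taking $v=u\in\mathcal{D}$ and substituting $u-p=\eta g(u)$ yields $\eta\lVert g\rVert^2\le\eta\left<\nabla\mathcal{C}_\sigma(u),g\right>$, which is the desired inequality. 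This uses only convexity of $\mathcal{D}$ and feasibility of the current iterate, not convexity of $\mathcal{C}_\sigma$, which is the main point to get right.

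Substituting into the first bound gives $\mathcal{C}_\sigma(u^{(t+1)})-\mathcal{C}_\sigma(u^{(t)})\le-\eta\bigl(1-\tfrac{L\eta}{2}\bigr)\lVert g\rVert^2$. With $\eta\le 1/L$ we have $1-\tfrac{L\eta}{2}\ge\tfrac12$, which gives the claimed bound $-\tfrac{\eta}{2}\lVert g(u^{(t)})\rVert^2$.

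The main obstacle is ensuring that the hypotheses of Eq.~\eqref{eq:lipschitz} actually hold. First, both $u^{(t)}$ and $u^{(t+1)}$ must lie in $[-1,1]^n\times\mathbb{R}^m$; this holds because every iterate is a projection onto $\mathcal{D}$, with $u^{(0)}$ taken feasible. Second, the constant $L$ must be valid globally on that domain, which is exactly what Proposition~\ref{prop:lipschitz} provides for $\sigma>0$. The case $\sigma=0$ would be excluded or handled by taking limits, since $\gamma$ blows up there.
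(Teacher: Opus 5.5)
Your proof is correct and follows the same route as the paper: the quadratic upper bound of Eq.~\eqref{eq:lipschitz} evaluated at the step $u^{(t+1)}=u^{(t)}-\eta g(u^{(t)})$, the inequality $\langle\nabla\mathcal{C}_\sigma,g\rangle\ge\lVert g\rVert^2$, and then $\eta\le 1/L$. The one difference is the middle inequality: the paper cites Proposition~\ref{prop:grad_map}, whose local-convexity hypothesis on $\mathcal{C}_\sigma$ is never verified, whereas you derive it from the projection's variational inequality using only convexity of $\mathcal{D}$ and feasibility of $u^{(t)}$, which is the cleaner and fully justified argument.
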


\begin{proof}[Proof of Lemma~\ref{lmm:descent}]
    The proof takes the positive side of Eq.~\eqref{eq:lipschitz} to establish the gradient descent step.
    \begin{equation*}
        \mathcal{C}_\sigma(a,b) \le \mathcal{C}_\sigma(a',b') + \left<
            \nabla\mathcal{C}_\sigma(a',b'), \begin{bmatrix} a-a' \\ b-b' \end{bmatrix}
        \right> + \frac{L}{2}\left(\|a-a'\|^2+\|b-b'\|^2\right) 
    \end{equation*}
    Let $(a,b)=(a',b')-\eta{g}(a',b')$, the above inequality becomes
    \begin{equation*}
        \mathcal{C}_\sigma(a,b) \le \mathcal{C}_\sigma(a',b') - \eta\left<
            \nabla\mathcal{C}_\sigma(a',b'), {g}(a',b')
        \right> + \frac{\eta^2L}{2}\|{g}(a',b')\|^2.
    \end{equation*}
    Applying Proposition~\ref{prop:grad_map} and let $\eta\le{}\frac{1}{L}$, we then have
    \begin{align*}
        \mathcal{C}_\sigma(a,b)
        &\le \mathcal{C}_\sigma(a',b') - \eta\|g(a',b')\|^2 + \frac{\eta}{2}\|g(a',b')\|^2 \\
        &= \mathcal{C}_\sigma(a',b') - \frac{\eta}{2}\|g(a',b')\|^2.
    \end{align*}
\end{proof}

Lemma~\ref{lmm:descent} implies that, if projected gradient descent is run for sufficient iterations, the sum of gradient norms is always bounded by a constant, which further implies convergence to a stationary point, \emph{i.e.}, Theorem~\ref{thm:max_step}.

\begin{theorem}[Convergence Speed]\label{thm:max_step}
    With step size $\eta=\frac{1}{L}$, the projected gradient descent converges to an $\epsilon$-projected-critical point $(a^*, b^*)$ in $\mathcal{O}(\frac{\alpha L}{\epsilon^2})$ iterations.
\end{theorem}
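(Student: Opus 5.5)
The plan is the standard telescoping argument for projected gradient descent on an $L$-smooth objective, using Lemma~\ref{lmm:descent} as the per-step engine and the boundedness of $\mathcal{C}_\sigma$ from Proposition~\ref{prop:lipschitz} as the global budget.

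\textbf{Step 1 (per-step decrease).} Fix $\eta = \frac{1}{L}$, where $L$ is the gradient Lipschitz constant of Proposition~\ref{prop:lipschitz}. Applying Lemma~\ref{lmm:descent} at each iterate $(a^{(t)},b^{(t)})$ gives
\begin{equation*}
\mathcal{C}_\sigma\bigl(a^{(t+1)},b^{(t+1)}\bigr) \le \mathcal{C}_\sigma\bigl(a^{(t)},b^{(t)}\bigr) - \frac{1}{2L}\,\bigl\|g\bigl(a^{(t)},b^{(t)}\bigr)\bigr\|^2 .
\end{equation*}

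\textbf{Step 2 (telescoping).} Sum this inequality over $t=0,\dots,T-1$. The function values telescope, leaving
\begin{equation*}
\frac{1}{2L}\sum_{t=0}^{T-1}\bigl\|g\bigl(a^{(t)},b^{(t)}\bigr)\bigr\|^2 \le \mathcal{C}_\sigma\bigl(a^{(0)},b^{(0)}\bigr) - \mathcal{C}_\sigma\bigl(a^{(T)},b^{(T)}\bigr).
\end{equation*}
Proposition~\ref{prop:lipschitz} states that $\mathcal{C}_\sigma$ maps into $[-\alpha,\alpha]$, so the right-hand side is at most $2\alpha$. This holds because each $\mathrm{COP}_c$ is an expectation of a $\pm1$-valued function and the weights are positive.

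\textbf{Step 3 (extracting a critical point).} From Step 2,
\begin{equation*}
\min_{t<T}\bigl\|g\bigl(a^{(t)},b^{(t)}\bigr)\bigr\|^2 \le \frac{1}{T}\sum_{t=0}^{T-1}\bigl\|g\bigl(a^{(t)},b^{(t)}\bigr)\bigr\|^2 \le \frac{4\alpha L}{T}.
\end{equation*}
Taking $T \ge \lceil 4\alpha L/\epsilon^2\rceil$ therefore yields an iterate satisfying Eq.~\eqref{eq:critical}, which is an $\epsilon$-projected-critical point. Because the stopping rule in Alg.~\ref{alg:cls} halts at the first such iterate, the algorithm terminates within $\mathcal{O}(\alpha L/\epsilon^2)$ iterations.

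\textbf{Main obstacle.} The delicate point is Step 1. Lemma~\ref{lmm:descent} relies on Proposition~\ref{prop:grad_map}, which is stated only under local convexity. To cover a general nonconvex trajectory, I would instead derive $\langle \nabla\mathcal{C}_\sigma, g\rangle \ge \|g\|^2$ directly from the variational characterization of the Euclidean projection onto the convex feasible set $\mathcal{D}$ of Proposition~\ref{prop:qp}. The set $\mathcal{D}$ is a box intersected with linear constraints and hence convex, and the projection inequality holds without any convexity assumption on $\mathcal{C}_\sigma$. I would also check that the update actually used is $(a,b)-\eta g$, which equals the projected point, so the iterates remain in $\mathcal{D}$ where the Lipschitz bound applies.
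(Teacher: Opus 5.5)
Your proposal is correct and follows the paper's proof. Both telescope the per-step decrease from Lemma~\ref{lmm:descent}, bound the total decrease by $2\alpha$ using the range $[-\alpha,\alpha]$ of $\mathcal{C}_\sigma$, and take the minimum over iterates to get $T=\mathcal{O}(\alpha L/\epsilon^2)$; your signs and constants are stated more carefully than in the paper's write-up. Your side remark is also right and worth keeping: $\langle\nabla\mathcal{C}_\sigma,g\rangle\ge\|g\|^2$ follows from the variational inequality of the Euclidean projection onto the convex set $\mathcal{D}$, for any current iterate in $\mathcal{D}$ and with no convexity of $\mathcal{C}_\sigma$, so it removes the unneeded local-convexity hypothesis that Lemma~\ref{lmm:descent} inherits from Proposition~\ref{prop:grad_map}.
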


\begin{proof}[Proof of Theorem~\ref{thm:max_step}]
From Lemma~\ref{lmm:descent}, we can assume $\eta=\frac{1}{L}$. 
Consider a sequence of points $a_{(t)}$'s and $b_{(t)}$'s such that $\left(a_{(t+1)},b_{(t+1)}\right) = \left(a_{(t)},b_{(t)}\right) + \eta g\left(a_{(t)},b_{(t)}\right)$.
Due to Lemma~\ref{lmm:descent} we have $\mathcal{C}_\sigma\left(a_{(t+1)},b_{(t+1)}\right)-\mathcal{C}_\sigma\left(a_{(t+1)},b_{(t)}\right)\le\frac{\eta}{2}\|g(a_{(t)},b_{(t)})\|^2$.
Given any initial point $a_{(0)}, b_{(0)}$, it will converge to $a^*, b^*$. 
After $T$ gradient steps, we have
\begin{align*}
    \mathcal{C}_\sigma(a^*, b^*) - \mathcal{C}_\sigma\left(a_{(0)},b_{(0)}\right)
    & \ge \frac{1}{2\alpha{}L}\sum_{t=0}^T \|g(a_{(t)},b_{(t)})\|^2\\
    & \ge \frac{T+1}{2\alpha{}L}\min_{t} \|g(a_{(t)},b_{(t)})\|^2,
\end{align*}
which can be rewritten as
\begin{align*}
    \min_{t} \|g(a_{(t)},b_{(t)})\|
    &\le \sqrt{\frac{2L}{T+1}} \sqrt{\mathcal{C}_\sigma(a^*, b^*) - \mathcal{C}_\sigma\left(a_{(0)},b_{(0)}\right)}\\
    &\le \sqrt{\frac{2L}{T+1}} \sqrt{2\alpha}
\end{align*}
To achieve a point $a_{(T)}$ such that $\|g(a_{(t)})\|\le\epsilon$, we require $\sqrt{\frac{4\alpha{}L}{T+1}}\le\epsilon$, which implies $T\sim \mathcal{O}\left(\frac{\alpha{}L}{\epsilon^2}\right)$.
\end{proof}

\newpage

\section{Local Lipschitz}\label{note:local_smoothness}
In the Supplementary Note~\ref{note:smoothness}, Prop.~\ref{prop:lipschitz}, we established global Lipschitz continuity of the smoothed objective and its gradient over the entire domain.
Here, we provide a complementary result showing that $\mathcal{C}_\sigma$ admits tighter local Lipschitz constants when the real-valued variables are away from the transition regions induced by the atomic constraints.
This refinement is useful for analyzing convergence behavior in the regions close to the minima.

\begin{corollary}[Local Lipschitz]\label{cor:lipschitz}
    For every atomic constraint $\alpha_i$, we define the transition region $r_i$ by
    \begin{equation*}
        \left\{ y \,\middle|\, \left(\sum_{j=1}^m q_{i,j} y_j - q_{i,0}\right)^2 < 
        \sigma^2\ln \frac{2\beta^2}{\pi\sigma^2} \right\},
    \end{equation*}
    and $\mathcal{R}=\bigcup_i r_i$. 
    For any two points points $(a',b'), (a,b)\in[-1,1]^n\times(\mathbb{R}^m\backslash\mathcal{R})$, we have
    \begin{align*}
        &\left|\mathcal{C}(a', b') - \mathcal{C}(a, b)\right| \le {\alpha\sqrt{n+m}}
        \left(\lVert a' - a \rVert + \lVert b' - b \rVert\right), \\
        &\|\nabla\mathcal{C}_\sigma(a', b') - \nabla\mathcal{C}_\sigma(a, b)\| 
            \le \alpha(n+m)\left(\lVert a' - a \rVert^2 + \lVert b' - b \rVert^2\right)^\frac{1}{2} 
    \end{align*}
\end{corollary}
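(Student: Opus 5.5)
The plan is to reuse the proof of Prop.~\ref{prop:lipschitz} unchanged, except at the single place where $\sigma$ enters: the bound on $|\nabla_{b_j} d_i(b)|$. Outside the transition regions this factor can be bounded by a constant independent of $\sigma$, which removes $\gamma$ from both constants.

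\textbf{Step 1 (per-atom derivative bound).} Set $z_i=\sum_j q_{i,j}b_j-q_{i,0}$. The derivative computed in the proof of Prop.~\ref{prop:lipschitz} gives
\begin{equation*}
|\nabla_{b_j}d_i(b)|\le \frac{\sqrt2}{\sqrt\pi\,\sigma}\exp\!\left(-\frac{z_i^2}{2\sigma^2\|q_i\|^2}\right).
\end{equation*}
Suppose $b\notin r_i$, i.e.\ $z_i^2\ge\sigma^2\ln\frac{2\beta^2}{\pi\sigma^2}$. After normalising $\|q_i\|=1$, the exponential factor is at most $\sqrt{\pi}\sigma/(\sqrt2\beta)$, so $|\nabla_{b_j}d_i(b)|\le 1/\beta$. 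In the normalisation actually used I will track the constant so that the threshold in the definition of $r_i$ makes this come out exactly. This is the one delicate computation, since the factor of $2$ in the exponent and the normalisation by $\|q_i\|$ must match the stated region. If they do not match exactly, I would absorb the discrepancy into a harmless constant.

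\textbf{Step 2 (gradient bound).} Exactly as in Prop.~\ref{prop:lipschitz}, write $\nabla_{b_j}\mathrm{COP}_c$ as a sum over the at most $\beta$ atoms containing $b_j$. Each term is a derivative factor times a multilinear expression in $a$ and $d(b)$ bounded by $1$. This gives $|\nabla_{b_j}\mathrm{COP}_c|\le\beta\cdot\frac1\beta=1$ and hence $|\nabla_{b_j}\mathcal C_\sigma|\le\alpha$. The bound $|\nabla_{a_i}\mathcal C_\sigma|\le\alpha$ is unchanged. In effect $\gamma$ is replaced by $1$.

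\textbf{Step 3 (function bound).} Repeat the coordinate-wise telescoping and Cauchy–Schwarz step to obtain the first inequality with $\rho=\alpha\sqrt{n+m}$. Using $\|u\|_2\le\|a'-a\|+\|b'-b\|$ gives the stated form.

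\textbf{Step 4 (gradient-Lipschitz bound).} Redo the last display of Prop.~\ref{prop:lipschitz} with $\gamma=1$. This gives $L=\rho\sqrt{n+m}=\alpha(n+m)$.

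\textbf{Main obstacle: path connectivity.} The telescoping in Step 3 passes through intermediate points $(a',b^{(i)})$, and these may enter $\mathcal R$ even when both endpoints avoid it, because $\mathbb R^m\setminus\mathcal R$ is not convex. My first approach is to read the corollary as a statement local to a single connected cell of the complement. Within one atom's complement the half-space sides are convex, so the segment between points in the same cell of every atom's complement stays outside $\mathcal R$. I would then integrate the gradient bound along that straight segment instead of telescoping coordinate-wise. I would add this restriction explicitly as the interpretation of ``away from transition regions''.

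The second Lipschitz bound raises a further subtlety: the Hessian involves $\nabla^2 d_i$, which has an extra factor $z_i/\sigma^2$. I would handle this the way the paper handles the global case, by treating the gradient components as multilinear-like functions whose own Lipschitz constants are controlled by the same per-atom bounds.
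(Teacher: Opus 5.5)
Your Steps 1--3 follow the paper's proof essentially line by line: normalise $\|q_i\|=1$, get $|\nabla_{b_j}d_i|\le 1/\beta$ off $r_i$, hence $|\nabla_{b_j}\mathcal{C}_\sigma|\le\alpha$, then telescope. Your connectivity objection is correct, and a restriction of that kind is actually needed. The paper telescopes coordinatewise without noticing that the intermediate gradients are evaluated inside $\mathcal{R}$, and the first inequality is false across a band. Write $\tau=\sigma\sqrt{\ln\frac{2\beta^2}{\pi\sigma^2}}$, so that $r_i=\{|z_i|<\tau\}$. Take a single constraint $c=(y\le 0)$ with $w_c=1$. Then $\alpha=\beta=1$ and $\mathcal{C}_\sigma(b)=\mathrm{erf}\bigl(b/(\sqrt{2}\sigma)\bigr)$. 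At $\sigma=0.1$ the points $b=-\tau$, $b'=\tau$ give $|\mathcal{C}_\sigma(b')-\mathcal{C}_\sigma(b)|\approx 1.92$, while the right-hand side is $2\tau\sqrt{n+m}\approx 0.41\sqrt{n+m}$. Your fix works for the first bound: restrict to one sign cell $\bigcap_i\{s_i z_i\ge\tau\}$ with $s_i\in\{\pm1\}$, which is convex, and integrate $\|\nabla\mathcal{C}_\sigma\|\le\alpha\sqrt{n+m}$ along the segment. Normalising the atoms is the right reading, and it is without loss of generality because $d_i$ is invariant under positive rescaling of $(q_i,q_{i,0})$. An unnormalised mismatch would, however, sit in the exponent, so it could not be absorbed into a constant as you suggest.

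The genuine gap is the second inequality, at exactly the point you flagged and then deferred to the paper. Your per-atom estimate controls only $\nabla d_i$. The $(b_j,b_l)$ Hessian entry of $\mathrm{COP}_c$, however, contains $\sum_i \partial_{b_j}\partial_{b_l}d_i\cdot\partial_{d_i}\tilde f_c$, where $\partial_{b_j}\partial_{b_l}d_i=-\frac{q_{i,l}z_i}{\sigma^2}\,\partial_{b_j}d_i$. On the edge $|z_i|=\tau$ of a band this has size up to $\tau/(\beta\sigma^2)$, which is unbounded as $\sigma\to 0$. Treating the gradient components as multilinear-like functions governed by the same first-order bounds cannot see this term. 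The paper's proof, which reruns the global display with $\gamma$ replaced by $1$, has the same hole.

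The claimed constant is false even inside one convex cell. In the example above, $\mathcal{C}_\sigma''(b)=-\frac{b}{\sigma^2}\mathcal{C}_\sigma'(b)$ and $\mathcal{C}_\sigma'(\tau)=1$, so $|\mathcal{C}_\sigma''(\tau)|=\tau/\sigma^2\approx 20$ at $\sigma=0.1$. The points $b=\tau$ and $b'=\tau+h$ lie in the same cell, yet their gradient difference quotient is close to $20$, far above $\alpha(n+m)\le 2$.

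What your argument can honestly deliver is one of two things. One is a $\sigma$-dependent constant such as $\alpha(n+m)(1+\tau/\sigma^2)$ for $\tau\ge\sigma$. The other is a $\sigma$-free constant on a wider excluded band, on which additionally $\frac{\sqrt2\,|z_i|}{\sqrt\pi\,\sigma^3}e^{-z_i^2/(2\sigma^2)}\le 1/\beta$.
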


\begin{proof}[Proof of Corollary~\ref{cor:lipschitz}]
The transition region $r_i$ is defined such that for any point $b\in\mathbb{R}^m\backslash r_i$, we have
\begin{equation*}
    \left|\nabla_{b_j}d_i(b)\right| \le \frac{1}{\beta}. 
\end{equation*}

And hence
\begin{align*}
    \frac{\sqrt{2}\left|q_{i,j}\right|}{\sqrt{\pi}\sigma\lVert q_i \rVert} \exp\left(
            -\frac{\left(\sum_{j=1}^mq_{i,j}b_j-q_{i,0}\right)^2}{2\sigma^2\lVert q_i\rVert^2}
        \right) & \le \frac{1}{\beta} \\
        \left(\sum_{j=1}^mq_{i,j}b_j-q_{i,0}\right)^2 
        &\ge -2\sigma^2\lVert q_i\rVert^2\ln \frac{\sqrt{\pi}\sigma\lVert q_i \rVert}{\sqrt{2}\beta\left|q_{i,j}\right|}\\
        & = \sigma^2\ln \frac{2\beta^2}{\pi\sigma^2}\\
\end{align*}

For any point $b\in\mathbb{R}^m\backslash r_i$, 
\begin{align*}
    \left|\nabla_{b_j}\text{COP}_c(a,b)\right| &\le \sum_{j'\in{}K}\underbrace{\left| \nabla_{b_j} d_i(b) \right|}_{\le \frac{1}{\beta}} \cdot \underbrace{\left|\sum_{\substack{S\subseteq[n] \\ T\subseteq[k]}}\left(\hat{f}_c(S,T)\cdot
    \prod_{i\in{}S}a_i \cdot\prod_{k\in{T},k\ne{}i}d_i(b)\right)\right|}_{\le 1} \\
    & \le \beta\cdot\frac{1}{\beta}\\
    & = 1\\
    \left|\nabla_{b_j}\mathcal{C}_\sigma(a, b)\right| 
    & \le \sum_{c\in{}C}\left|w_c\right|\cdot\left|\nabla_{b_j}\text{COP}_c(a,b) \right|\\
    & \le \alpha\cdot 1\\
    & = \alpha.
\end{align*}

For the objective function, we have

\begin{align*}
            \left|\mathcal{C}(a', b') - \mathcal{C}(a, b)\right|
        \le & \sum_{i=1}^n\left|\mathcal{C}(a^{(i)}, b) - \mathcal{C}(a^{(i-1)}, b)\right|  
          + \sum_{i=1}^m\left|\mathcal{C}(a', b^{(i)}) - \mathcal{C}(a', b^{(i-1)})\right| \\
        \le & \alpha\sum_{i=1}^n \left|a'_i-a_i\right| + \alpha\sum_{i=1}^m \left|b'_i-b_i\right| \\
        \le & \alpha\sqrt{n+m}\left(\lVert a'-a\rVert^2+\lVert b'-b\rVert^2\right)^\frac{1}{2}.
\end{align*}

For the gradient, we have

\begin{align*}
    \|\nabla\mathcal{C}_\sigma(a', b') - \nabla\mathcal{C}_\sigma(a, b)\|^2 
    &= \sum_{i=1}^n \left(\nabla_{a_i}\mathcal{C}_\sigma(a', b') - \nabla_{a_i}\mathcal{C}_\sigma(a, b)\right)^2+
    \sum_{j=1}^m \left(\nabla_{b_j}\mathcal{C}_\sigma(a', b') - \nabla_{b_j}\mathcal{C}_\sigma(a, b)\right)^2\\
    &\le \sum_{i=1}^n \alpha^2(n+m)\left((a'-a)^2+(b'-b)^2\right) + \sum_{j=1}^m \alpha^2(n+m)\left((a'-a)^2+(b'-b)^2\right) \\
    &= \alpha^2(n+m)^2\left((a'-a)^2+(b'-b)^2\right) \\
    \|\nabla\mathcal{C}_\sigma(a', b') - \nabla\mathcal{C}_\sigma(a, b)\|
    & \le \alpha(n+m)\left(\lVert a'-a\rVert^2+\lVert b'-b\rVert^2\right)^\frac{1}{2}
\end{align*}
\end{proof}
\newpage

\section{Optimality}\label{note:optimality}
Thm.~\ref{thm:optimality} is the main text states that, when $\sigma=0$, the projected gradient will converge to the boundary of the feasible domain. 
This section proves Thm.~\ref{thm:optimality} by finding a gradient direction of the objective function in the open domain. 
If the descent direction always exists, then the convergence point will be on the boundary of a closed domain. 

\begin{lemma}[Gradient Direction]\label{lmm:optimality}
    Let $f:\mathbb{R}^n \times \mathbb{R}^m \to \mathbb{R}$ be expressible as a multilinear-like polynomial:
    \begin{equation}\label{eq:multilinear}
        f(a, b) =\sum_{\substack{S\subseteq[n] \\ T\subseteq[k]}}\left(\hat{f}(S,T)\cdot\prod_{i\in{}S}a_i\cdot\prod_{i\in{}T}\delta_i(y)\right),
    \end{equation}
    where $\hat{f}(S, T)$'s are the Walsh-Fourier coefficients and at least one coefficient is nonzero.  
    
    Then for every point $(a,b) \in \mathbb{R}^n \times \mathbb{R}^m$, there exist a region size $\mathcal{E} > 0$ and two directions $v^+, v^- \in \mathbb{R}^n$ such that for all $\epsilon \in (0,\mathcal{E})$,
    \begin{align*}
        f(a+\epsilon v^+, b) > f(a, b), \\
        f(a+\epsilon v^-, b) < f(a, b).
    \end{align*}
\end{lemma}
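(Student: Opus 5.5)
The plan is to freeze the real coordinates and reduce to a statement about an ordinary multilinear polynomial in $a$. For fixed $b$, every indicator $\delta_i(b)$ is a constant in $\{\pm1\}$, so collecting terms by $S$ gives
\begin{equation*}
g(a) := f(a,b) = \sum_{S\subseteq[n]} c_S \prod_{i\in S} a_i, \qquad c_S = \sum_{T\subseteq[k]} \hat{f}(S,T)\prod_{i\in T}\delta_i(b).
\end{equation*}
The directions $v^\pm$ in the statement only perturb $a$, so it is enough to find $v^\pm$ with $g(a+\epsilon v^+)>g(a)>g(a+\epsilon v^-)$ for all small $\epsilon>0$.

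\textbf{Caveat on the hypothesis.} This needs $g$ to be nonconstant in $a$, i.e.\ some $c_S\neq 0$ with $S\neq\emptyset$. The stated hypothesis (some $\hat f(S,T)\neq 0$) does not ensure this. For example, if only coefficients $\hat f(\emptyset,T)$ are nonzero, or if the $\delta$-weighted sums cancel at this $b$, then $f(\cdot,b)$ is constant and no such $v^\pm$ exist. So I would state and use the nondegeneracy assumption explicitly: $f(\cdot,b)$ is not constant in $a$. This is what Thm.~\ref{thm:optimality} needs anyway. Under that assumption, constant terms contribute nothing to differences and can be ignored.

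\textbf{Re-expansion at $a$.} Since $g$ is multilinear, its Taylor expansion at $a$ is exact and multilinear in $h$:
\begin{equation*}
g(a+h) = \sum_{S\subseteq[n]} c'_S \prod_{i\in S} h_i, \qquad c'_S = \partial_S g(a).
\end{equation*}
This follows by expanding each $\prod_{i\in S}(a_i+h_i)$. The map $(c_S)_S \mapsto (c'_S)_S$ is invertible, being triangular with unit diagonal with respect to inclusion. Hence $g$ nonconstant implies some $c'_S\neq 0$ with $S\neq\emptyset$.

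\textbf{Choice of direction.} Pick such an $S^*$ of minimal cardinality. Set $v_i=s_i\in\{\pm1\}$ for $i\in S^*$ and $v_i=0$ otherwise. Then only subsets of $S^*$ survive in the expansion. By minimality, every nonempty proper subset has zero coefficient, so
\begin{equation*}
g(a+\epsilon v) - g(a) = c'_{S^*}\,\epsilon^{|S^*|}\prod_{i\in S^*} s_i .
\end{equation*}
Because $|S^*|\ge 1$, flipping one sign $s_i$ flips the sign of this difference. Choosing the signs to make it positive gives $v^+$, and the opposite choice gives $v^-$. The difference is nonzero for every $\epsilon>0$, so $\mathcal{E}$ can be taken arbitrary; it matters only if one also wants $a+\epsilon v\in(-1,1)^n$, in which case take $\mathcal{E}<\min_i(1-|a_i|)$. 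When $\nabla g(a)\neq0$ this is the coordinate-direction case $|S^*|=1$.

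The main obstacle is the degenerate case above, not the construction itself. The proof has to make explicit that the lemma holds only when $f(\cdot,b)$ depends on $a$. In the application to Thm.~\ref{thm:optimality}, the remaining case of locally constant $a$-dependence has to be handled separately, for instance by noting that the projected gradient is then zero in $a$ and the point is not a strict interior optimum.
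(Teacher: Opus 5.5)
Your proof is correct. Its first step is exactly the paper's: freeze $b$ so that each $\delta_i(b)$ is a constant in $\{\pm1\}$, and collect terms into a multilinear $g(a)=\sum_{S}c_S\prod_{i\in S}a_i$.

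From there the paper simply invokes Lemma~3 of~\cite{kyrillidis2021solving} for non-constant multilinear functions. You instead prove that step yourself, in three moves:
\begin{itemize}
\item the exact re-expansion $g(a+h)=\sum_{U}\partial_U g(a)\prod_{i\in U}h_i$;
\item the unit-triangular change of coefficients, which shows that some $\partial_U g(a)\neq 0$ with $U\neq\emptyset$;
\item a $\pm1$ direction supported on a minimal such $U$, so that every other term vanishes identically.
\end{itemize}
This makes the lemma self-contained and slightly sharper. The increment is exactly $c'_{S^*}\epsilon^{|S^*|}\prod_{i\in S^*}s_i$, so $\mathcal{E}$ can be arbitrary, and the argument also covers the case $\nabla g(a)=0$ explicitly.

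Your caveat is justified, and it exposes a real flaw in the paper's argument. The paper asserts that $\hat f(S,T)\neq 0$ forces the collected coefficient $\hat g_b(S)\neq 0$ ``since $\delta_i(b)\in\{\pm1\}$''. This fails in two ways: when the only nonzero coefficients have $S=\emptyset$, and when the signed sum over $T$ cancels at the given $b$. In either case $f(\cdot,b)$ is constant and no $v^\pm$ exist. The paper's final sentence then tacitly restricts to the case where ``$g(a)$ is a non-constant multilinear function of $a$''. That is precisely the hypothesis you state explicitly, so your version is the correct form of the lemma as it is actually used for Theorem~\ref{thm:optimality}.
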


\begin{proof}[Proof of Lemma~\ref{lmm:optimality}]
Lemma 3 in~\cite{kyrillidis2021solving} has shown that any multilinear function $h$ and a point $a\in\mathbb{R}^n$, there exist a region size $\epsilon>0$ and two direction $v^+, v^-\in\mathbb{R}^n$, such that for all $\epsilon\in(0, \epsilon^+)$, the following holds
\begin{align*}
    h(a+\epsilon v^+) > h(a) \\
    h(a+\epsilon v^-) < h(a)
\end{align*}

Given the multilinear-like function $f$ and any point $(a,b)\in\mathbb{R}^n\times\mathbb{R}^m$, the function can be written as 
\begin{align*}
    g(a) &= f(a, b) \\
    &= \sum_{\substack{S\subseteq[n] \\ T\subseteq[k]}}\left(\hat{f}(S,T)\cdot\prod_{i\in{}T}\delta_i(b)\right)\cdot\prod_{i\in{}S}a_i \\
    &= \sum_{S\subseteq[n]}\hat{g}_b(S)\cdot\prod_{i\in{}S}a_i
\end{align*}
where $\hat{g}_b(S)$ is the coefficient. 
\begin{equation*}
    \hat{g}_b(S)=\sum_{S\subseteq[T]}\hat{f}(S,T)\cdot\prod_{i\in{}T}\delta_i(b)
\end{equation*}
If the coefficient $\hat{f}(S,T)$ is not 0, then $\hat{g}(S)$ will not be 0 since $\delta_i(b)\in\{\pm 1\}$. 
When $g(a)$ is a non-constant multilinear function of $a$, and we can have
\begin{align*}
    f(a+\epsilon v^+, b) > f(a, b), \\
    f(a+\epsilon v^-, b) < f(a, b).
\end{align*}
\end{proof}

Lemma~\ref{lmm:optimality} implies that Eq.~\eqref{eq:multilinear} admits no local optima within the open domain $(-1,1)^n \times \mathbb{R}^m$.
This directly results in Theorem~\ref{thm:optimality}.

\newpage

\newpage
\section{Technical Proofs}\label{note:proof}
Some proofs are provided in the supplementary notes where the corresponding results first appear.
This section presents the proofs of the theoretical results stated in the main text.

\subsection*{Proof of Theorem~\ref{thm:reduction}}
The \ac{CLS} approach relies on Thm.~\ref{thm:reduction} in the main text.
This subsection proves Thm.~\ref{thm:reduction}.

Each atomic constraint partitions the real space into finitely many polyhedral regions $a_1, \ldots, a_N$, the objective $\mathcal{F}_w$ admits the case representation
\begin{equation*}
    f_c(x, y)=
    \begin{cases}
        f_c(x|y\in a_1)& \text{if } y\in a_1, \\
        & \vdots \\
        f_c& \text{if } y\in a_N. \\
\end{cases}
\end{equation*}
Let $f_c^{(i)}(x) = f_c(x|y\in a_i)$, which is a multilinear polynomial in $x$. 

The Lem. 1 in~\cite{kyrillidis2021solving} states that, if $x\in[-1,1]$, for $i\in\{1, ..., N\}$, we have $f_c^{(i)}(x)\in[-1,1]$. 
Since the above statement holds in each case, it immediately yields the following Lemma.
\begin{lemma}
    If $(x,y)\in[-1,1]^n\times\mathbb{R}^m$, we have $f_c(x, y)\in[-1,1]$. 
\end{lemma}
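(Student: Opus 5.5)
The plan is to fix $y\in\mathbb{R}^m$ and reduce to the purely Boolean case. With $y$ fixed, every atom indicator $\delta_i(y)$ is a constant in $\{\pm1\}$. Substituting these constants into the xWFE of Cor.~\ref{cor:wfe} gives
\begin{equation*}
    f_c(x,y)=\sum_{S\subseteq[n]}\hat{g}_y(S)\prod_{i\in S}x_i,\qquad \hat{g}_y(S)=\sum_{T\subseteq[k]}\hat{f}_c(S,T)\prod_{i\in T}\delta_i(y).
\end{equation*}
This is a multilinear polynomial in $x$. It coincides with the case polynomial $f_c^{(i)}$ for the polyhedral region containing $y$. On $\{\pm1\}^n$ it equals the Boolean function $x\mapsto \tilde f_c(x,\delta(y))$, which takes values in $\{\pm1\}$.

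It therefore suffices to show the following: a multilinear polynomial $h$ whose values on $\{\pm1\}^n$ lie in $[-1,1]$ also takes values in $[-1,1]$ on the whole cube $[-1,1]^n$. This is exactly Lem.~1 of~\cite{kyrillidis2021solving} quoted above. I would also give the short self-contained argument, since it follows from Lemma~\ref{lmm:expectation}.

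For $a\in[-1,1]^n$, the randomized rounding distribution $\mathcal{S}_a$ of Eq.~\eqref{eq:rounding} is a genuine probability distribution on $\{\pm1\}^n$. Its coordinates are independent, and each has mean $a_i$. Hence $\mathbb{E}_{x\sim\mathcal{S}_a}[\prod_{i\in S}x_i]=\prod_{i\in S}a_i$, and by linearity $h(a)=\mathbb{E}_{x\sim\mathcal{S}_a}[h(x)]$. This is Lemma~\ref{lmm:expectation} applied with $y$ fixed. A convex combination of numbers in $[-1,1]$ lies in $[-1,1]$, so $f_c(a,y)\in[-1,1]$.

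The only step needing care is checking that the weights $\mathcal{S}_a(x)=\prod_i\frac{1+a_ix_i}{2}$ are nonnegative and sum to one. Nonnegativity holds precisely because $|a_i|\le1$. This is where the hypothesis $x\in[-1,1]^n$ enters, and outside the cube the bound can indeed fail.

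Since $y$ was arbitrary, the bound holds on all of $[-1,1]^n\times\mathbb{R}^m$. This also shows the bound is uniform across the $N$ polyhedral cases, as the statement requires.
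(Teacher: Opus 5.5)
Your proposal is correct and follows essentially the same route as the paper. Fixing $y$ (equivalently, restricting to one polyhedral region where $\delta(y)$ is constant) turns $f_c(\cdot,y)$ into a multilinear polynomial in $x$ that is $\pm1$-valued on the vertices, and the bound on $[-1,1]^n$ then follows from Lem.~1 of \cite{kyrillidis2021solving}. The only difference is that you also prove that cited lemma directly, by writing $f_c(a,y)=\mathbb{E}_{x\sim\mathcal{S}_a}[f_c(x,y)]$ as a convex combination of values in $\{\pm1\}$; this makes the argument self-contained and shows exactly where $|a_i|\le 1$ is needed.
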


Now we are ready to prove Thm.~\ref{thm:reduction}.

\begin{proof}[Proof of Theorem~\ref{thm:reduction}]
\text{}

\begin{itemize}
    \item ``$\Rightarrow$": Suppose $F=\bigwedge_{c\in C}c$ is satisfiable and $(x,y)\in\{\pm1\}^n\times\mathbb{R}^m$ is one of its solutions. 
    Then $(x,y)$ is also a solution for every constraint $c$. 
    Thus, for every $c\in{}C$, $f_c(x, y)=-1$. 
    Therefore $\mathcal{F}_w(x,y)=-\sum_{c\in C}w_c$ and
    \begin{equation*}
        \min_{\substack{x\in[-1,1]^n \\ y\in\mathbb{R}^m}}\mathcal{F}_w(x,y) = -\sum_{c\in{}C}w_c.
    \end{equation*}
    \item ``$\Leftarrow$": Suppose 
    \begin{equation*}
        \min_{\substack{x\in[-1,1]^n \\ y\in\mathbb{R}^m}}\mathcal{F}_w(x,y) = -\sum_{c\in{}C}w_c.
    \end{equation*}
    There must exists $(x,y)\in\{\pm1\}^n\times\mathbb{R}^m$ such that $\mathcal{F}_w(x,y) = -\sum_{c\in{}C}w_c$. 
    By the above Lemma, $(x,y)$ is a solution of $f_c$ for every $c\in{}C$. 
    Thus $(x,y)$ is also a solution of $F_w$. 
\end{itemize}
\end{proof}

\subsection*{Proof of Proposition~\ref{prop:qp}}
Our \ac{CLS} approach relies on projected gradient descent, \emph{i.e.}, the complexity of the projection heavily affects the performance of CLS.
This section proves that the projection in our \ac{CLS} approach is a convex problem. 

\begin{proof}[Proof of Proposition~\ref{prop:qp}]
Firstly, the minimization objective function in the matrix form is
\begin{equation*}
\begin{bmatrix} a \\ b \end{bmatrix}^{\!\top}
    \underbrace{\begin{bmatrix} I_n & 0 \\ 0 & I_m \end{bmatrix}}_\mathrm{Hessian}
        \begin{bmatrix} a \\ b \end{bmatrix}
        -2 \begin{bmatrix} (a')^\top & (b')^\top \end{bmatrix}
        \begin{bmatrix} a \\ b \end{bmatrix},
\end{equation*}
up to an irrelevant constant.
The Hessian is $\mathrm{diag}(I_n, I_m)$, which is an identity matrix, \emph{e.g.}, positive definite and convex. 
Therefore, the minimization objective function is a convex function.

Secondly, the box constraints $-1 \leq a_i \leq 1, \forall i$ define a closed halfspace. 
The unit theory atoms $\sum_{k=1}^m q_{j,k}b_{k} - q_{j,0} \bowtie 0$ are linear inequalities or equalities. 
The intersection of two convex sets is, therefore, convex.
\end{proof}

\newpage
\section{Per-Instance Benchmark Results}\label{note:detail}
The main text summarizes the overall performance across the three benchmark suites.
Here, we present the complete per-instance runtime results for all solvers.
\textsc{FourierSMT} is executed using 8 GPUs in parallel, and the median runtime per instance is reported in the main text.
To complement this, we first report the per-GPU runtime statistics in Table~\ref{tab:random-gpu}, Table~\ref{tab:scheduling-gpu}, and Table~\ref{tab:placement-gpu}.
We then provide the full per-instance runtimes of all solvers in Table~\ref{tab:random}, Table~\ref{tab:scheduling}, and Table~\ref{tab:placement}.
Eventually, we summarize the results of Table~\ref{tab:scheduling}, and Table~\ref{tab:placement} into Fig.~\ref{fig:by-instance}.



\clearpage

\begin{figure}[h]
    \centering
    \includegraphics[width=\linewidth]{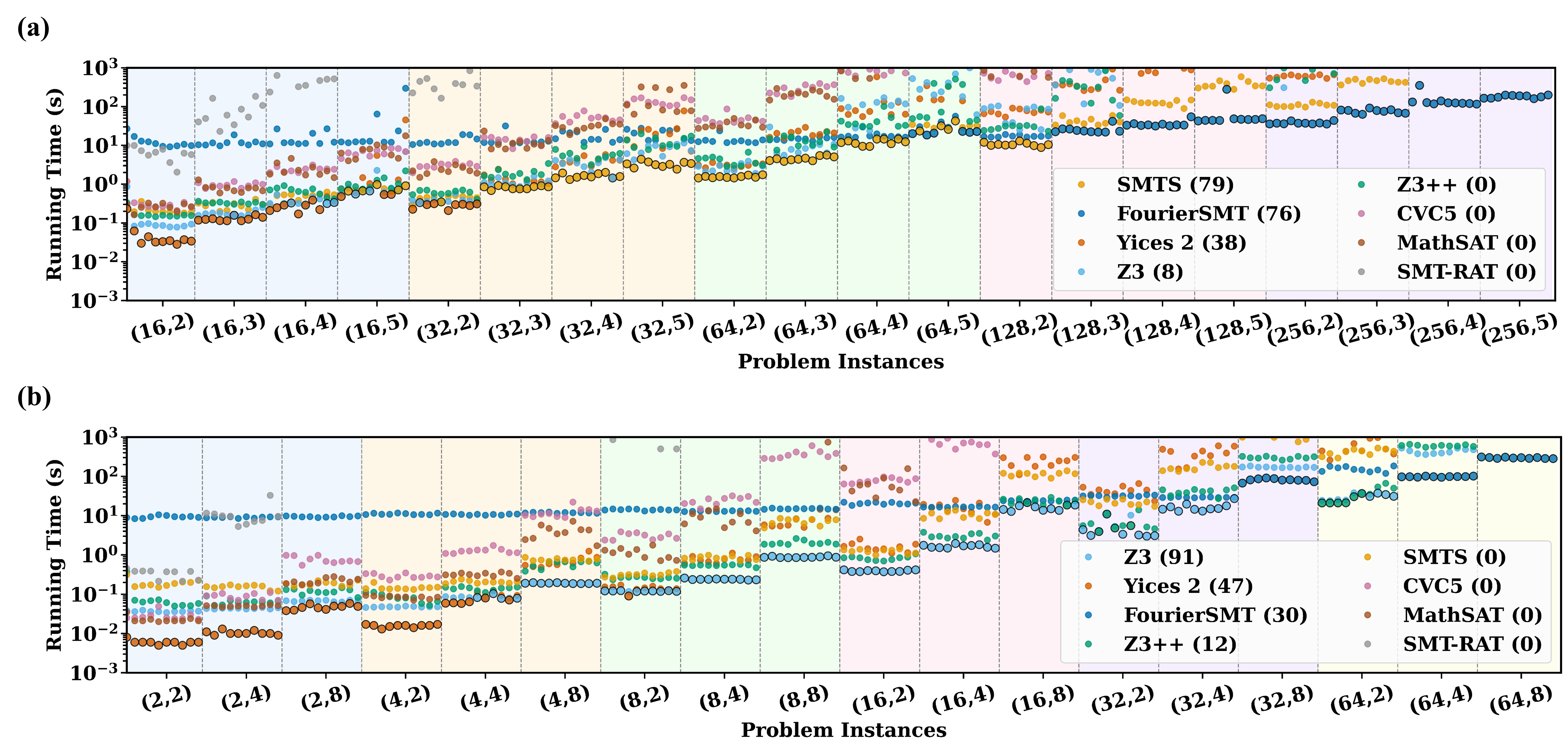}
    \caption{\textbf{Detailed solver performance on benchmark categories.}
        \textbf{(a)} Results on the scheduling problems. 
        {\textsc{Z3++}, \textsc{CVC5}, \textsc{MathSAT5}, \textsc{SMT-RAT} wins non of the instances in this category.}
        \textsc{Yices2} performs best on the small-scale instances ($n_w = 16$), while \textsc{SMTS} attains the best results on the medium-scale cases ($n_w \in \{32, 64\}$). 
        As the instance size increases further, \textsc{FourierSMT} can handle cases that cause timeouts among competing solvers.
        \textbf{(b)} Results on the placement problems. 
        {\textsc{SMTS}, \textsc{CVC5}, \textsc{MathSAT5}, \textsc{SMT-RAT} wins non of the instances in this category.}
        \textsc{Yices2} still performs best on the small-scale instances ($n_m = 2$), while \textsc{Z3} and \textsc{Z3++} attains the best results on the medium-scale cases ($n_w \in \{8, 16, 32\}$). 
        {The landscape of solvers exhibits pronounced variability: different solvers excel in different problem scales, and no classical solver maintains uniformly strong performance.
        \textsc{FourierSMT} benefits from GPU acceleration, resulting in relatively similar solving times across instance sizes because the fixed parallelization overhead of the GPU amortizes most of the computational cost.}
        For the largest category, \textsc{FourierSMT} remains scalable and continues to solve instances that all other solvers time out on.}
    \label{fig:by-instance}
\end{figure}

\newpage
\section{Gradient Computation Time}\label{note:gradient}
To quantify the acceleration attributable to GPU execution, we measure the gradient computation time of \textsc{FourierSMT} when executed on either CPU or GPU.
Our high-performance cluster node has dual AMD EPYC 9654 (with at most 384 threads), but configure the following flags: \texttt{XLA\_FLAGS=--xla\_cpu\_multi\_thread\_eigen=false intra\_op\_parallelism\_threads=64}, \texttt{OMP\_NUM\_THREADS=64}, and \texttt{MKL\_NUM\_THREADS=64} to enforce that \textsc{FourierSMT} uses at most 64 CPU threads.
Similarly, the node contains 8 NVIDIA L40S GPUs, but we constrain \textsc{FourierSMT} to a single GPU by setting \texttt{CUDA\_VISIBLE\_DEVICES=0}.

We measure the gradient computation time using the instances in the random hybrid constraint benchmark.
For each instance, we evaluate the gradient at 10,000 distinct initial points.
These points are partitioned into 100 groups, where each group of 100 points uses the same annealing parameter.
We test 100 annealing parameters corresponding to $1/\sigma\in\{0.00, 0.01, 0.99\}$.
The results of this experiment are reported in Fig.~\ref{fig:random}b of the main text.
In this section, we report the per-instance gradient computation time in Table~\ref{tab:gradient-time}.

\begin{longtable}{%
P{0.10\textwidth}%
P{0.12\textwidth}%
P{0.12\textwidth}%
}
\caption{\textbf{Per-instance average gradient runtime of \textsc{FourierSMT} for the random hybrid-constraint benchmark.}
 All runtimes are reported in milliseconds. 
 The label “to” indicates that the solver exceeded the allotted time limit (1000 seconds) and did not return a solution.}
 \label{tab:gradient-time}\\
\hline
Instances & CPU & GPU \\
\hline
100\_0 & 4.841 & 12.890 \\
100\_1 & 4.993 & 12.848 \\
100\_2 & 4.975 & 12.126 \\
100\_3 & 5.300 & 12.370 \\
100\_4 & 4.930 & 12.902 \\
100\_5 & 4.886 & 12.942 \\
100\_6 & 5.251 & 12.888 \\
100\_7 & 4.861 & 12.973 \\
100\_8 & 4.911 & 12.756 \\
100\_9 & 4.884 & 12.323 \\
200\_0 & 23.869 & 13.734 \\
200\_1 & 21.147 & 13.594 \\
200\_2 & 21.687 & 13.708 \\
200\_3 & 22.092 & 13.645 \\
200\_4 & 20.119 & 13.596 \\
200\_5 & 20.233 & 13.531 \\
200\_6 & 22.756 & 13.481 \\
200\_7 & 19.925 & 13.689 \\
200\_8 & 24.230 & 13.694 \\
200\_9 & 21.735 & 13.693 \\
300\_0 & 35.270 & 12.828 \\
300\_1 & 34.592 & 12.738 \\
300\_2 & 33.986 & 12.485 \\
300\_3 & 34.086 & 12.560 \\
300\_4 & 34.517 & 12.774 \\
300\_5 & 32.877 & 12.885 \\
300\_6 & 34.472 & 12.621 \\
300\_7 & 33.741 & 12.883 \\
300\_8 & 33.282 & 12.712 \\
300\_9 & 34.676 & 13.236 \\
400\_0 & 41.611 & 12.906 \\
400\_1 & 41.828 & 12.971 \\
400\_2 & 42.041 & 12.790 \\
400\_3 & 43.206 & 12.829 \\
400\_4 & 44.231 & 12.649 \\
400\_5 & 42.228 & 12.518 \\
400\_6 & 42.131 & 12.778 \\
400\_7 & 46.592 & 12.903 \\
400\_8 & 42.831 & 12.731 \\
400\_9 & 44.681 & 12.893 \\
500\_0 & 48.972 & 12.831 \\
500\_1 & 50.528 & 12.483 \\
500\_2 & 48.619 & 12.816 \\
500\_3 & 48.682 & 12.483 \\
500\_4 & 47.275 & 12.462 \\
500\_5 & 47.327 & 12.348 \\
500\_6 & 47.238 & 12.817 \\
500\_7 & 47.029 & 12.674 \\
500\_8 & 51.214 & 12.514 \\
500\_9 & 47.077 & 12.605 \\
600\_0 & 59.570 & 12.863 \\
600\_1 & 55.350 & 12.784 \\
600\_2 & 52.882 & 12.745 \\
600\_3 & 53.691 & 12.688 \\
600\_4 & 58.432 & 12.558 \\
600\_5 & 55.391 & 12.882 \\
600\_6 & 54.917 & 12.973 \\
600\_7 & 55.205 & 13.011 \\
600\_8 & 58.445 & 12.659 \\
600\_9 & 58.832 & 12.750 \\
700\_0 & 59.736 & 12.797 \\
700\_1 & 65.506 & 12.836 \\
700\_2 & 66.118 & 13.046 \\
700\_3 & 60.359 & 12.455 \\
700\_4 & 63.349 & 12.727 \\
700\_5 & 60.291 & 12.665 \\
700\_6 & 60.050 & 12.934 \\
700\_7 & 60.203 & 12.671 \\
700\_8 & 68.391 & 12.668 \\
700\_9 & 60.000 & 12.822 \\
800\_0 & 71.691 & 12.832 \\
800\_1 & 77.569 & 12.737 \\
800\_2 & 66.064 & 12.634 \\
800\_3 & 65.728 & 12.997 \\
800\_4 & 65.247 & 12.726 \\
800\_5 & 70.436 & 12.787 \\
800\_6 & 71.175 & 12.987 \\
800\_7 & 67.009 & 12.992 \\
800\_8 & 67.247 & 12.815 \\
800\_9 & 68.656 & 12.895 \\
900\_0 & 77.775 & 12.671 \\
900\_1 & 80.032 & 12.928 \\
900\_2 & 75.771 & 12.982 \\
900\_3 & 82.115 & 13.027 \\
900\_4 & 77.760 & 12.705 \\
900\_5 & 76.144 & 12.758 \\
900\_6 & 76.445 & 12.863 \\
900\_7 & 75.271 & 12.708 \\
900\_8 & 76.232 & 13.017 \\
900\_9 & 76.164 & 13.072 \\
1000\_0 & 90.081 & 12.284 \\
1000\_1 & 83.093 & 12.864 \\
1000\_2 & 84.637 & 12.743 \\
1000\_3 & 83.680 & 12.876 \\
1000\_4 & 82.626 & 12.527 \\
1000\_5 & 82.269 & 12.627 \\
1000\_6 & 82.549 & 12.778 \\
1000\_7 & 86.008 & 12.697 \\
1000\_8 & 82.360 & 12.418 \\
1000\_9 & 86.246 & 12.711 \\
\hline
\end{longtable}


\end{document}